\pgfplotsset{compat=1.18} %
\tikzstyle{main node}=[draw]
\definecolor{wjs}{RGB}{200,0,50}
\newtheorem{theorem}{Theorem}
\newtheorem{assumption}{Assumption}
\newtheorem{lemma}[theorem]{Lemma}
\newtheorem{definition}[theorem]{Definition}
\def\sT{{\mathsf T}}
\title{Recommending Best Paper Awards for ML/AI Conferences via the Isotonic Mechanism}
\author{
  { Garrett G. Wen \textsuperscript{1}}\qquad
  {Buxin Su\textsuperscript{2}}  \qquad
  {Natalie Collina\textsuperscript{2}} \qquad
 {Zhun Deng\textsuperscript{3}}  \qquad
  {Weijie Su\textsuperscript{2}}\\[1ex]
  \textsuperscript{1}Yale University\\
   \textsuperscript{2}University of Pennsylvania\\
   \textsuperscript{3}University of North Carolina at Chapel Hill\\[2ex]
}
\newcommand\blfootnote[1]{%
  \begingroup
  \renewcommand\thefootnote{}%
  \footnote{\hspace{-1.2em}#1}
  \addtocounter{footnote}{-1}%
  \endgroup
}
\begin{document}

\maketitle

\begin{abstract}
Machine learning and artificial intelligence conferences such as NeurIPS and ICML now regularly receive tens of thousands of submissions, posing significant challenges to maintaining the quality and consistency of the peer review process. This challenge is particularly acute for best paper awards, which are an important part of the peer review process, yet whose selection has increasingly become a subject of debate in recent years. In this paper, we introduce an author-assisted mechanism to facilitate the selection of best paper awards. Our method employs the Isotonic Mechanism for eliciting authors' assessments of their own submissions in the form of a ranking, which is subsequently utilized to adjust the raw review scores for optimal estimation of the submissions' ground-truth quality. We demonstrate that authors are incentivized to report truthfully when their utility is a convex additive function of the adjusted scores, and we validate this convexity assumption for best paper awards using publicly accessible review data of ICLR from 2019 to 2023 and NeurIPS from 2021 to 2023. Crucially, in the special case where an author has a single \emph{quota}---that is, may nominate only one paper---we prove that truthfulness holds even when the utility function is merely nondecreasing and additive. This finding represents a substantial relaxation of the assumptions required in prior work. For practical implementation, we extend our mechanism to accommodate the common scenario of overlapping authorship. Finally, simulation results demonstrate that our mechanism significantly improves the quality of papers selected for awards.
\blfootnote{Emails:\;\texttt{gang.wen@yale.edu},\;\texttt{subuxin@sas.upenn.edu},\;\texttt{ncollina@seas.upenn.edu},\;\texttt{zhundeng@cs.unc.edu}, \;\texttt{suw@wharton.upenn.edu}} 
\end{abstract}

\section{Introduction}

Over the last few years, machine learning (ML) and artificial intelligence (AI) conferences such as ICML and NeurIPS have experienced an explosion of submissions. For example, NeurIPS received 3,240 submissions in 2017, which rose to 15,671 in 2024.
For 2025,   the count has increased to 21,575 submissions \citep{chairs2025Reflections2025Review2025}---
 a scale that threatens to overwhelm the current reviewing infrastructure if no structural reforms are enacted. This unprecedented explosion, in conjunction with the limited pool of experienced reviewers, has been gradually degrading the quality of peer review at these ML/AI conferences. Review scores have become increasingly noisy and exhibit a worrisome level of arbitrariness \citep{brezisArbitrarinessPeerReview2020, cortesInconsistencyConferencePeer2021, cheah2022should, beygelzimer2023has, langford2015arbitrariness, yuan2022can}. The NeurIPS 2014 experiment, which revealed substantial inconsistency when the same papers were reviewed by different committees, starkly illustrated the extent of this problem \citep{langford2015arbitrariness}. This instability could be further worsened by the recent OpenReview identity leak \citep{OpenReview.net_2025}, which undermines anonymity and likely adds even more noise to the scores as reviewers are becoming overly cautious.

Beyond general review quality concerns, the selection of best papers has emerged as a particularly acute problem. Best paper awards carry immense prestige and significantly impact researchers' careers and institutional rankings. Unfortunately, the selection process has sparked major controversies in recent years \citep{carlini2022no,Orabona2023Fiasco,Wired2024NeurIPSControversy}.
These controversies threaten the credibility of awards that have long served as markers of excellence in the field.

To address these critical challenges, we introduce an author-assisted mechanism for \textit{best paper selection}. Our method elicits authors' assessments of their own submissions through rankings, which represent a set of pairwise preferences, and are subsequently utilized to recalibrate raw review scores for optimal estimation of submissions' ground-truth quality. We demonstrate that authors are incentivized to report truthfully when their utility is a nondecreasing convex additive function of the adjusted scores. Importantly, when an author may nominate only one paper, truthfulness holds under the substantially weaker assumption that the utility is merely additive and nondecreasing—a significant relaxation that eliminates the restrictive convexity requirement imposed by prior work \citep{Su2021YouAT, suTruthfulOwnerAssistedScoring2022a, yanIsotonicMechanismExponential2023a, wuIsotonicMechanismOverlapping2023}.

Our new approach builds upon and extends the applicability of the Isotonic Mechanism framework \citep{Su2021YouAT, suTruthfulOwnerAssistedScoring2022a}. Among numerous efforts to address review quality challenges \citep{kobren2019paper, wang2019your, jecmen2020mitigating, leyton2022matching, stelmakh2023large,neurips2024assignment,neurips2024llm}, the Isotonic Mechanism offers a resource-free approach that incorporates authors’ evaluations of their own papers without imposing additional burden on reviewers.
 In its standard formulation, when an author submits $n \ge 2$ papers, she provides a ranking $\pi$ of these submissions according to her perception of the relative quality of the submissions before receiving review scores $\boldsymbol{y} = (y_1,y_2,\ldots,y_n)$. The mechanism then obtains adjusted scores, denoted as $\boldsymbol{\hat{R}}=(\hat{R}_1,\hat{R}_2,\ldots,\hat{R}_n)$, by solving:
\begin{equation}\label{equ:iso}
\begin{array}{cl}
    \min _{\boldsymbol{r}} & \|\boldsymbol{y}-\boldsymbol{r}\|^{2} \\
    \text{s.t.} & r_{\pi(1)}\ge r_{\pi(2)} \ge \cdots \ge r_{\pi(n)},
\end{array}
\end{equation}
where $\|\cdot\|$ denotes the $\ell_2$ norm. \cite{su2025icml} demonstrates that adjusted scores more effectively reflect the true quality of papers, while \cite{su2025find} shows that authors’ rankings correlate more strongly with citation counts and GitHub stars than review scores do.

The truthfulness and accuracy of the Isotonic Mechanism rely on two key assumptions regarding the noise model and authors' utility functions \citep{suTruthfulOwnerAssistedScoring2022a}. \cite{suTruthfulOwnerAssistedScoring2022a} first assumed that the review scores $\boldsymbol{y}$ are unbiased with respect to the true scores $\boldsymbol{R}$, where $\boldsymbol{R}$ can be practically regarded as the average scores assigned by a very large number (e.g., 1,000) of reviewers.

\begin{assumption}\label{Assumption1}
    The noisy score vector $\boldsymbol{y}$ is obtained by adding exchangeable noise to the true scores $\boldsymbol{R}$. That is,  
    \begin{equation}
        \boldsymbol{y} = \boldsymbol{R} + \boldsymbol{\epsilon},
    \end{equation}  
    where $\boldsymbol{\epsilon} = (\epsilon_1, \ldots, \epsilon_n)$ follows the same probability distribution as $(\epsilon_{\pi(1)}, \ldots, \epsilon_{\pi(n)})$ for any permutation $\pi$.
\end{assumption}  

\cite{suTruthfulOwnerAssistedScoring2022a} further assumes the authors' overall utility is an additive sum of some nondecreasing convex function.
\begin{assumption}\label{Assumption2}
    The ultimate goal of a rational author is to maximize her expected overall utility, where the overall utility is the additive sum of some nondecreasing convex function of the review scores:
    \begin{equation}
        U(\boldsymbol{\hat{R}}) = U({\hat{R}_1})+U({\hat{R}_2})+\cdots+U({\hat{R}_n}).
    \end{equation}
\end{assumption}
Recent work \citep{yanIsotonicMechanismExponential2023a} has relaxed \textbf{Assumption \ref{Assumption1}}, allowing for a broader class of noise models, such as those drawn from an exponential family distribution. Even in these settings, the Isotonic Mechanism retains its key advantages. In contrast, the chief obstacle to practical adoption now lies in \textbf{Assumption~\ref{Assumption2}}, whose convexity clause is difficult to justify empirically because real authors' utilities are heterogeneous and only partially observable. Accordingly, this paper concentrates on \emph{verifying} and \emph{relaxing} the convexity requirement. Exchangeability and other noise-model questions are undeniably important, but a full treatment of those issues is beyond our present scope.
We propose two key contributions that address the convexity challenge and enhance the practical applicability of author-assisted mechanisms for best paper selection:

 \paragraph{Relaxing convexity:} We explore conditions under which the convexity requirement can be dropped. Notably, when restricting attention to an author's  best paper, utility functions need only be nondecreasing rather than convex. This insight leads us to propose a best paper recommendation procedure that, based on our experiments, improves the accuracy of high-quality paper selection.   Crucially, this theoretical relaxation  turns our methodology into a robust framework for general best item selection problems, as we detail in Section~\ref{sec:contributions}.

 \paragraph{Validating convexity:} We investigate whether the convexity assumption can be preserved and propose empirical methods to verify it. By focusing on best papers, we obtain a natural and plausible interpretation of the utility function that ensures the truthfulness of the Isotonic Mechanism. Furthermore, this interpretation allows for empirical verification of convexity.  
\\ \vskip -0.1in
From a practical standpoint, the Isotonic Mechanism has been piloted at ICML from 2023 to 2025 \citep{OpenRank}, and its underlying principles have shown strong empirical backing. For instance, analysis of the ICML 2023 experiment provides positive evidence for improving review quality \citep{su2025icml}. Furthermore, authors' internal rankings, which are a key input to the mechanism, have been shown to correlate more strongly with long-term impact metrics like citation counts and GitHub stars than review scores do \citep{su2025find}.  It is recently formally adopted by ICML 2026 as a quality control signal to flag submissions with significant discrepancies between review scores and author rankings \citep{kamathIntroducingICML20262026}. Our extension to best paper selection is the natural next step: it both relaxes the convexity assumption that has hindered practical adoption, while retaining the original method’s resource-free nature.

\subsection{Best Paper Recommending Procedure}
\label{sec:bestpaper}
Let us now briefly describe how the Isotonic Mechanism contributes to best paper recommendation. Theoretical justifications are provided in Section \ref{sec:main}, while the procedure and its empirical performance are discussed in Section \ref{prod}. Importantly, the Isotonic Mechanism solely adjusts peer review scores based on authors’ rankings and serves as a reference point. The final decision on best paper selection remains with the conference organizers, whose choices may also influence authors’ utility functions.

Before proceeding, a key consideration is determining the \emph{quota}---the number of papers each author may submit for best paper review. In practice, a conference typically involves $M$ authors, and the \emph{quota} can vary from one author to another. Nevertheless, as long as the utility functions satisfy the conditions in Theorem~\ref{thm:betteru}, our approach remains valid for each individual author. This naturally leads to the question: How should we select the \emph{quota}? Below, we consider two possible approaches:

\paragraph{Uniform quota:} Every author is allowed to nominate the same number of papers (equal to the \emph{quota}), or all their papers if they have fewer than the \emph{quota}. This method promotes fairness and diversity but may put prolific authors, who have multiple high-quality papers, at a disadvantage.

\paragraph{Variable quota:} Authors with more submissions are permitted to nominate a greater number of papers, thus reflecting their broader research output. While this approach recognizes higher overall productivity, it can raise concerns about fairness and introduces additional complexity.
 \\ \vskip -0.1in
In this paper, we primarily analyze the \textbf{uniform quota} setting. Experiments in Appendix~\ref{app:fullexp} suggest that increasing the \emph{quota} has very little or even negative impact; in most conferences, best paper selections rarely feature multiple papers from the same author. Consequently, a \textbf{uniform quota} of 1 (or, more conservatively, 2 or 3) typically suffices. Nevertheless, the ultimate decision regarding the \emph{quota} (whether uniform or variable) rests with the conference organizers, who must balance the distinct goals of fairness, inclusiveness, and diversity.
 
We now apply the Isotonic Mechanism to adjust scores for each author’s papers. Since best paper selection is independent of paper acceptance, we focus only on accepted papers. Each author’s top $k$ papers (or all, if the total is fewer than $k$) are considered for best paper review, where $k$ denotes the \emph{quota}. We examine two scenarios:

\paragraph{Blind Case:} The decision-maker observes both the original review scores and the adjusted scores, but has no access to the authors’ internal rankings. Because the review process is double-blind, there is no way to infer how each author personally ranks their own papers.
\paragraph{Informed Case:} The decision-maker has full knowledge of each author’s $k$ selections (i.e., which papers they deem their best), including their relative rankings (which may include ties), and incorporates this information into the best paper selection process. 
\\\vskip -0.1in
One might wonder why we focus on these two cases. Consider a teacher choosing the best student essays: she might evaluate all submissions without knowledge of the students’ personal preferences (\textbf{Blind Case}) or allow students to indicate which essays they believe are their strongest (\textbf{Informed Case}). While the first scenario removes bias that could stem from self-assessment, it may also overlook the work students value most. By contrast, the second scenario captures students’ preferences but risks introducing bias. A similar trade-off applies to best paper selection: the process is inherently subjective, influenced by factors such as perceived research quality, author reputation, and other external factors. Studying both cases offers a broader perspective on the decision-making process.

\bigskip

\noindent
\textbf{Utility Functions and Truthfulness.} The fundamental distinction between these two cases lies in the author’s utility function.\footnote{Here we consider the perspective of a single rational author. The mechanism's extension to the multi-ownership setting, where co-authors may provide different rankings, is detailed in Section \ref{sim}.} Because an author’s utility depends solely on their top $k$ papers (where $k \le n$), we adopt a \emph{separable} utility function, as in \citep{suTruthfulOwnerAssistedScoring2022a}:
\begin{equation} 
	U(\boldsymbol{\hat{R}}) \;=\; U_1(\hat{R}_{(1)}) + U_2(\hat{R}_{(2)}) \;+\; \cdots \;+\; U_k(\hat{R}_{(k)}).
\end{equation}
Here, $\hat{R}_{(1)} \ge \hat{R}_{(2)} \ge \cdots \ge \hat{R}_{(n)}$ denotes the adjusted scores sorted from highest to lowest. The specific form of the functions $\{U_i\}$ distinguishes our two main scenarios, the \textbf{Blind} and \textbf{Informed} cases, which we detail next.
\paragraph{Blind Case.} Under the \textbf{Blind Case}, each paper is treated symmetrically. Consequently, it is reasonable to assume that $U_1 = U_2 = \cdots = U_k$, giving:
\begin{equation}
	U(\boldsymbol{\hat{R}}) \;=\; U(\hat{R}_{(1)}) \;+\; U(\hat{R}_{(2)}) \;+\; \cdots \;+\; U(\hat{R}_{(k)}).
\end{equation}

\paragraph{Informed Case.} Decision-makers see each author’s ranking for their top $k$ papers, so it is natural to assume that a higher-ranked paper offers greater potential utility to the author. In other words, $U_1, U_2, \dots, U_k$ may differ from one position to the next, thus preserving the index.
\\ \vskip -0.1in
Regardless of which case applies, a rational author seeks to maximize her expected utility, $\mathbb{E}[U(\boldsymbol{\hat{R}})]$. \emph{Truthfulness} means that to achieve this, the author will always report her true paper ranking. Formally, if an author’s true ranking is $R_1 \ge R_2 \ge \cdots \ge R_n$, then for any non-trivial permutation $\pi$ of $1, 2, \ldots, n$ and the corresponding isotonic solution $\boldsymbol{\hat{R}_\pi}$, we have:
\begin{equation}
	\mathbb{E}\bigl[ U(\boldsymbol{\hat{R}}) \bigr] \;\ge\; \mathbb{E}\bigl[ U(\boldsymbol{\hat{R}_\pi}) \bigr].
\end{equation}

In this paper, we establish the truthfulness conditions shown in Table~\ref{tab:prac}, demonstrating that under these utility assumptions, authors are always incentivized to be truthful.

When the \emph{quota} is one (i.e., each author can only nominate a single paper), only a \emph{nondecreasing} utility function is required for truthfulness, which is intuitive and notably weaker than convexity. Under this more permissive condition, all rational authors remain truthful.
In reality, authors often contribute to multiple papers, and each paper may have several authors, each with their own utility function. Building on the insights of \citep{wuIsotonicMechanismOverlapping2023}, Section~\ref{prod} discusses how the Isotonic Mechanism can be extended to more complex scenarios involving overlapping authorship. Full details of a synthetic yet realistic simulation are provided in Section~\ref{app:experiment}, demonstrating that the Isotonic Mechanism can be effectively applied to recommend best papers, achieving a significant improvement in the accuracy of high-quality paper selection.

\subsection{A Natural Interpretation of the Utility Function: Validating Convexity}

It is worth noting that, apart from the special case where the \emph{quota} is 1, we typically require the utility function to be convex to ensure the truthfulness of a rational author. Some authors may indeed perceive their marginal utility as increasing---for instance, if having one standout paper greatly enhances their overall influence---which would make their utility appear convex. Nevertheless, utility is inherently subjective, and unless an author naturally views it in such a manner, assuming convexity remains challenging.

Alternatively, we can interpret the utility function more objectively by viewing it as the sum of conditional probabilities that an author's paper is chosen as ``best,'' given the final adjusted scores:
\begin{equation}\label{equ:natu}
	U(\boldsymbol{\hat{R}}) \;\sim\; \sum_{i=1}^k \mathbb{P}\bigl(\text{Chosen as best}\;\big|\;\hat{R}_{(i)}\bigr),
\end{equation}
where $k$ is the \emph{quota}.

Intuitively, this interpretation of the utility function captures the overall probability of an author's paper being selected as the best paper. The challenge now lies in the fact that in private, the original utility function was (assumed to be) known to the author, whereas this interpretation of the utility function, defined as a sum of the conditional probabilities, is no longer known. But given that the role of the utility function in the Isotonic Mechanism does not depend on its specific form, we can show that leveraging historical conference data from ICLR 2019---2023 \citep{ICLR2019a, syncedICLR2020Virtual2020a, iclrAnnouncingICLR2021a, brockmeyerAnnouncingICLR20222022a, wangAnnouncingICLR20232023} and NeurIPS~2021---2023 \citep{chairsAnnouncingNeurIPS20212021, chairs2023AnnouncingNeurIPS20222022, chairs2023AnnouncingNeurIPS20232023} enables us to assess whether the resulting utility function is convex. In essence, although the original (private) utility remains unknown, empirical evidence can suggest whether the conditional probability is convex in practice. 

\begin{figure*}[ht]
\centering
\begin{subfigure}{0.45\textwidth}
\includegraphics[width=1.2\textwidth]{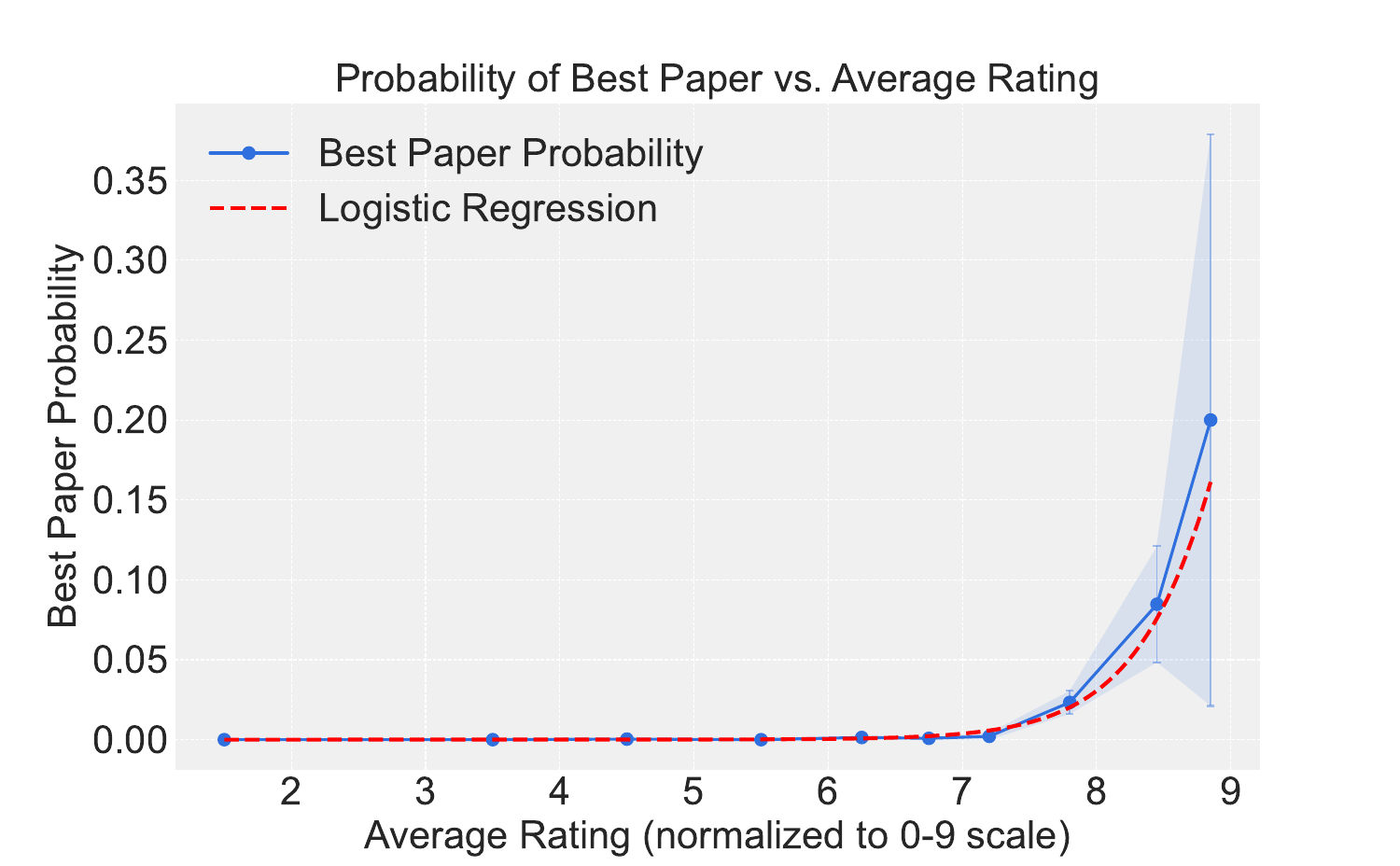}
\caption{ Best Paper Probabilities of ICLR 2019---2023}
\label{fig:bestlog}
\end{subfigure}
\hfill
\begin{subfigure}{0.45\textwidth}
\includegraphics[width=1.2\textwidth]{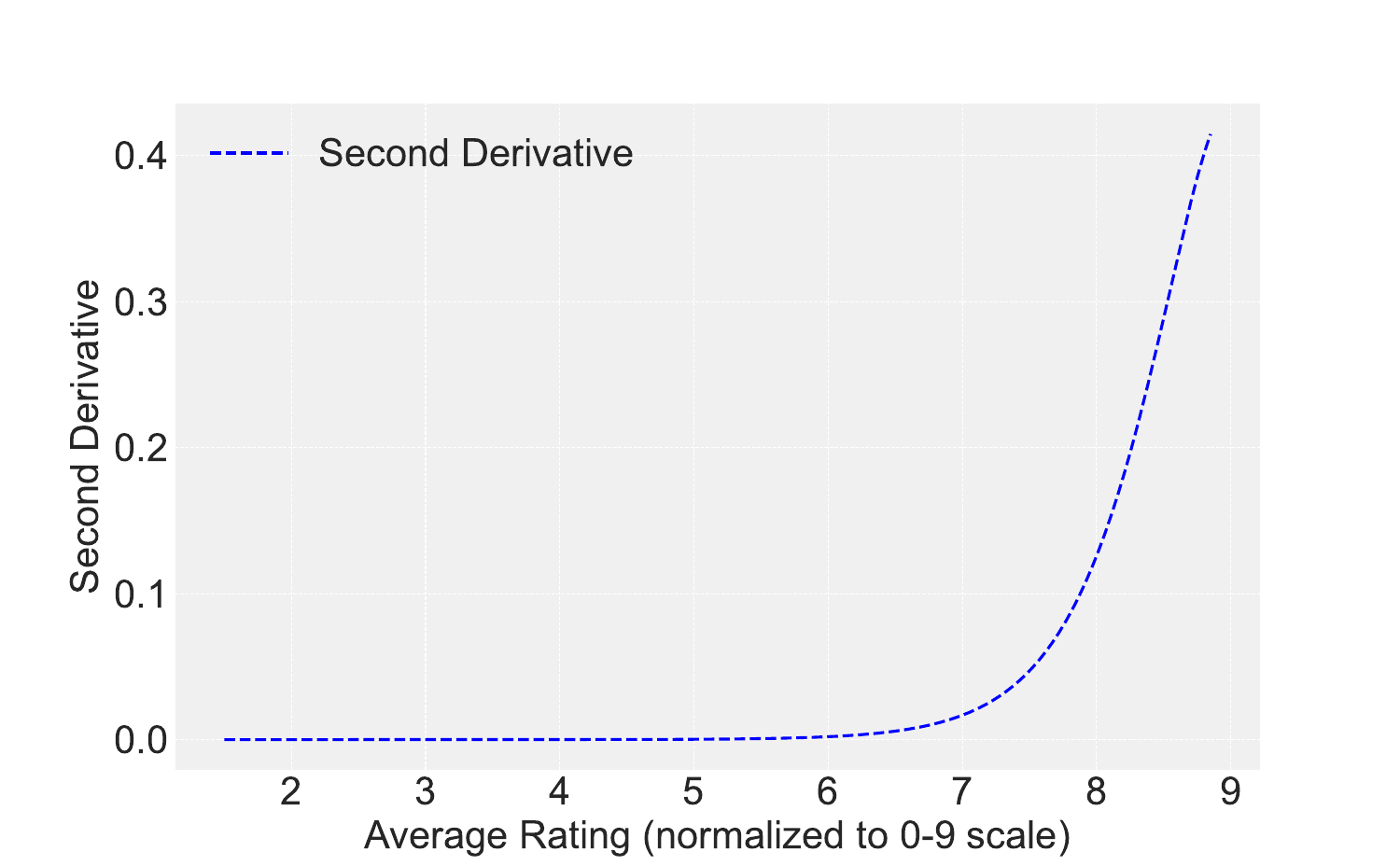}
\caption{Second Derivative of Regression of Best Paper Probabilities of ICLR 2019---2023}
\label{fig:second1}
\end{subfigure}
\hfill
\begin{subfigure}{0.45\textwidth}
\includegraphics[width=1.2\textwidth]{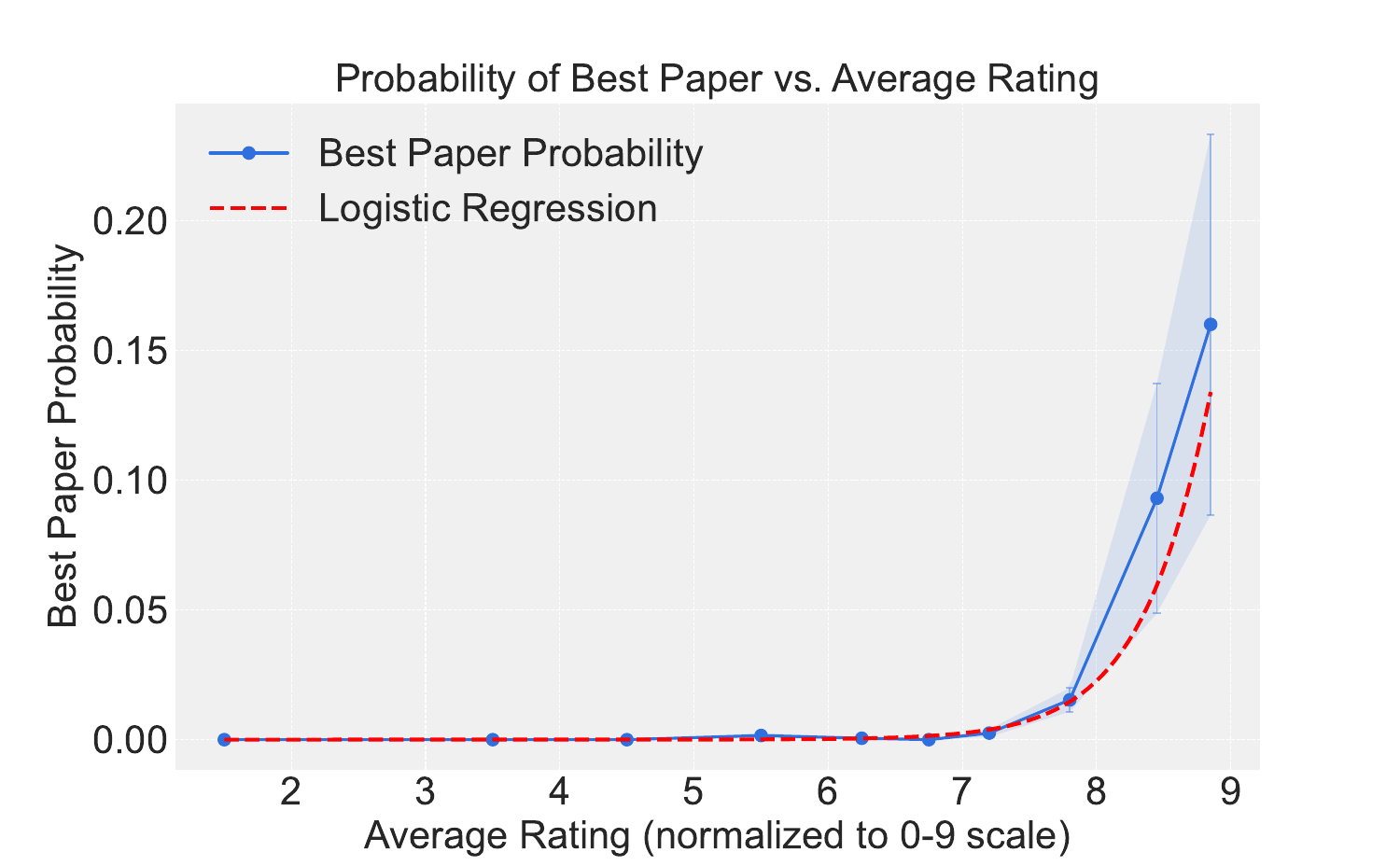}
\caption{ Best Paper Probabilities of NeurIPS 2021---2023}
\label{fig:bestlog4}
\end{subfigure}
\hfill
\begin{subfigure}{0.45 \textwidth}
\includegraphics[width=1.2\textwidth]{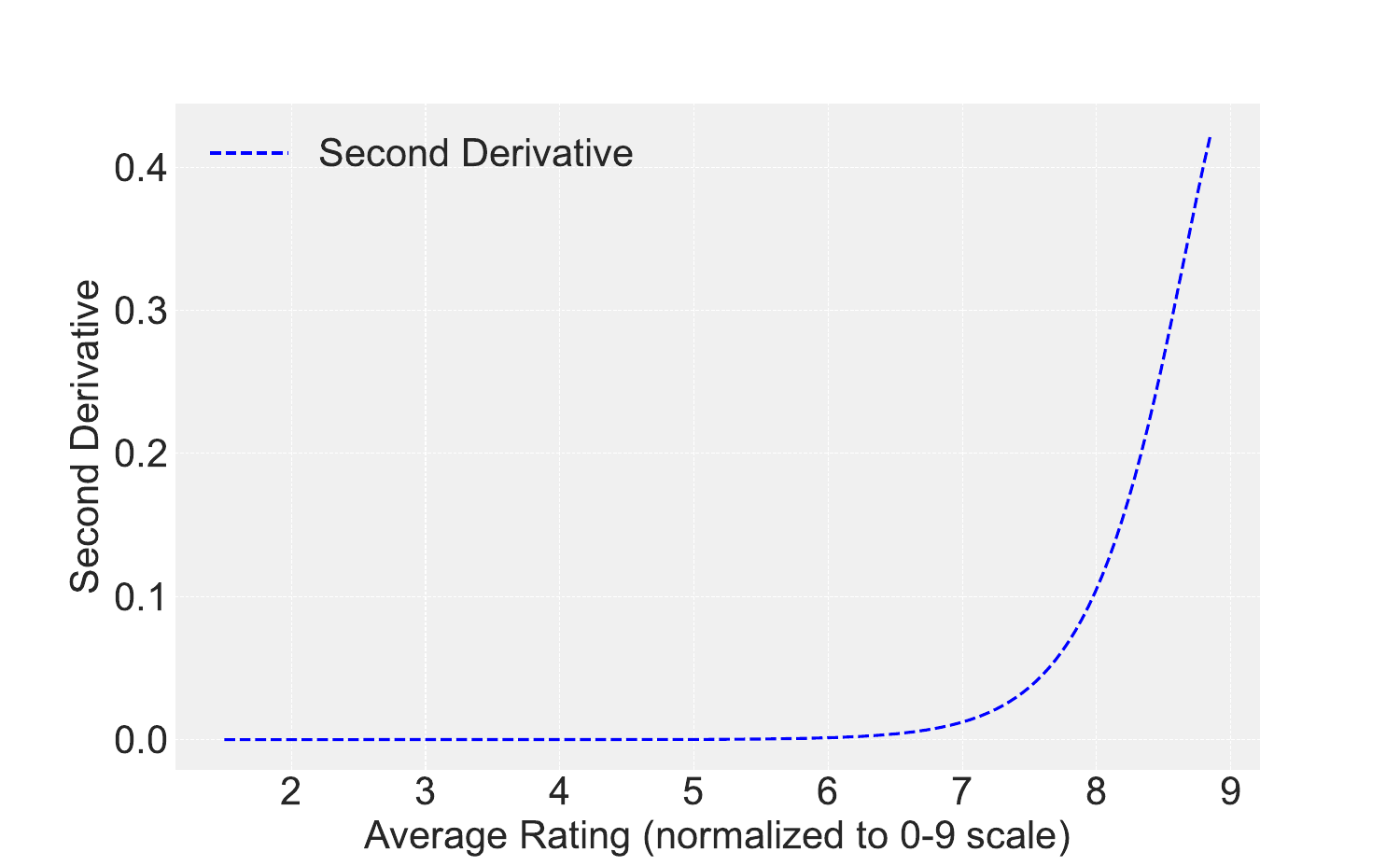}
\caption{Second Derivative of Regression of Best Paper Probabilities of NeurIPS 2021---2023}
\label{fig:second2}
\end{subfigure}
\caption{
Comparison and second derivative of the regression of best paper probabilities across ICLR~2019---2023 and NeurIPS~2021---2023. See Section~\ref{sec:expe} and Section~\ref{exp:NeurIPS} for detailed analysis and additional results.
Error bars show the binomial standard error of the mean (SEM), $\sqrt{p(1-p)/n}$, computed with bucket size $n$ (papers per score bin), where $p$ is the empirical best paper probability within each bin.
}
\label{fig:combined} 
\vspace{-0.1in}
\end{figure*}

Our experimental findings (Sections~\ref{sec:expe} and~\ref{exp:NeurIPS}) provide a potential explanation for the observed convexity in this setting. A key insight is that while a paper's true quality may be unbounded, the reviewer-assigned scores are inherently bounded. This reflects standard practice at major machine learning conferences like NeurIPS, ICML, and ICLR, where reviewer scores are restricted to predetermined ranges (e.g., [1, 10]). Thus, our score range reflects actual operational constraints rather than an artificially imposed limitation.
Under this bounded scoring regime, the ``best paper'' threshold is seldom reached in practice. Thus the high score in conferences does not effectively differentiate the ``best paper'', and thus, does not reach its conditional probability's saturation point. Consequently, it maintains convexity (Figure \ref{fig:second1}).

In contrast, when comparing accepted versus rejected papers, clear acceptance/rejection boundaries cause probability saturation, eliminating convexity (Figures~\ref{fig:probability_plot1} and~\ref{fig:probability_plot_nips2}). See Section~\ref{morecomment} for further discussion.

\begin{figure*}[ht]
  \centering
  \begin{subfigure}{.45\textwidth}
    \centering
    \includegraphics[width=1.2\textwidth]{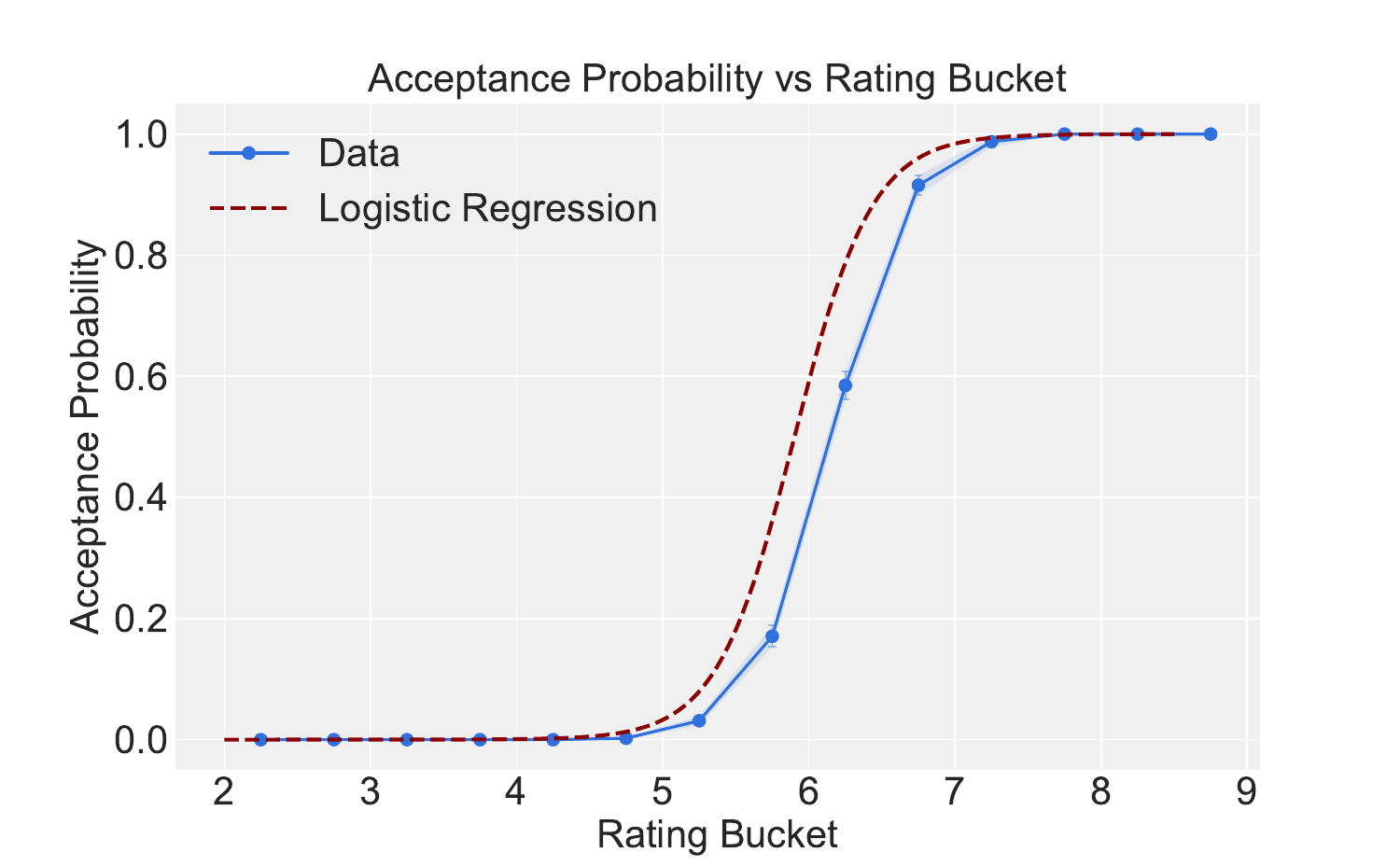}
    \caption{ Acceptance Probabilities at ICLR 2021}
    \label{fig:probability_plot1}
  \end{subfigure}
  \hfill  
  \begin{subfigure}{.45\textwidth}
    \centering
    \includegraphics[width=1.2\textwidth]{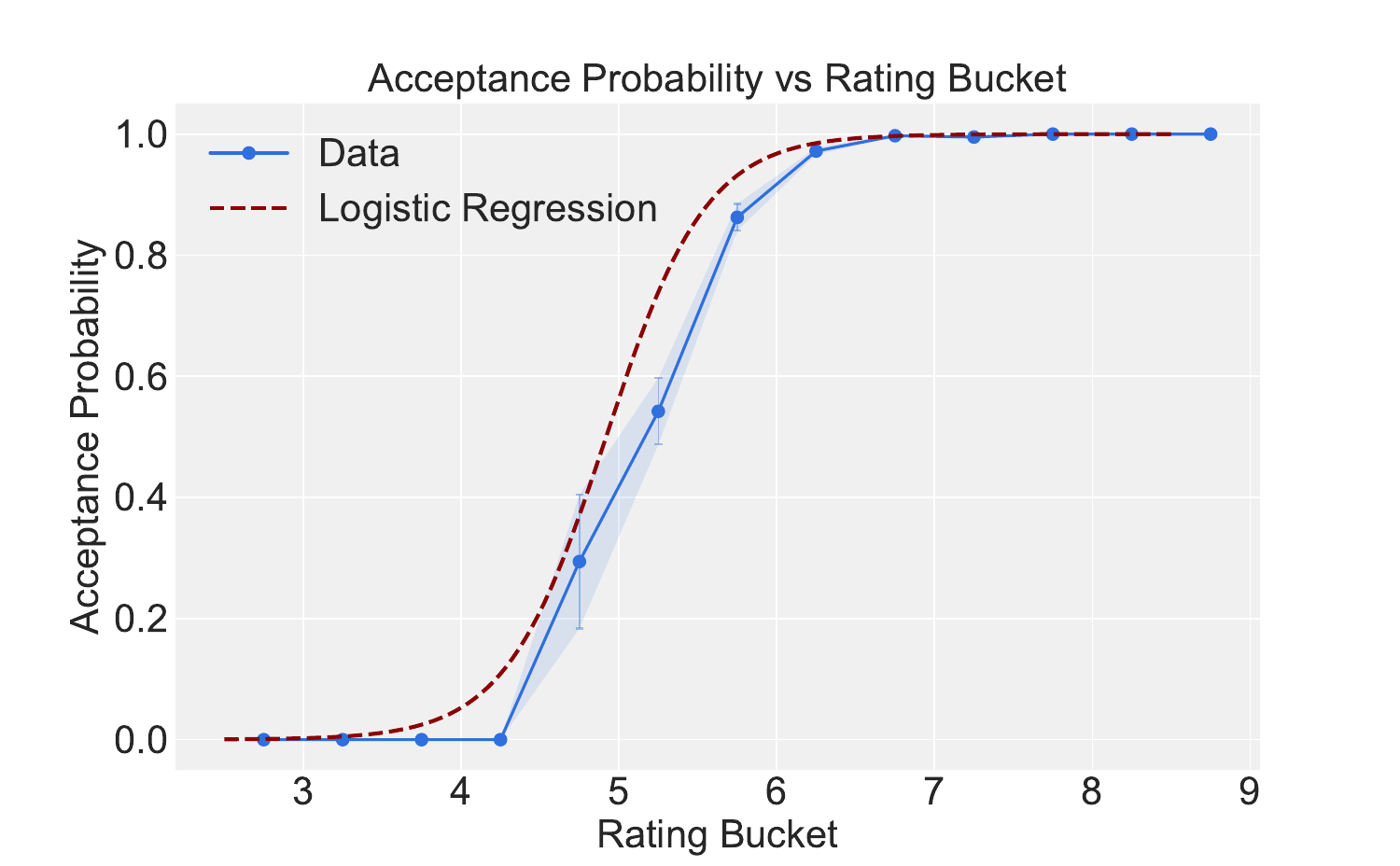}
    \caption{ Acceptance Probabilities at NeurIPS 2021}
    \label{fig:probability_plot_nips2}
  \end{subfigure} 
  \caption{
  Comparison of acceptance probabilities for ICLR~2021 and NeurIPS~2021.}
  \label{fig:combined_acceptance_probabilities}
\end{figure*}

Since previous conferences did not consider the provision of rankings, they actually correspond to the
\textbf{unlimited-quota} case, as demonstrated in Table \ref{tab:prac}. This implies that we have demonstrated the practicality of the Isotonic Mechanism in that case, as long as the author's goal is to maximize the expected probability of having at least one paper selected as the best paper, which is reasonable to be assumed.

\section{Theoretical Results}
\label{sec:main}
In this section, we develop the theoretical framework that underpins the Isotonic Mechanism for recommending best paper awards. First, we introduce the modeling, defining the key components of the peer-review environment and the Isotonic Mechanism. Second, we present the necessary technical background on the theories of majorization and Schur-convexity, which are central to our proofs. Finally, we state and prove our main theorems to show that the Isotonic Mechanism is truthful and individually rational under the relaxed convexity assumption, especially when the \emph{quota} is one.
\subsection{Modeling Framework}
\label{sec:modeling}
We begin by formalizing the components of our model, which captures the key interactions within the author-assisted peer-review process for best paper selection.

\paragraph{Knowledge Partition.}
The mechanism elicits information from authors in the form of a ranking. This information corresponds to a partition of the true score space $\mathbb{R}^n$ into a set of \emph{isotonic cones}, denoted by $\mathcal{S} = \{S_{\pi}\}$. Each cone $S_{\pi}$ corresponds to the author's assertion about the relative ordering of their papers' true quality scores:
\[
    S_{\pi} = \bigl\{\boldsymbol{x} \in \mathbb{R}^n \,\mid\, x_{\pi(1)} \geq x_{\pi(2)} \geq \cdots \geq x_{\pi(n)}\bigr\},
\]
where $\pi$ is a permutation of $\{1,2,\ldots,n\}$.

\paragraph{Authors.}
An author of $n$ submissions is assumed to privately know the vector of their papers' true, ground-truth quality scores, $\boldsymbol{R} = (R_1, R_2, \ldots, R_n)$. The author's objective is to act in a way that maximizes their expected utility, $\mathbb{E}[U(\boldsymbol{\hat{R}})]$, where the utility function $U$ depends on the final, adjusted scores $\boldsymbol{\hat{R}}$ of their papers.

\paragraph{Reviewers.}
The peer-review process introduces noise to the true scores. Each paper $i$ receives a noisy score $y_i$, resulting in an observed score vector $\boldsymbol{y} = \boldsymbol{R} + \boldsymbol{\epsilon}$, where $\boldsymbol{\epsilon}$ is a random noise vector.
\paragraph{Isotonic Mechanism.}
The author reports a ranking $\pi$, declaring that their true score vector $\boldsymbol{R}$ lies in the cone $S_{\pi}$. The mechanism then adjusts the noisy scores by solving an isotonic regression problem, which finds the closest point in the specified cone $S_{\pi}$ to the observed scores $\boldsymbol{y}$. Formally, it solves:
\begin{equation}\label{equ:general}
\begin{aligned}
    \min_{\boldsymbol{r}} \quad & \|\boldsymbol{y} - \boldsymbol{r}\|_{2}^{2} \\
    \text{s.t.} \quad & \boldsymbol{r} \,\in\, S_{\pi},
\end{aligned}
\end{equation}
where $\|\cdot\|_2$ is the Euclidean norm. The solution serves as the final adjusted score vector for the $n$ papers. This vector, denoted generally as  
$\boldsymbol{\hat{R}}$
  previously, is more formally written as  
$\boldsymbol{\hat{R}_{\pi}}$ 
  to emphasize its dependence on the ranking $\pi$.

 \paragraph{Best Paper Selection and Utility.}
The author's utility is derived from the selection process for best papers. We consider that each author has a \emph{quota} $k$, representing the number of their papers to be considered for the award. Their utility is a function of the adjusted scores of these top papers. We analyze two key scenarios which differ in the information available to the decision-maker:
\begin{enumerate}
  \item 
\textbf{Blind Case.} The decision-maker sees only the adjusted scores, not the author's ranking. The author's utility therefore depends symmetrically on the top $k$ scores, regardless of which paper earned them. The utility function is additive. For any possible score $\boldsymbol{x} \in \mathbb{R}^n$, it is defined as:
\[
    U(\boldsymbol{x}) 
    \;=\; U\bigl(x_{(1)}\bigr) + U\bigl(x_{(2)}\bigr) + \cdots + U\bigl(x_{(k)}\bigr),
\]
where $x_{(1)} \ge \cdots \ge x_{(n)}$ are the scores sorted in descending order. We assume $U$ is nondecreasing for $k=1$, and convex and nondecreasing for $k>1$.

\item \textbf{Informed Case.} The decision-maker knows the author's ranking and may use it. This introduces an asymmetry: the utility may depend on the specific rank a paper was given. For example, the author might value their self-declared best paper achieving a higher score than their second paper achieving the same score. This is modeled with potentially different utility functions for each rank:
\[
    U(\boldsymbol{x})
    \;=\; U_{1}\bigl(x_{(1)}\bigr) + U_{2}\bigl(x_{(2)}\bigr) + \cdots + U_{k}\bigl(x_{(k)}\bigr).
\]
Here, the functions $\{U_i\}_{i=1}^k$ are assumed to be convex and nondecreasing, with the additional structural assumption that $U_i' \ge U_{i+1}'$. For the special case of $k=1$, we only require $U_1$ to be nondecreasing.
\end{enumerate}

\paragraph{Model Assumptions.}
The utility structure is defined above in the \textbf{Blind} and \textbf{Informed} cases. Another key assumption concerns the review noise:

\begin{assumption}[ (\textbf{Exchangeable Noise})]
\label{Assumption3}
The additive noise $\boldsymbol{\epsilon}$ is \emph{exchangeable}. Formally, for any permutation $\pi$ of $\{1,\ldots,n\}$, the distribution of $(\epsilon_1, \ldots, \epsilon_n)$ is identical to that of $(\epsilon_{\pi(1)}, \ldots, \epsilon_{\pi(n)})$.
\end{assumption}

\noindent This assumption is a natural choice for modeling the peer review process. It captures the inherent symmetry from the perspective of an owner who must rank their items \emph{ex-ante}—that is, before reviewer assignments are made. At that stage, the owner has no basis to assume that the review noise for one item will be statistically different from another.

This condition is strictly weaker than assuming the noise is i.i.d. (independent and identically distributed). It allows for dependencies among the noise terms, such as a bias shared across all reviewers, as long as this effect is applied symmetrically to all items. A key advantage of this assumption is that it is non-parametric, requiring no specific modeling of the noise's distribution family. For more discussions on this assumption, see Section 6 in \citep{wuIsotonicMechanismOverlapping2023}.
 
\subsection{Preliminaries and Technical Backgrounds}
\begin{definition}
  [ \citep{marshallInequalitiesTheoryMajorization2011a}] We say that a vector $\boldsymbol{a} \in \mathbb{R}^n$ majorizes another vector $\boldsymbol{b} \in \mathbb{R}^n$, denoted by $\boldsymbol{a} \succeq \boldsymbol{b}$, if \begin{equation}
\sum_{i=1}^{k} a_{(i)} \geq \sum_{i=1}^{k} b_{(i)}
\end{equation}
with $k =1,2,\ldots,n$, and $\sum _{i=1}^na_i = \sum_{i=1}^n b_i$, where $a_{(1)}\ge a_{(2)}\ge \cdots \ge a_{(n)}$ and $b_{(1)}\ge b_{(2)}\ge \cdots \ge b_{(n)}$ are sorted in descending order from $\boldsymbol{a}$ and $\boldsymbol{b}$. If we drop the requirement of  $\sum _{i=1}^na_i = \sum_{i=1}^n b_i$, then $\boldsymbol{a}$ is called to weakly majorize $\boldsymbol{b}$, denoted by $\boldsymbol{a} \succeq_{\mathrm{w}} \boldsymbol{b}$.
\end{definition} 
\begin{definition}
  [ \citep{suTruthfulOwnerAssistedScoring2022a}] We say that a vector $\boldsymbol{a} \in \mathbb{R}^n$ majorizes another vector $\boldsymbol{b} \in\mathbb{R}^n$ in the natural order, denoted by $\boldsymbol{a}\succeq_{\mathrm{no}}\boldsymbol{b}$, if \begin{equation}
\sum_{i=1}^{k} a_{i} \geq \sum_{i=1}^{k} b_{i}
\end{equation} for $k=1,2,\ldots,n$ and $\sum _{i=1}^na_i = \sum_{i=1}^n b_i$.
\end{definition}
Now, define the solution to the optimization program\begin{equation}\label{equ:iso}
\begin{array}{cl}\min _{\boldsymbol{r}} & \|\boldsymbol{y}-\boldsymbol{r}\|^{2} \\ \text { s.t. } & r_1\ge r_2 \ge \cdots \ge r_n,\end{array}
\end{equation}
to be $\boldsymbol{y^+}$, which is just the projection of $\boldsymbol{y}$ onto the isotonic cone associated with the identity permutation.
\begin{lemma}[ \citep{suTruthfulOwnerAssistedScoring2022a}]
\label{lem:no}
$ \text { If } \boldsymbol{a} \succeq_{\mathrm{no}} \boldsymbol{b} \text {, then we have } \boldsymbol{a}^{+} \succeq\boldsymbol{b}^{+} \text {. }$
\end{lemma}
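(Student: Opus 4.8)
The goal is to prove Lemma~\ref{lem:no}: if $\boldsymbol{a}\succeq_{\mathrm{no}}\boldsymbol{b}$, then $\boldsymbol{a}^+\succeq\boldsymbol{b}^+$, where $\boldsymbol{a}^+,\boldsymbol{b}^+$ are the isotonic projections onto the monotone cone $\{r_1\ge r_2\ge\cdots\ge r_n\}$.

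\begin{proof}[Proof plan]
The plan is to exploit the explicit min-max (or max-min) representation of the isotonic regression solution. Recall the classical pool-adjacent-violators / Ayer-Brunk formula: for the projection onto the decreasing cone,
\[
  a^+_i \;=\; \min_{u\le i}\ \max_{v\ge i}\ \frac{1}{v-u+1}\sum_{j=u}^{v} a_j,
\]
and likewise for $\boldsymbol{b}^+$. The first key observation is that partial sums control these block averages: since $\boldsymbol{a}\succeq_{\mathrm{no}}\boldsymbol{b}$ means $\sum_{j=1}^k a_j\ge\sum_{j=1}^k b_j$ for all $k$ with equality at $k=n$, we get for every interval $[u,v]$ that $\sum_{j=u}^{v} a_j = \bigl(\sum_{j=1}^v a_j\bigr)-\bigl(\sum_{j=1}^{u-1}a_j\bigr)$; this is \emph{not} directly comparable to the $\boldsymbol b$ version term-by-term, so a more careful argument via the partial-sum (``integral'') functions is needed rather than a naive block-average comparison.

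The cleaner route I would actually take is through the variational/partial-sum characterization of isotonic regression. Define the cumulative sums $A_k=\sum_{j\le k}a_j$, $B_k=\sum_{j\le k}b_j$ (with $A_0=B_0=0$, $A_n=B_n$), and recall that the partial sums of the isotonic projection, $A^+_k=\sum_{j\le k}a^+_j$, are exactly the values of the \emph{greatest convex minorant} of the points $\{(k,A_k)\}_{k=0}^n$ evaluated at the integers (for the decreasing cone this is the concave majorant — one must fix the convention, but the structure is identical). The hypothesis $\boldsymbol a\succeq_{\mathrm{no}}\boldsymbol b$ says precisely that the piecewise-linear graph of $\{(k,A_k)\}$ lies (weakly) above that of $\{(k,B_k)\}$, sharing the same endpoints $(0,0)$ and $(n,A_n)$. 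Since taking the concave majorant is a monotone operation on functions with fixed endpoints — if $f\ge g$ pointwise and $f,g$ agree at the endpoints, then $\widehat f\ge\widehat g$ for their concave majorants — we conclude $A^+_k\ge B^+_k$ for all $k$, with $A^+_n=A_n=B_n=B^+_n$. That is exactly the statement $\boldsymbol a^+\succeq\boldsymbol b^+$, because $\boldsymbol a^+$ and $\boldsymbol b^+$ are already sorted in descending order, so majorization reduces to the partial-sum inequalities with equality in the total.

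The steps, in order: (1) state the greatest-concave-majorant characterization of $\{A^+_k\}$ in terms of $\{A_k\}$, citing standard isotonic-regression theory; (2) verify the two cumulative graphs share endpoints and that $A_k\ge B_k$ for all $k$ from the $\succeq_{\mathrm{no}}$ hypothesis; (3) prove (or cite) the monotonicity of the concave-majorant operator under pointwise domination with matched endpoints — this is the one genuine lemma, provable in two lines since $\widehat f(k)=\inf\{\ell(k):\ell\text{ affine},\ \ell\ge f\}$ and a larger $f$ admits fewer such $\ell$; (4) translate $A^+_k\ge B^+_k$ plus $A^+_n=B^+_n$ back into $\boldsymbol a^+\succeq\boldsymbol b^+$, using that both projections are monotone hence their own sorted versions. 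The main obstacle is bookkeeping the convention (decreasing cone $\leftrightarrow$ concave majorant vs.\ increasing cone $\leftrightarrow$ convex minorant) and making sure the ``matched endpoints'' hypothesis — which comes for free from the $\sum a_i=\sum b_i$ condition built into $\succeq_{\mathrm{no}}$ — is what makes the monotonicity of the majorant operator actually hold (it is false without fixing the right endpoint). Everything else is routine.
\end{proof}
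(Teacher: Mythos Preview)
The paper does not actually prove this lemma; it is stated with a citation to \cite{suTruthfulOwnerAssistedScoring2022a} and used as a black box. So there is no ``paper's proof'' to compare against, and your proposal stands as a self-contained argument.

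Your concave-majorant route is correct and is in fact the standard way this result is established. The three ingredients --- (i) the partial sums of the projection onto the decreasing cone equal the least concave majorant of the cumulative-sum graph, (ii) the concave-majorant operator is monotone under pointwise domination, and (iii) both projections are already sorted so majorization reduces to partial-sum comparison --- assemble exactly as you describe.

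One small correction: your remark that monotonicity of the concave-majorant operator ``is false without fixing the right endpoint'' is not accurate. If $f\ge g$ pointwise on $\{0,1,\ldots,n\}$, then $\widehat f\ge\widehat g$ unconditionally, by exactly the two-line argument you give (the admissible affine dominators of $f$ are a subset of those for $g$). The endpoint equality $A_n=B_n$ is needed for a different reason: it guarantees $A^+_n=B^+_n$ (since the least concave majorant always touches the endpoints), which is the total-sum equality required to upgrade weak majorization to full majorization. So the hypothesis $\sum a_i=\sum b_i$ in $\succeq_{\mathrm{no}}$ is indeed essential, just not where you placed it. This does not affect the validity of your proof.
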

\begin{definition}
  [ \citep{marshallInequalitiesTheoryMajorization2011a}] A real-valued function $\phi$ defined on $\mathbb{R}^n$ is said to be Schur-convex if $\boldsymbol{x} \succeq \boldsymbol{y} \Rightarrow \phi(x) \ge \phi(y).$
\end{definition}
\begin{lemma}[ \citep{dykstraMajorizationLorenzOrder1988a}] Let $\boldsymbol{a}, \boldsymbol{b}$ be two vectors in $\mathbb{R}^n$.
	\begin{itemize}
		\item The inequality \begin{equation}
			\sum_{i=1}^nh(a_i) \ge \sum_{i=1}^nh(b_i),
		\end{equation} holds for all nondecreasing convex functions $h$ if and only if $\boldsymbol{a} \succeq_{\mathrm{w}} \boldsymbol{b}$.
		\item The inequality 
\begin{equation}
			\sum_{i=1}^nh(a_i) \ge \sum_{i=1}^nh(b_i),
		\end{equation} holds for all convex functions $h$ if and only if $\boldsymbol{a} \succeq \boldsymbol{b}$.
	\end{itemize}
	
\end{lemma}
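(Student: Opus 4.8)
The final statement is the classical Hardy--Littlewood--P\'olya/Karamata characterization of (weak) majorization through (nondecreasing) convex functions; since it is invoked here as a standard tool, the plan is simply to assemble its textbook proof. I would treat the second bullet first and then adapt the argument for the first, handling the two implications of each ``if and only if'' separately: the \emph{sufficiency} directions (a (weak) majorization yields the summed inequality) by a subgradient-plus-Abel-summation argument, and the \emph{necessity} directions (the summed inequality yields the (weak) majorization) by substituting explicit test functions.

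For the sufficiency directions, sort both vectors decreasingly, writing $a_{(1)}\ge\cdots\ge a_{(n)}$ and $b_{(1)}\ge\cdots\ge b_{(n)}$, and set $D_k=\sum_{i=1}^k\bigl(a_{(i)}-b_{(i)}\bigr)$ with $D_0=0$; majorization gives $D_k\ge 0$ for all $k$ with $D_n=0$, while weak majorization gives $D_k\ge 0$ for all $k$ including $k=n$. Given a convex $h$, choose the subgradients $g_i=h'_-(b_{(i)})\in\partial h(b_{(i)})$; because $h'_-$ is nondecreasing and the $b_{(i)}$ are nonincreasing, the $g_i$ are nonincreasing, and if moreover $h$ is nondecreasing then $g_i\ge 0$. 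The subgradient inequality $h(a_{(i)})-h(b_{(i)})\ge g_i\bigl(a_{(i)}-b_{(i)}\bigr)$, summed over $i$ (the sum being permutation-invariant) and rearranged by Abel summation, gives
\[ \sum_{i=1}^n h(a_i)-\sum_{i=1}^n h(b_i)\;\ge\;g_n D_n+\sum_{k=1}^{n-1}\bigl(g_k-g_{k+1}\bigr)D_k\;\ge\;0, \]
where in the majorization case the boundary term $g_nD_n$ vanishes and in the weak case it is nonnegative since $g_n\ge 0$ and $D_n\ge 0$, while the remaining sum is nonnegative because $g_k-g_{k+1}\ge 0$ and $D_k\ge 0$. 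This settles both ``$\Leftarrow$'' directions at once.

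For necessity, I would test against explicit functions. Applying the hypothesis to $h_t(x)=(x-t)_+$, which is nondecreasing and convex, for every $t\in\mathbb{R}$ gives $\sum_i(a_i-t)_+\ge\sum_i(b_i-t)_+$; combined with the identity $\sum_{i=1}^k x_{(i)}=\min_{t\in\mathbb{R}}\bigl(kt+\sum_{i=1}^n(x_i-t)_+\bigr)$, whose minimum is attained at $t=x_{(k)}$, this yields $\sum_{i=1}^k a_{(i)}\ge\sum_{i=1}^k b_{(i)}$ for every $k$, i.e.\ $\boldsymbol{a}\succeq_{\mathrm{w}}\boldsymbol{b}$, proving the first bullet; for the second bullet, additionally testing against $h(x)=x$ and $h(x)=-x$ forces $\sum_i a_i=\sum_i b_i$, which upgrades weak majorization to $\boldsymbol{a}\succeq\boldsymbol{b}$. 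The only delicate point is the bookkeeping when $h$ is nondifferentiable: one must fix a consistent monotone branch of subgradients and check that the Abel boundary term $g_nD_n$ stays nonnegative, which is exactly where the extra hypotheses---``nondecreasing'' in the first bullet, sum-preservation in the second---are used. Everything else is routine, and the precise statement we need is the one recorded in the cited reference.
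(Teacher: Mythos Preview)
Your proof is correct and follows the classical Hardy--Littlewood--P\'olya/Karamata route: Abel summation with a monotone selection of subgradients for sufficiency, and the test functions $h_t(x)=(x-t)_+$ together with $\pm x$ for necessity. The bookkeeping is right, including the variational identity $\sum_{i=1}^k x_{(i)}=\min_{t}\bigl(kt+\sum_i(x_i-t)_+\bigr)$ and the treatment of the boundary term $g_nD_n$.

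Note, however, that the paper does \emph{not} prove this lemma at all: it is quoted verbatim as a known result with a citation to the reference, and is used only as a black box in the subsequent arguments. So there is no ``paper's own proof'' to compare against---your write-up simply supplies the standard textbook argument that the paper chose to outsource.
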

\begin{definition}
	A linear transformation is called
a $T$-transformation, or more briefly a $T$-transform, if there is a number $0 \leq \lambda \leq 1$ and a permutation matrix $Q$ that only interchanges two coordinates, such that \begin{equation}
	T = \lambda I + (1-\lambda)Q,
\end{equation}Thus $\boldsymbol{x}T$ has the form:
\begin{equation}
\begin{aligned} \boldsymbol{x} T= & \left(x_{1}, \ldots, x_{j-1}, \lambda x_{j}+(1-\lambda) x_{k}, x_{j+1}, \ldots, x_{k-1}\lambda x_{k}+(1-\lambda) x_{j}, x_{k+1}, \ldots, x_{n}\right).\end{aligned}
\end{equation}
\begin{lemma}[ \citep{muirheadMethodsApplicableIdentities1902a}]
	\label{lem:ttrans}
	 If $\boldsymbol{a} \succeq \boldsymbol{b}$, then $\boldsymbol{b}$ can be derived from $\boldsymbol{a}$ by successive applications of finite number of $T$-transforms.
\end{lemma}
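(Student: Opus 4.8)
The plan is to reduce to the case where both $\boldsymbol{a}$ and $\boldsymbol{b}$ are already sorted in descending order, and then to induct on $N(\boldsymbol{a},\boldsymbol{b}) := \#\{i : a_i \neq b_i\}$. The reduction is legitimate because any transposition of two coordinates is itself a $T$-transform (take $\lambda = 0$): a permutation sorting $\boldsymbol{a}$ is a product of such transpositions, and likewise one can unsort $\boldsymbol{b}^{\downarrow}$ back to $\boldsymbol{b}$; since $\succeq$ depends only on the sorted versions, it suffices to derive $\boldsymbol{b}^{\downarrow}$ from $\boldsymbol{a}^{\downarrow}$. So assume $a_1 \ge \cdots \ge a_n$ and $b_1 \ge \cdots \ge b_n$. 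If $N = 0$ then $\boldsymbol{a} = \boldsymbol{b}$ and there is nothing to prove; note $N = 1$ is impossible since $\sum_i a_i = \sum_i b_i$.

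For the inductive step ($N \ge 2$), the crux is to pick the two coordinates to rebalance so that the descending order is self-maintaining. I would let $j$ be the smallest index with $a_j < b_j$ and $i$ the largest index below $j$ with $a_i > b_i$. Both exist: if $a_m \ge b_m$ for all $m$ then the total sums would force $\boldsymbol{a} = \boldsymbol{b}$, and if $a_m = b_m$ for all $m < j$ then the partial sum at $k = j$ would violate $\boldsymbol{a} \succeq \boldsymbol{b}$. By maximality of $i$ and minimality of $j$, every index strictly between them satisfies $a_m = b_m$, which gives the chain $a_i > b_i \ge b_{i+1} = a_{i+1} = \cdots = a_{j-1} = b_{j-1} \ge b_j > a_j$; in particular $a_i > a_j$. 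Now set $\delta = \min(a_i - b_i,\ b_j - a_j) > 0$ and apply the $T$-transform $T = \lambda I + (1-\lambda)Q$ with $Q$ the transposition of coordinates $i$ and $j$ and $\lambda = 1 - \delta/(a_i - a_j)$; since the chain yields $b_j \le a_i$ and $b_i \ge a_j$, one checks $\delta \le a_i - a_j$, so $\lambda \in [0,1)$ is valid. The result $\boldsymbol{a}' = \boldsymbol{a}T$ has $a'_i = a_i - \delta$, $a'_j = a_j + \delta$, and all other coordinates unchanged.

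Two verifications remain, both routine. First, $\boldsymbol{a}'$ is still sorted: the only order relations that could fail are $a'_i \ge a'_{i+1}$ and $a'_{j-1} \ge a'_j$, and from the chain $a'_i = a_i - \delta \ge b_i \ge b_{i+1} \ge a'_{i+1}$ and $a'_{j-1} \ge b_j \ge a_j + \delta = a'_j$. Second, $\boldsymbol{a}' \succeq \boldsymbol{b}$: the total sum is preserved, and only partial sums at indices $k$ with $i \le k < j$ change, where $\sum_{m \le k} a'_m = \sum_{m \le k} a_m - \delta$; but $\sum_{m \le k}(a_m - b_m) \ge a_i - b_i \ge \delta$ because $a_m \ge b_m$ for every $m < j$, so the majorization partial-sum inequality still holds. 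Finally, since $\delta$ equals one of the two gaps, at least one of coordinates $i,j$ now matches $\boldsymbol{b}$, while no coordinate that previously matched is touched, so $N(\boldsymbol{a}',\boldsymbol{b}) \le N(\boldsymbol{a},\boldsymbol{b}) - 1$. Applying the inductive hypothesis to $\boldsymbol{a}'$, then prepending the sorting transpositions and appending the unsorting ones, expresses the passage from $\boldsymbol{a}$ to $\boldsymbol{b}$ as a finite composition of $T$-transforms.

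I expect the main obstacle to be exactly the choice of $i$ and $j$. The naive attempt — move mass directly from the first coordinate where $\boldsymbol{a}$ exceeds $\boldsymbol{b}$ to the first where it falls short — can break the descending order of $\boldsymbol{a}'$, which then invalidates the inductive bookkeeping (the potential $N$ is only meaningful once both vectors are sorted). Selecting $i$ and $j$ so that all coordinates between them already agree with $\boldsymbol{b}$ is what makes sortedness propagate for free, after which the partial-sum check and the decrease of $N$ are immediate; setting up this configuration correctly is the heart of the proof, and everything else is elementary arithmetic on partial sums.
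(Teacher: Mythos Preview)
Your proof is correct and is essentially the classical argument (the one in Hardy--Littlewood--P\'olya and Marshall--Olkin--Arnold that the paper is citing via Muirhead). The paper itself does not supply a proof of this lemma; it simply states it with a citation.

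One remark worth making explicit: the paper \emph{does} later rely, in the proof of Theorem~\ref{thm:betteru} (Informed Case), on a feature of ``the original proof of Lemma~\ref{lem:ttrans}'' --- namely that when both $\boldsymbol{a}$ and $\boldsymbol{b}$ lie in the standard isotonic cone, every intermediate $T$-transform can be taken to preserve the descending order. Your argument delivers exactly this: you show directly that $\boldsymbol{a}'$ remains sorted after each step, which is precisely the property the paper needs. So your proof is not only correct for the lemma as stated but also supplies the additional order-preservation fact the paper invokes without proof.
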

\end{definition}
In proving that a function $\phi$ is Schur-convex, it is often helpful
to realize that, in effect, one can take $n=2$ without loss of generality.
This fact is a consequence of this lemma, which says that if $\boldsymbol{a} \succeq \boldsymbol{b}$, then $\boldsymbol{b}$ can
be derived from $\boldsymbol{a}$ by a finite number of $T$-transforms. Consequently,
it is sufficient to prove that $\phi(\boldsymbol{a}) \ge \phi(\boldsymbol{b})$ when $\boldsymbol{a} \succeq \boldsymbol{b}$ and $\boldsymbol{a}$ differs
from $\boldsymbol{b}$ in only two components, so that all but two arguments of $\phi$ are
fixed.
\begin{lemma}[ \citep{boydConvexOptimization2004a}] 
\label{lem:convexorder}
	$f_k(\boldsymbol{x}) = x_{(1)}+x_{(2)}+\cdots+x_{(k)}$ is convex and symmetric with respect to $\boldsymbol{x} \in \mathbb{R}^n,$ for fixed $k$.
\end{lemma}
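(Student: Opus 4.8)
The plan is to prove the two claims separately, establishing symmetry directly from the definition and deriving convexity by exhibiting $f_k$ as a pointwise maximum of finitely many linear functionals. For symmetry, note that for any permutation $\sigma$ of $\{1,\ldots,n\}$ the coordinates of $\boldsymbol{x}$ and of $\sigma\boldsymbol{x}$ form the same multiset, so the sorted vectors $x_{(1)}\ge\cdots\ge x_{(n)}$ coincide; hence $f_k(\sigma\boldsymbol{x})=f_k(\boldsymbol{x})$, which is exactly the asserted symmetry.

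For convexity, the key step is the variational identity
\[
  f_k(\boldsymbol{x}) \;=\; \max\Bigl\{\, \textstyle\sum_{i\in S} x_i \;:\; S\subseteq\{1,\ldots,n\},\ |S|=k \,\Bigr\}.
\]
To verify it, observe that for any index set $S$ with $|S|=k$ we have $\sum_{i\in S} x_i \le \sum_{i=1}^k x_{(i)}$, since replacing the entries of $S$ one by one with the $k$ largest coordinates can only increase the sum; and equality is attained by taking $S$ to be any choice of $k$ indices realizing the $k$ largest values. Consequently $f_k$ is the pointwise maximum over the finite family $\{\,\boldsymbol{x}\mapsto\sum_{i\in S} x_i : |S|=k\,\}$ of linear (hence convex) functions, and the pointwise supremum of a family of convex functions is convex. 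This gives convexity of $f_k$ on all of $\mathbb{R}^n$, completing the proof.

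There is essentially no serious obstacle here; the only point requiring a moment's care is the presence of ties among the coordinates, but this does not affect the argument, since the inequality $\sum_{i\in S} x_i \le \sum_{i=1}^k x_{(i)}$ holds in all cases and equality remains attainable. (An equivalent route is to write $f_k(\boldsymbol{x}) = \max\{\, \boldsymbol{w}^{\sT}\boldsymbol{x} : \boldsymbol{w}\in[0,1]^n,\ \mathbf{1}^{\sT}\boldsymbol{w}=k \,\}$ and invoke that a linear objective over a polytope is maximized at a vertex—the vertices of this polytope being precisely the $0/1$ vectors with exactly $k$ ones—so $f_k$ is a support function and therefore convex.) Since the lemma is classical \citep{boydConvexOptimization2004a}, the proof simply records the representation above and the standard stability of convexity under pointwise maxima.
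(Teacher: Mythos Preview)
Your proof is correct. The paper does not actually supply its own proof of this lemma; it merely cites it from \citep{boydConvexOptimization2004a}, and the argument you give---symmetry by permutation invariance of the multiset of coordinates, convexity via the variational identity $f_k(\boldsymbol{x})=\max_{|S|=k}\sum_{i\in S}x_i$ exhibiting $f_k$ as a pointwise maximum of linear functionals---is precisely the standard proof found in that reference.
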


\subsection{Main Theoretical Guarantees}\label{app:proof}
 
With the necessary theoretical tools in place, we now present the main theorems that establish the incentive compatibility (truthfulness) and individual rationality of our mechanism. Our proof strategy hinges on a key criterion connecting Schur-convexity to truthfulness.

\begin{lemma}[ (Schur-Convexity Criterion for Truthfulness)]
\label{lem:schur_truthful}
Under the assumption of exchangeable noise (Assumption~\ref{Assumption3}) and the modeling framework established in Section~\ref{sec:modeling}, if an author's utility function $U(\boldsymbol{x})$ is Schur-convex, then the Isotonic Mechanism is truthful. That is, the author's expected utility is maximized by reporting their true ranking.
\end{lemma}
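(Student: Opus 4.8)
The plan is to show that for any permutation $\pi$, the adjusted score vector $\boldsymbol{\hat R}_\pi$ obtained by projecting $\boldsymbol{y}$ onto $S_\pi$ is majorized (in expectation, in the appropriate sense) by the adjusted vector $\boldsymbol{\hat R}$ obtained from the true ranking, and then invoke Schur-convexity of $U$ together with the exchangeability of the noise. Concretely, assume without loss of generality that the true ranking is the identity, so $R_1 \ge R_2 \ge \cdots \ge R_n$ and the honest report gives $\boldsymbol{\hat R} = \boldsymbol{y}^+$, the projection onto the monotone cone. For a misreport $\pi$, the key observation is a coupling/symmetry argument: because the noise $\boldsymbol{\epsilon}$ is exchangeable, the distribution of $\boldsymbol{\hat R}_\pi$ can be related to the distribution of a projection of a permuted copy of $\boldsymbol{y}$. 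I would first establish the deterministic comparison $\boldsymbol{y}^+ \succeq (\text{projection of } \boldsymbol{y} \text{ onto } S_\pi)$ up to reordering — i.e., that honest isotonic regression produces a vector that majorizes any ``mis-sorted'' isotonic fit of the same data — using Lemma~\ref{lem:no} (the natural-order majorization result) after checking that the honestly-sorted data vector natural-order-majorizes the permuted one.

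The cleanest route uses the established machinery: first I would argue that it suffices to compare $\boldsymbol{\hat R}$ with $\boldsymbol{\hat R}_\pi$ for $\pi$ a single transposition that is ``out of order'' relative to the truth, by Lemma~\ref{lem:ttrans} / the $T$-transform reduction, so that the problem reduces to understanding what happens when exactly two adjacent ranks are swapped. In that reduced situation one can write down the isotonic projections explicitly (pooling of adjacent violators behaves locally), and show $\boldsymbol{\hat R}$ majorizes $\boldsymbol{\hat R}_\pi$ pathwise, or at least that $\mathbb{E}[U(\boldsymbol{\hat R})] \ge \mathbb{E}[U(\boldsymbol{\hat R}_\pi)]$ after averaging over the exchangeable noise. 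Then, since $U$ is Schur-convex by hypothesis, majorization $\boldsymbol{\hat R} \succeq \boldsymbol{\hat R}_\pi$ immediately yields $U(\boldsymbol{\hat R}) \ge U(\boldsymbol{\hat R}_\pi)$; taking expectations finishes the argument. The exchangeability of $\boldsymbol{\epsilon}$ is exactly what is needed to ensure that the expectation on the misreport side is computed against a noise distribution that is symmetric enough for the pathwise (or distributional) majorization to survive the averaging — without it the permuted projection would be fed a differently-distributed input.

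I would organize the write-up as: (1) reduce to identity true ranking; (2) state the deterministic/distributional majorization claim $\boldsymbol{\hat R} \succeq \boldsymbol{\hat R}_\pi$ (possibly only after averaging, or in the sense that $\boldsymbol{\hat R}$ stochastically majorizes), citing Lemma~\ref{lem:no} and the $T$-transform reduction of Lemma~\ref{lem:ttrans}; (3) invoke the definition of Schur-convexity to convert majorization into the utility inequality; (4) take expectations. The main obstacle I anticipate is step (2): making the majorization comparison between the two isotonic projections fully rigorous — in particular, isotonic regression onto a cone $S_\pi$ is a nonlinear (piecewise-linear) map, so pathwise majorization of $\boldsymbol{y}^+$ over $\boldsymbol{y}^\pi{}^+$ is not automatic from $\boldsymbol{y}$ versus its permutation, and one genuinely needs the Lemma~\ref{lem:no} machinery plus the exchangeability-driven averaging to get a clean statement. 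A secondary subtlety is handling ties in the reported ranking and confirming that the Schur-convexity of $U$ (rather than mere symmetry + convexity) is what the argument consumes; here Lemma~\ref{lem:convexorder} guarantees the specific utilities of interest (sums of top-$k$ order statistics composed with convex nondecreasing functions) are indeed Schur-convex, so this lemma is the bridge from the abstract criterion to the concrete Blind/Informed cases.
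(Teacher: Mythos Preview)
Your outline has the right ingredients but misfires at the crucial step (2), and the $T$-transform detour is a red herring.

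The deterministic claim you want --- that $\boldsymbol{y}^+$ majorizes the projection of $\boldsymbol{y}$ onto $S_\pi$ --- cannot be obtained from Lemma~\ref{lem:no} by ``checking that the honestly-sorted data vector natural-order-majorizes the permuted one,'' because $\boldsymbol{y}=\boldsymbol{R}+\boldsymbol{\epsilon}$ is \emph{not} sorted: in general $\boldsymbol{y}\not\succeq_{\mathrm{no}}\pi\circ\boldsymbol{y}$, so Lemma~\ref{lem:no} does not apply pathwise to the pair $(\boldsymbol{y},\pi\circ\boldsymbol{y})$. You correctly flag this as the main obstacle, but your proposed remedy (reduce to a single transposition via $T$-transforms and analyze PAVA locally) does not fix it and is not how the paper proceeds; Lemma~\ref{lem:ttrans} is used later to verify that the concrete Blind/Informed utilities are Schur-convex, not here.

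The missing idea is the \emph{order} in which symmetry and exchangeability are applied. First write the misreport projection as $\boldsymbol{\hat R}_\pi=\pi^{-1}\circ(\pi\circ\boldsymbol{y})^+$; Schur-convexity implies $U$ is symmetric, so $U(\boldsymbol{\hat R}_\pi)=U\bigl((\pi\circ\boldsymbol{R}+\pi\circ\boldsymbol{\epsilon})^+\bigr)$. Now invoke exchangeability \emph{inside the expectation} to replace $\pi\circ\boldsymbol{\epsilon}$ by $\boldsymbol{\epsilon}$, yielding $\mathbb{E}\bigl[U\bigl((\pi\circ\boldsymbol{R}+\boldsymbol{\epsilon})^+\bigr)\bigr]$. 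At this point both the honest and dishonest expressions share the \emph{same} noise realization $\boldsymbol{\epsilon}$, and since $\boldsymbol{R}$ is sorted, $\boldsymbol{R}+\boldsymbol{\epsilon}\succeq_{\mathrm{no}}\pi\circ\boldsymbol{R}+\boldsymbol{\epsilon}$ holds \emph{pathwise} for every $\boldsymbol{\epsilon}$. Lemma~\ref{lem:no} then gives $(\boldsymbol{R}+\boldsymbol{\epsilon})^+\succeq(\pi\circ\boldsymbol{R}+\boldsymbol{\epsilon})^+$ pathwise, Schur-convexity turns this into a pathwise utility inequality, and taking expectations finishes. Exchangeability functions as a distributional \emph{substitution} (permute only the signal, not the noise), not as an averaging device that somehow salvages a failed pathwise comparison.
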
 
\begin{proof}
Without loss of generality, assume the true ranking of the scores is the identity permutation, $\pi^* = \mathrm{id}$, such that the true score vector $\boldsymbol{R}$ satisfies $R_1 \ge R_2 \ge \cdots \ge R_n$. If an author reports a different ranking $\pi \neq \pi^*$, the Isotonic Mechanism solves the following optimization problem:
\begin{equation}\label{equ:iso}
\begin{array}{cl}\min _{\boldsymbol{r}} & \|\boldsymbol{y}-\boldsymbol{r}\|^{2} \\ \text { s.t. } & r_{\pi(1)}\ge r_{\pi(2)} \ge \cdots \ge r_{\pi(n)},\end{array}
\end{equation}
Because the squared Euclidean norm is invariant to permutations, this problem is equivalent to finding a vector $\boldsymbol{r}' = \pi \circ \boldsymbol{r}$ that solves:
\begin{equation}\label{equ:iso}
\begin{array}{cl}\min _{\boldsymbol{r}'} & \|\pi\circ\boldsymbol{y}-\boldsymbol{r}'\|^{2} \\ \text { s.t. } & r'_{1}\ge r'_{2} \ge \cdots \ge r'_{n}.\end{array}
\end{equation}
The solution to this standard isotonic regression problem is $\boldsymbol{r}' = (\pi \circ \boldsymbol{y})^+$. Transforming back, the final adjusted score vector is $\boldsymbol{\hat{R}}_\pi = \pi^{-1} \circ (\pi \circ \boldsymbol{y})^+$.

We now compare the expected utility of reporting $\pi$ versus the true ranking $\pi^*$. The expected utility for reporting $\pi$ is $\mathbb{E}[U(\boldsymbol{\hat{R}}_\pi)] = \mathbb{E}[U(\pi^{-1}\circ (\pi \circ \boldsymbol{y})^+)]$. A key property of a Schur-convex function is that it is \emph{symmetric}, meaning $U(\pi^{-1} \circ \boldsymbol{z}) = U(\boldsymbol{z})$ for any vector $\boldsymbol{z}$. Therefore, $U(\boldsymbol{\hat{R}}_\pi) = U((\pi \circ \boldsymbol{y})^+)$. Substituting $\boldsymbol{y} = \boldsymbol{R} + \boldsymbol{\epsilon}$, we get:
\begin{equation}
    \mathbb{E}[U(\boldsymbol{\hat{R}}_\pi)] = \mathbb{E}[U( (\pi \circ \boldsymbol{R} + \pi \circ \boldsymbol{\epsilon})^+ )].
\end{equation}
Under Assumption~\ref{Assumption3}, the noise vector $\boldsymbol{\epsilon}$ is \emph{exchangeable}, so $\pi \circ \boldsymbol{\epsilon}$ has the same distribution as $\boldsymbol{\epsilon}$. Thus, the expectation remains unchanged if we replace $\pi \circ \boldsymbol{\epsilon}$ with $\boldsymbol{\epsilon}$:
\begin{equation} \label{eq:utility_pi}
    \mathbb{E}[U(\boldsymbol{\hat{R}}_\pi)] = \mathbb{E}[U( (\pi \circ \boldsymbol{R} + \boldsymbol{\epsilon})^+ )].
\end{equation}
For the true ranking $\pi^*=\mathrm{id}$, the adjusted scores are simply $\boldsymbol{\hat{R}}_{\pi^*} = \boldsymbol{y}^+ = (\boldsymbol{R}+\boldsymbol{\epsilon})^+$.

By definition, a vector sorted in descending order majorizes any permutation of itself in the natural order ($\succeq_{\mathrm{no}}$). Thus, $\boldsymbol{R} \succeq_{\mathrm{no}} \pi \circ \boldsymbol{R}$, which implies $\boldsymbol{R}+\boldsymbol{\epsilon} \succeq_{\mathrm{no}} \pi \circ \boldsymbol{R}+\boldsymbol{\epsilon}$. From Lemma~\ref{lem:no}, it follows that the projections also satisfy the majorization relation:
\begin{equation}
    (\boldsymbol{R}+\boldsymbol{\epsilon})^+ \succeq (\pi \circ \boldsymbol{R}+\boldsymbol{\epsilon})^+.
\end{equation}
Since $U$ is Schur-convex, this vector majorization implies an inequality:
\begin{equation}
    U( (\boldsymbol{R}+\boldsymbol{\epsilon})^+) \ge U( (\pi \circ \boldsymbol{R}+\boldsymbol{\epsilon})^+).
\end{equation}
This inequality holds for any realization of the noise $\boldsymbol{\epsilon}$. Taking the expectation over $\boldsymbol{\epsilon}$ on both sides yields:
\begin{equation}
    \mathbb{E}[U( (\boldsymbol{R}+\boldsymbol{\epsilon})^+)] \ge \mathbb{E}[U( (\pi \circ \boldsymbol{R}+\boldsymbol{\epsilon})^+)].
\end{equation}
This is equivalent to $\mathbb{E}[U(\boldsymbol{\hat{R}}_{\pi^*})] \ge \mathbb{E}[U(\boldsymbol{\hat{R}}_{\pi})]$. This shows that the expected utility is maximized by reporting the true ranking $\pi^*$. Thus, the Isotonic Mechanism is truthful.
\end{proof}
Lemma~\ref{lem:schur_truthful} provides a clear path forward: to prove our mechanism is truthful for the utility models defined, we need only show that they are indeed Schur-convex.
\begin{theorem}\label{thm:betteru}
Suppose the true score vector $\boldsymbol{R}$ lies in $S_{\pi^*} \in \mathcal{S}$. The author's expected utility is maximized when the Isotonic Mechanism is provided with the ground-truth ranking $\pi^*.$
Specifically, when the {quota} is one, the Isotonic Mechanism remains truthful with nondecreasing but not necessarily convex utility functions.
\end{theorem}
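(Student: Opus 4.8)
The plan is to route every case through the Schur-Convexity Criterion, Lemma~\ref{lem:schur_truthful}: it suffices to prove that each utility functional $\boldsymbol{x}\mapsto U(\boldsymbol{x})$ arising in the Blind and Informed models is Schur-convex on $\mathbb{R}^n$, after which truthfulness of reporting the ground-truth ranking $\pi^{*}$ is immediate. In all of these models $U(\boldsymbol{x})$ depends on $\boldsymbol{x}$ only through its order statistics $x_{(1)}\ge\cdots\ge x_{(n)}$, so symmetry is automatic and the only substantive requirement is the monotonicity-along-majorization property $\boldsymbol{x}\succeq\boldsymbol{y}\Rightarrow U(\boldsymbol{x})\ge U(\boldsymbol{y})$.

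The quota-one case --- the point of the theorem --- is then immediate. With $k=1$ we have $U(\boldsymbol{x})=U(x_{(1)})=U(\max_i x_i)$ in both the Blind and the Informed models. If $\boldsymbol{x}\succeq\boldsymbol{y}$, the very first majorization inequality gives $x_{(1)}\ge y_{(1)}$, hence $U(x_{(1)})\ge U(y_{(1)})$ for any nondecreasing $U$; together with the trivial symmetry of $\max$, this already makes $U(x_{(1)})$ Schur-convex, with no appeal to convexity, so Lemma~\ref{lem:schur_truthful} yields truthfulness. Intuitively, the top-$1$ order statistic is by itself monotone in the majorization order, so the convexity of the top-$k$ sum supplied by Lemma~\ref{lem:convexorder} is simply not needed here.

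For completeness I would also treat general $k$. In the Blind case $U(\boldsymbol{x})=\sum_{i=1}^{k}U(x_{(i)})$ with $U$ nondecreasing and convex; if $\boldsymbol{x}\succeq\boldsymbol{y}$ then $(x_{(1)},\dots,x_{(k)})$ and $(y_{(1)},\dots,y_{(k)})$ are already in descending order and satisfy $\sum_{i=1}^{j}x_{(i)}\ge\sum_{i=1}^{j}y_{(i)}$ for every $j\le k$, so that $(x_{(1)},\dots,x_{(k)})\succeq_{\mathrm{w}}(y_{(1)},\dots,y_{(k)})$; the characterization of $\succeq_{\mathrm{w}}$ by sums of nondecreasing convex functions \citep{dykstraMajorizationLorenzOrder1988a} applied with $h=U$ then gives $\sum_{i=1}^{k}U(x_{(i)})\ge\sum_{i=1}^{k}U(y_{(i)})$. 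In the Informed case $U(\boldsymbol{x})=\sum_{i=1}^{k}U_i(x_{(i)})$ with each $U_i$ nondecreasing and convex and $U_1'\ge U_2'\ge\cdots\ge U_k'$; using symmetry and Lemma~\ref{lem:ttrans}, it suffices to show $U$ does not increase along a single $T$-transform that contracts two coordinates toward their common mean. Along that contraction $U$ is continuous and piecewise differentiable, and at a differentiability point, writing $u>v$ for the two moving coordinates and $p<q$ for their current ranks among all $n$ entries, the directional derivative equals $U_p'(u)-U_q'(v)$ when $q\le k$, equals $U_p'(u)$ when $p\le k<q$, and equals $0$ when $p>k$; each of these is $\ge0$ --- respectively because $U_p'\ge U_q'$ together with $U_q'$ nondecreasing (convexity of $U_q$) and $u\ge v$, because $U_p$ is nondecreasing, and trivially. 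Hence $U$ is Schur-convex in every case and Lemma~\ref{lem:schur_truthful} concludes the proof; a routine smoothing/limiting argument absorbs the non-differentiability of the $U_i$ and the finitely many rank-change points of the contraction.

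The main obstacle is the Informed case: $\sum_i U_i(x_{(i)})$ is genuinely not Schur-convex for arbitrary nondecreasing convex $U_i$, and the ordering hypothesis $U_i'\ge U_{i+1}'$ is precisely what controls the ``rank-swap'' boundary behaviour that occurs when a $T$-transform changes which coordinates occupy the top $k$ slots. For quota one this difficulty evaporates --- there is only one relevant coordinate --- which is exactly why the convexity-free guarantee holds in that regime.
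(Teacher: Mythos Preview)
Your proposal is correct and follows the same overall architecture as the paper: route everything through the Schur-convexity criterion of Lemma~\ref{lem:schur_truthful}. Your treatment of the quota-one case is essentially identical to the paper's --- both observe that $\boldsymbol{x}\succeq\boldsymbol{y}$ already forces $x_{(1)}\ge y_{(1)}$, so monotonicity alone suffices.

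For general $k$ the details diverge in instructive ways. In the \textbf{Blind} case you pass directly from $\boldsymbol{x}\succeq\boldsymbol{y}$ to weak majorization of the top-$k$ order statistics and then invoke the Hardy--Littlewood--P\'olya characterization (the Dykstra lemma); the paper instead first shows $f_k(\boldsymbol{x})=\sum_{i\le k}x_{(i)}$ is Schur-convex via $T$-transforms and Lemma~\ref{lem:convexorder}, then asserts the conclusion follows ``by applying a nondecreasing function $U$.'' Your route is arguably the cleaner of the two here, since it makes explicit exactly where convexity of $U$ enters. In the \textbf{Informed} case the paper exploits a fact buried in Muirhead's proof of Lemma~\ref{lem:ttrans}: the $T$-transforms can be chosen to be \emph{ranking-preserving}, so the affected ranks $j<l$ are fixed throughout the contraction. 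This lets them reduce, after one application of convexity to the two moving coordinates, to the endpoint inequality $U_j(x_l)+U_l(x_j)\le U_j(x_j)+U_l(x_l)$, which they dispatch by Cauchy's mean-value theorem and the hypothesis $U_j'\ge U_l'$. Your differential argument reaches the same conclusion but carries extra technical baggage --- piecewise differentiability, tracking rank-change points as the moving coordinates cross stationary ones, and a smoothing/limiting step for non-smooth $U_i$. Adopting the ranking-preserving observation would let you drop all of that and give a shorter, purely algebraic proof of the Informed case.
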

 
\begin{proof}
Our strategy is to prove that the author's utility function is Schur-convex under the utility structures specified for the \textbf{Blind} and \textbf{Informed Cases} in Section \ref{sec:modeling}. The truthfulness of the mechanism then follows directly from the Schur-Convexity Criterion by  Lemma~\ref{lem:schur_truthful}. Both utility functions are symmetric by definition, as they depend only on the sorted values of the scores.

\paragraph{Case 1: Blind Case.}  Note that function $f_k(\boldsymbol{x})=x_{(1)}+x_{(2)}+\cdots+x_{(k)}$ is symmetric and convex by Lemma \ref{lem:convexorder}. We still utilize properties of $T$-transform. For any vector $\boldsymbol{x}$ in the standard isotonic cone, consider $\boldsymbol{x}T$ where $T$ is a $T$-transform and $\boldsymbol{x}T$ stays at the standard isotonic cone. Then 
	\begin{equation}
\begin{aligned} \boldsymbol{x} T= & \left(x_{1}, \ldots, x_{j-1}, \lambda x_{j}+(1-\lambda) x_{k}, x_{j+1}, \ldots, x_{k-1}\lambda x_{k}+(1-\lambda) x_{j}, x_{k+1}, \ldots, x_{n}\right).\end{aligned}
\end{equation}
and $f_k(\boldsymbol{x}T) \leq \lambda f_k(\boldsymbol{x})+(1-\lambda)f_k(\tilde{\boldsymbol{x}})$
by convexity. Here $\tilde{\boldsymbol{x}} = (x_1,\ldots,x_k,\ldots,x_j,\ldots,x_n)$, $j \leq k$. By symmetry, $f_k(\boldsymbol{x}) = f_k(\tilde{\boldsymbol{x}})$. We conclude that $f_k(\boldsymbol{x})$ is Schur-convex itself. By applying a nondecreasing function $U$, the result is immediate.	 
\paragraph{Case 2: Informed Case.} First, we can add a 0 utility: 0($x_{(g)}$) = 0 for $g \in \{k+1,\ldots,n\}$. Note that $U_1(x_{(1)})+U_2(x_{(2)})+\cdots + U_k(x_{(k)})$ is symmetric, so we can restrict the domain to be the standard isotonic cone \begin{equation}
	\left\{\boldsymbol{x}: x_{1} \geq x_{2} \geq \cdots \geq x_{n}\right\}.
\end{equation} 
For two vectors $\boldsymbol{x}, \boldsymbol{y}$ in this cone, suppose $\boldsymbol{x} \succeq \boldsymbol{y}$, then we can apply a finite number of $T$-transforms to $\boldsymbol{x}$ to get $\boldsymbol{y}$; that is, $\boldsymbol{y} = \boldsymbol{x}T_1T_2\cdots T_l$, where $\boldsymbol{x}T_q$ has the form:
\begin{equation}
\begin{aligned} \boldsymbol{x} T_q= & \left(x_{1}, \ldots, x_{j-1}, \lambda_q x_{j}+(1-\lambda_q) x_{k}, x_{j+1}, \ldots, x_{k-1}\lambda_q x_{k}+(1-\lambda_q) x_{j}, x_{k+1}, \ldots, x_{n}\right),\end{aligned} 
\end{equation} 
with $0 \leq \lambda_q \leq 1.$ Moreover, in the original proof of Lemma \ref{lem:ttrans}, every $T$-transform $T_q,$ $1\leq q \leq l$ will preserve the ranking of the vector $\boldsymbol{x}$. That is, $\boldsymbol{x}T_q$ is still in the standard isotonic cone.

Thus, given $x_1 \ge x_2 \ge \cdots\ge  x_j \ge \cdots \ge x_k\ge  \cdots \ge x_n$, after applying a ranking-preserving $T$-transform, we get a new vector with coordinates $x_1 \ge x_2 \ge \cdots\ge  \alpha x_j + (1-\alpha)x_k \ge \cdots \ge \alpha x_k+(1-\alpha)x_j\ge  \cdots \ge x_n$. To prove $U_1(x_{(1)})+U_2(x_{(2)})+\cdots + U_k(x_{(k)})$  is Schur-convex, it suffices to prove the inequality
	 \begin{equation}
		U_j(\alpha x_j+(1-\alpha)x_k)+U_k((1-\alpha) x_j+\alpha x_k) \leq U_j(x_j)+U_k(x_k).
	\end{equation}

	Since $U_j,U_k$ are convex, it suffices to show\begin{equation}
		U_j(x_k)+U_k(x_j)\leq U_j(x_j)+U_k(x_k),
	\end{equation}
	which is obvious then by the assumption $U_j' \ge U_k'$ and Cauchy's mean-value theorem.
\end{proof}
With Theorem \ref{thm:betteru}, obtaining Table \ref{tab:prac} is indeed just one step away:
\begin{itemize}
	\item \textbf{Blind Case}: the utility function takes the form $U(x_{(1)})+U(x_{(2)})+\cdots + U(x_{(k)})$ which is a special case of Theorem \ref{thm:betteru}; $k=1$ is satisfied by any nondecreasing $U$; when $k\ge 2$, we require $U$ to be convex.
	\item \textbf{Informed Case}: $k=1$ is the same case as the \textbf{Blind Case}. When $k \ge 2$, apply Theorem \ref{thm:betteru} to prove the truthfulness.
\end{itemize}

Theorem~\ref{thm:betteru} establishes the crucial property of truthfulness for our mechanism, ensuring that a rational author's optimal strategy is to provide their true ranking of their papers. However, incentive compatibility alone is not sufficient to guarantee voluntary participation. A critical follow-up question remains: does an author actually benefit from this mechanism, or could the process of score adjustment potentially harm their expected outcome compared to the current practice of using the original noisy scores? For instance, an author might fear that even with a truthful ranking, the resulting adjusted score~$\hat{\boldsymbol{R}}$ could, in expectation, yield a lower utility than the raw noisy score~$\boldsymbol{y}$. To secure participation, we must therefore show that the mechanism is not only truthful but also individually rational and beneficial. Theorem \ref{prop:k1} addresses this very concern by demonstrating that, for the important case of a single nomination (\emph{quota} of one), an author's expected utility is indeed improved (or at least not diminished) by the Isotonic Mechanism.
\begin{theorem}[ (Individual Rationality for Quota One)]
\label{prop:k1}
When the \emph{quota} is one, 
the author's expected utility from the Isotonic Mechanism is no less than the expected utility derived from the noisy score alone, i.e.:
\[
    \mathbb{E}\bigl[U\bigl(\hat{R}_{\pi^*(1)}\bigr)\bigr] 
    \;\ge\; 
    \mathbb{E}\bigl[U\bigl(y_{\pi^*(1)}\bigr)\bigr],
\]
assuming $U$ is only nondecreasing.
\end{theorem}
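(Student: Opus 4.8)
The plan is to reduce the statement to a purely deterministic (pathwise) inequality between $\hat R_{\pi^*(1)}$ and $y_{\pi^*(1)}$ valid for every realization of the noise, after which the assumption that $U$ is nondecreasing and the monotonicity of expectation close the argument. As in the proof of Lemma~\ref{lem:schur_truthful}, I would first reduce to the case $\pi^*=\mathrm{id}$ (the general case follows by applying the same argument to the permuted vector $\pi^*\circ\boldsymbol{y}$, since $\hat R_{\pi^*(1)} = \big[(\pi^*\circ\boldsymbol{y})^+\big]_1$ and $y_{\pi^*(1)} = (\pi^*\circ\boldsymbol{y})_1$). Thus it suffices to show $\mathbb{E}\big[U\big((\boldsymbol{y}^+)_1\big)\big] \ge \mathbb{E}\big[U(y_1)\big]$, where $\boldsymbol{y}^+$ is the projection of $\boldsymbol{y}=\boldsymbol{R}+\boldsymbol{\epsilon}$ onto $C = \{\boldsymbol r : r_1\ge r_2\ge\cdots\ge r_n\}$.

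The crux is the deterministic claim $(\boldsymbol{y}^+)_1 \ge y_1$ for every $\boldsymbol{y}$. I would obtain this from the variational characterization of the Euclidean projection onto a closed convex set: since $\boldsymbol{y}^+ = \arg\min_{\boldsymbol r\in C}\|\boldsymbol{y}-\boldsymbol r\|^2$, we have $\langle \boldsymbol{y}-\boldsymbol{y}^+,\, \boldsymbol r - \boldsymbol{y}^+\rangle \le 0$ for all $\boldsymbol r\in C$. Choosing $\boldsymbol r = \boldsymbol{y}^+ + \boldsymbol{e}_1$, where $\boldsymbol{e}_1$ is the first standard basis vector, is legitimate because increasing the already-largest coordinate keeps the vector nonincreasing, so $\boldsymbol{y}^+ + \boldsymbol{e}_1\in C$; this gives $\langle \boldsymbol{y}-\boldsymbol{y}^+, \boldsymbol{e}_1\rangle \le 0$, i.e. $y_1 - (\boldsymbol{y}^+)_1 \le 0$. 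Equivalently, one may invoke the classical max–min (pool-adjacent-violators) formula for isotonic regression, $(\boldsymbol{y}^+)_1 = \max_{1\le t\le n} \tfrac1t\sum_{i=1}^t y_i$, whose $t=1$ term is precisely $y_1$; either route yields the claim.

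Finally, because $U$ is nondecreasing, $(\boldsymbol{y}^+)_1 \ge y_1$ implies $U\big((\boldsymbol{y}^+)_1\big) \ge U(y_1)$ for every realization of $\boldsymbol{\epsilon}$; taking expectations over $\boldsymbol{\epsilon}$ gives the desired inequality $\mathbb{E}[U(\hat R_{\pi^*(1)})] \ge \mathbb{E}[U(y_{\pi^*(1)})]$. Worth noting: no exchangeability or other distributional hypothesis on the noise is used here, since the comparison holds pathwise. The only step I would take care to justify cleanly is that $\boldsymbol{y}^+ + \boldsymbol{e}_1 \in C$ (or, on the alternative route, the exact form of the max–min formula for the nonincreasing cone) — this is precisely the geometric fact that an author's top-ranked paper receives an adjusted score that is a running-average ``ceiling'' of the raw scores and hence never below its own raw score; everything else is routine.
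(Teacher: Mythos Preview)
Your proposal is correct and follows essentially the same strategy as the paper: reduce to $\pi^*=\mathrm{id}$, establish the deterministic inequality $(\boldsymbol{y}^+)_1 \ge y_1$ via the variational characterization of projection onto the isotonic cone, then apply monotonicity of $U$ and take expectations. The only difference is cosmetic: the paper first derives the cone orthogonality $(\boldsymbol{y}-\boldsymbol{y}^+)^\sT\boldsymbol{y}^+ = 0$ (using $\boldsymbol{w}=\boldsymbol{0}$ and $\boldsymbol{w}=2\boldsymbol{y}^+$) and then tests with $\boldsymbol{w}=\boldsymbol{e}_1$, whereas you test directly with $\boldsymbol{r}=\boldsymbol{y}^+ + \boldsymbol{e}_1$, which is a slightly shorter path to the same conclusion; your observation that no exchangeability hypothesis is needed here is also accurate.
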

 \begin{proof}
Without loss of generality, let the author's true ranking be the identity permutation, $\pi^* = \mathrm{id}$. The adjusted score for the top-ranked paper is then the first component of the vector $\boldsymbol{\hat{R}} = \boldsymbol{y}^+$, which is the Euclidean projection of the noisy score vector $\boldsymbol{y}$ onto the standard isotonic cone, $K = \{\boldsymbol{x} \in \mathbb{R}^n \mid x_1 \ge x_2 \ge \cdots \ge x_n\}$. The theorem's claim is thus $\mathbb{E}[U(y^+_1)] \ge \mathbb{E}[U(y_1)]$.

Since the function $U$ is nondecreasing, this inequality is a direct consequence of showing that $y_1 \le y^+_1$ for any vector $\boldsymbol{y} \in \mathbb{R}^n$. We prove this property below.

A fundamental property of a projection onto a convex set $K$ states that for any point $\boldsymbol{w} \in K$:
\begin{equation} \label{eq:proj_property} 
    (\boldsymbol{y} - \boldsymbol{y}^+)^\sT (\boldsymbol{w} - \boldsymbol{y}^+) \le 0.
\end{equation}
First, we establish that the error vector $(\boldsymbol{y} - \boldsymbol{y}^+)$ is orthogonal to the projection $\boldsymbol{y}^+$. Since $K$ is a cone, if $\boldsymbol{y}^+ \in K$, then both $\boldsymbol{0}$ and $2\boldsymbol{y}^+$ are also in $K$.  
\begin{itemize}
    \item Let $\boldsymbol{w} = \boldsymbol{0}$. Equation~\eqref{eq:proj_property} gives $(\boldsymbol{y} - \boldsymbol{y}^+)^\sT (-\boldsymbol{y}^+) \le 0$, which implies $(\boldsymbol{y} - \boldsymbol{y}^+)^\sT \boldsymbol{y}^+ \ge 0$.   
    \item Let $\boldsymbol{w} = 2\boldsymbol{y}^+$. Equation~\eqref{eq:proj_property} gives $(\boldsymbol{y} - \boldsymbol{y}^+)^\sT (\boldsymbol{y}^+) \le 0$.
\end{itemize}
Combining these two inequalities, we conclude that $(\boldsymbol{y} - \boldsymbol{y}^+)^\sT \boldsymbol{y}^+ = 0$.  

Substituting this orthogonality result back into the projection property in Equation~\eqref{eq:proj_property}, we obtain a simpler inequality:
\begin{equation}
    (\boldsymbol{y} - \boldsymbol{y}^+)^\sT \boldsymbol{w} \le 0 \quad \text{for all } \boldsymbol{w} \in K.
\end{equation}
Now, consider the specific vector $\boldsymbol{w}^* = (1, 0, \dots, 0)$. This vector belongs to the standard isotonic cone $K$ because its components are non-increasing. Setting $\boldsymbol{w} = \boldsymbol{w}^*$ yields:
\begin{equation}
    (\boldsymbol{y} - \boldsymbol{y}^+)^\sT \boldsymbol{w}^* = y_1 - y^+_1 \le 0.
\end{equation}
This directly implies $y_1 \le y^+_1$.

Since $y_1 \le y^+_1$ and $U$ is nondecreasing, it follows that $U(y_1) \le U(y^+_1)$. This holds for any realization of the noisy scores $\boldsymbol{y}$. Taking the expectation over the distribution of $\boldsymbol{y}$ preserves the inequality, which completes the proof.
\end{proof}

\section{Data Analysis of ICLR 2019---2023}\label{sec:expe}
 
In this section, we use publicly accessible review data from ICLR 2019---2023 \citep{ICLR2019b, ICLR2020, ICLR2021, ICLR2022, bertoCrawlVisualizeICLR2023a} to examine whether the utility function can be regarded as convex in practice. We interpret utility in terms of conditional selection probabilities,
\begin{equation}\label{equ:natu2}
    U(\boldsymbol{\hat{R}})\;\sim\;\sum_{i=1}^{k}\,
   \mathbb{P}\;\!\bigl(\text{Chosen as best papers}\;\big|\;\hat R_{(i)}\bigr),
\end{equation}
where the \emph{quota} $k$ is the maximum number of top papers an author may nominate.  
Because authors’ rankings are unavailable in the public data, we necessarily treat every accepted paper as if its authors had nominated it, i.e.\ we analyse the \textbf{unlimited-quota} regime, where the nomination budget satisfies $k$ exceeding the maximum number of papers by any individual author.   

All accepted papers are therefore pooled to fit a logistic model for $\mathbb{P}(\text{best} \mid \hat R)$, yielding empirical evidence that $U$ is convex and non-decreasing under this regime. This empirical confirmation underpins the \checkmark\ entry at the \textbf{unlimited-quota} column for the \textbf{Blind Case} in Table~\ref{tab:prac}. Further implementation details appear in Appendices~\ref{app:expe}.


%
%



\begin{figure*}[t]
    \centering
    \begin{subfigure}{.45\textwidth}
        \centering
        \includegraphics[width=1.2\textwidth]{figures/ICLR2021accept.pdf}
        \caption{ Acceptance Probabilities at ICLR 2021}
        \label{fig:probability_plot}
    \end{subfigure}
    \hfill
    \begin{subfigure}{.45\textwidth}
        \centering
        \includegraphics[width=1.2\textwidth]{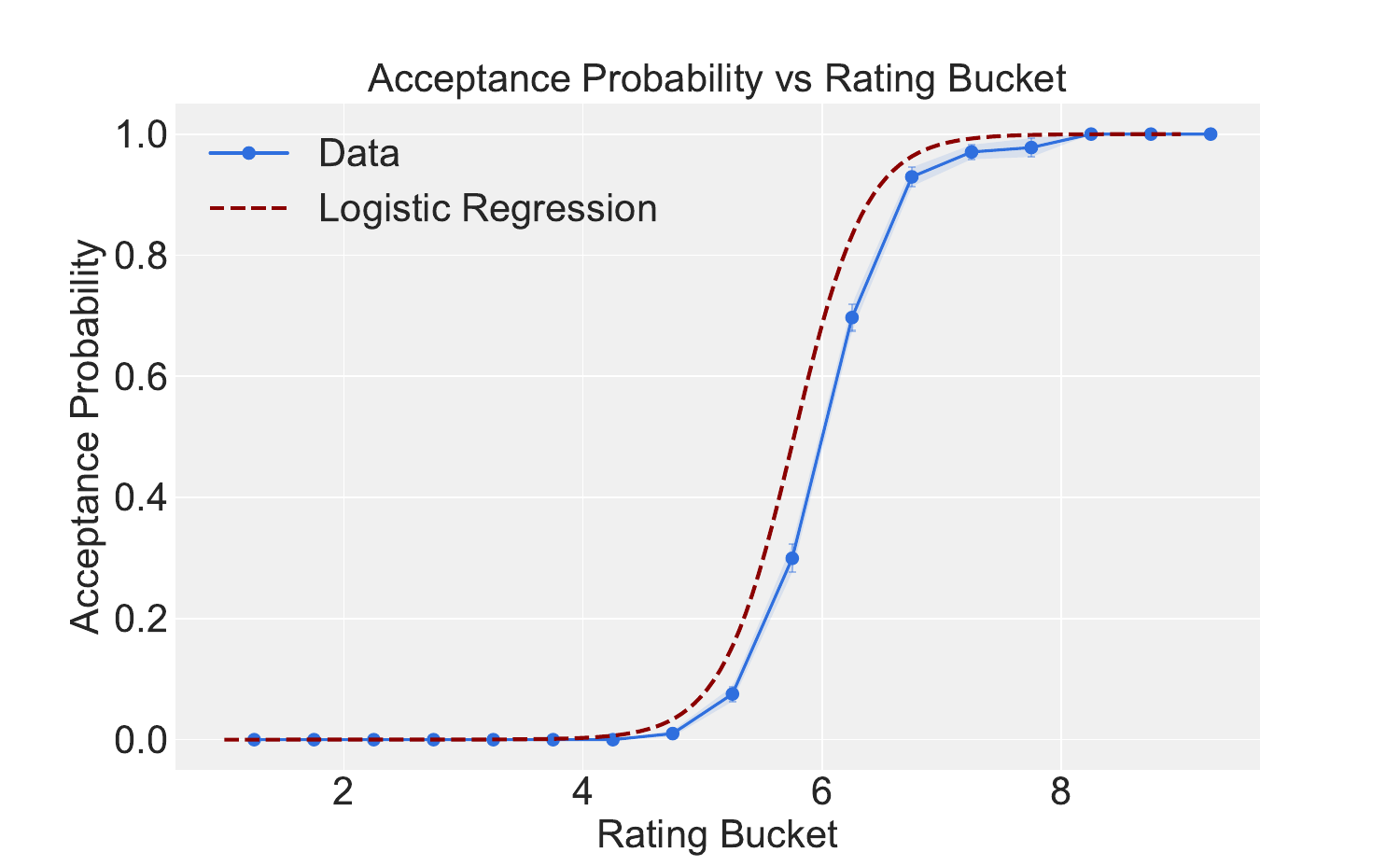}
        \caption{ Acceptance Probabilities at ICLR 2022}
        \label{fig:probability_plot2022}
    \end{subfigure}
    \caption{Comparison of Acceptance Probabilities at ICLR 2021 and ICLR 2022.}
    \label{fig:combined_acceptance_probabilitiesiclr}
\end{figure*}

\subsection{Data Analysis of Conditional Probability}
To empirically test our assumption about the convexity of the utility function, we first analyze the conditional probability of several related outcomes. Our analysis begins with the most fundamental decision in peer review: paper acceptance versus rejection. By establishing the shape of this probability curve, we can create a crucial baseline for comparison against the ``best paper''  selection scenario.

We start by illustrating our approach using the accept/reject decisions from ICLR 2021 and 2022, shown in Figure~\ref{fig:combined_acceptance_probabilitiesiclr}.

\paragraph{Analytical Approach.}
Our goal is to model the conditional probability $\mathbb{P}(\text{Accepted} \mid \text{Score})$. The independent variable is the average of all reviewer scores for a given paper. The dependent variable is the binary outcome (1 for accepted, 0 for rejected). We fit this relationship using a logistic regression model, a standard choice for modeling probabilities. The dashed orange line in the figures represents the fitted curve from this model. To handle the continuous nature of the scores and avoid instability from data sparsity at any single score-point, we group scores into discrete buckets. The jagged blue line shows the raw acceptance ratio within each bucket. This approach provides a stable, empirical basis for fitting our model. For the ICLR 2022 data, this logistic model proves to be a strong fit, achieving 90\% validation accuracy.

Across Figures~\ref{fig:combined}---\ref{fig:combined_spotlight_probabilities}, the plotted error bars report the binomial standard error of the mean (SEM), $\sqrt{p(1-p)/n}$, where $p$ is the empirical rate in a bucket and $n$ is the bucket size (number of papers) for that score range.

\paragraph{Results and Interpretation: The Saturation Effect.}
The primary takeaway from Figure~\ref{fig:combined_acceptance_probabilitiesiclr} is that the probability of acceptance follows a clear \textbf{sigmoidal (S-shaped) curve}. This shape reveals a critical ``saturation effect'' for high scores. 
At low scores, the probability of acceptance is near zero. As the score increases, the probability rises sharply, particularly in the middle range of scores (approximately 4.5 to 6.5), which can be interpreted as the region of greatest uncertainty for the acceptance decision.
Crucially, once a paper's score surpasses a certain threshold (e.g., a score of 7), the curve flattens and approaches 1.0. This saturation implies that further increases in the score yield diminishing returns for the probability of acceptance. A paper with a score of 8 is not much more likely to be accepted than a paper with a score of 7, because both are already safely above the acceptance bar. This non-convex behavior for high scores is a key characteristic of the standard acceptance process. As we will demonstrate, this stands in stark contrast to the best paper selection scenario.
\begin{figure}[htbp]
    \centering
    \includegraphics[width=0.8\textwidth]{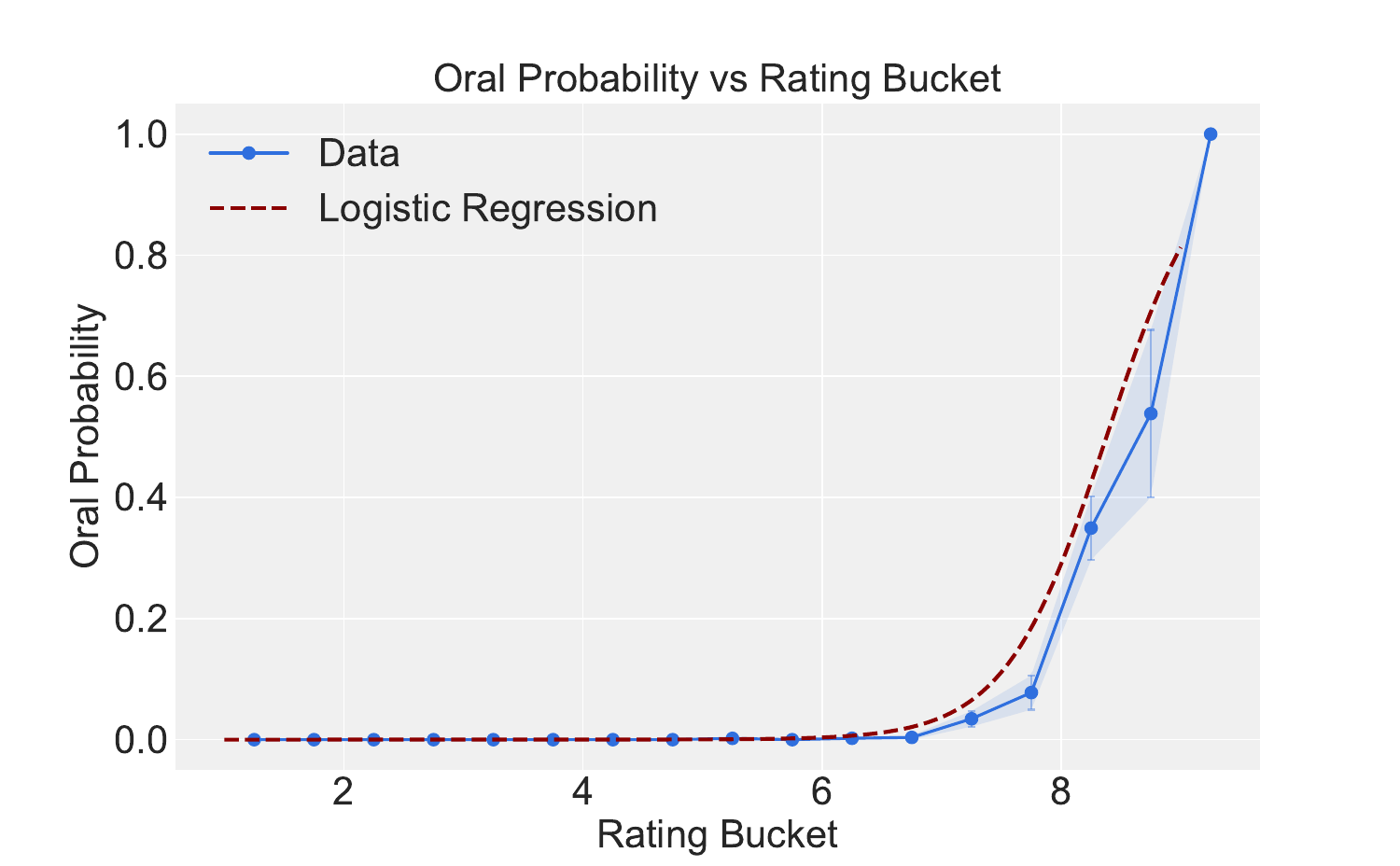}
    \caption{Oral presentation probability at ICLR 2022. The saturation point is visibly shifted to the right, occurring at much higher scores compared to the general acceptance curve. This illustrates how a more stringent selection criterion alters the probability distribution.}
    \label{fig:oral_plot1}
\end{figure}

 \label{morecomment}
Our analysis also reveals that the shape of the conditional probability curve changes systematically as we move up the hierarchy of paper outcomes. We posit that as the selection criterion becomes more stringent, the ``decision boundary'' shifts to higher scores, and the saturation effect that characterizes the general acceptance curve diminishes, eventually giving way to pure convexity for the top tier.

\paragraph{Intermediate Tiers: The Shifting Saturation Point.}
We first validate this hypothesis on an intermediate tier: Oral presentations. As shown in Figure~\ref{fig:oral_plot1} for ICLR 2022, the probability curve for being selected for an Oral presentation is still sigmoidal, but its saturation point has shifted significantly to the right compared to the general acceptance curve (Figure~\ref{fig:probability_plot2022}). The region of rapid probability increase now occurs at much higher scores (roughly 7.0---9.0). This indicates that the threshold for being a ``strong candidate'' for an Oral presentation is much higher. While saturation still occurs, it happens at the very top of the scoring range. This observation acts as a crucial bridge in our argument: if a higher bar for selection pushes the saturation point to the right, what happens when the bar is set at the absolute highest level, for the best paper award?

\paragraph{Methodology for Best Paper Analysis.}
Analyzing the probability of receiving a best paper award presents a statistical challenge: such awards are rare events, leading to sparse data in any single conference year. To create a robust dataset, we therefore combine data from all years of   ICLR data. This requires normalizing the various scoring systems to a consistent 0--9 scale (Figure~\ref{fig:bestlog}). Furthermore, we employ a non-uniform bucketing strategy, using finer divisions in the high-score regions where distinctions are most critical, and coarser divisions at the low end where best paper candidates are virtually non-existent.

\paragraph{Evidence for Convexity in Best Paper Selection.}
The analysis of this combined dataset reveals a starkly different pattern. As hypothesized, the probability curve for best paper awards does not saturate within the observable score range. Instead, it exhibits clear convexity, rising at an accelerating rate as scores increase. The formal test for this is the second derivative of the fitted logistic regression curve  (Figure~\ref{fig:second1}). A consistently positive second derivative confirms convexity. As our results show, this condition holds, providing strong empirical evidence that the utility function, when interpreted as the probability of winning a best paper award, is indeed convex.

This finding can be explained by   ``unbounded quality''. While review scores are capped at 10, the true quality of a groundbreaking paper is, for all practical purposes, unbounded. The 0--10 scale is insufficient to capture the vast differences between ``excellent'' papers and truly ``outstanding,'' field-defining work. The convex curve we observe is likely the initial, explosive growth phase of a probability distribution whose saturation point lies far beyond the artificial boundary of the scoring system.
 
\section{Data Analysis of NeurIPS 2021---2023}
\label{exp:NeurIPS} 
We conducted a similar analysis on NeurIPS 2021---2023 \citep{PaperCopilotNeurIPS,  PaperCopilotNeurIPSa, PaperCopilotNeurIPSb}. It can be observed that the results presented in the data are nearly identical with ICLR. This provides stronger evidence for our claims.

Figure \ref{fig:probability_plot_nips1} shows that the acceptance probability exhibits the same shape as ICLR (see Figure \ref{fig:probability_plot}) and reaches its ``saturation point'' around score 5. 
\begin{figure}[htbp]
    \centering
    \includegraphics[width=0.8\textwidth]{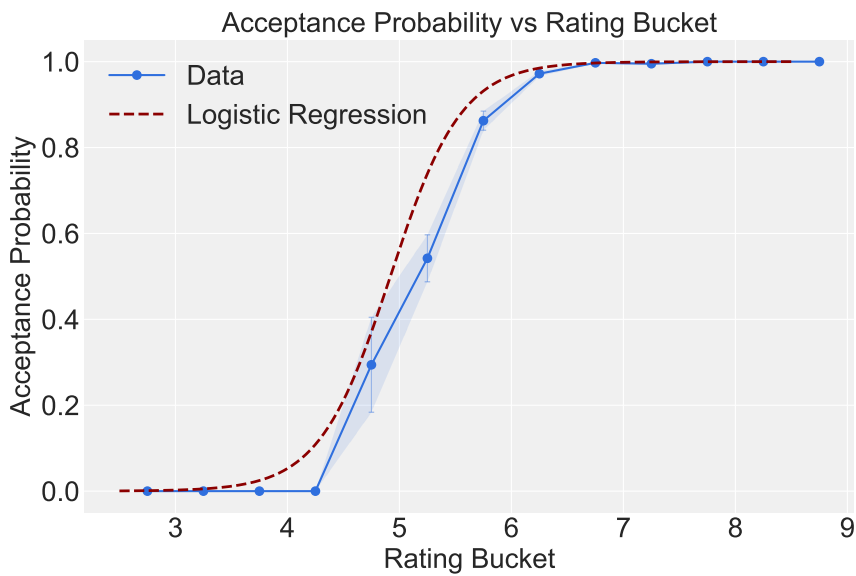}
    \caption{Acceptance probability at NeurIPS 2021. Similar to the ICLR data, this curve exhibits a clear saturation effect for high scores, demonstrating non-convexity in the standard acceptance regime.}
    \label{fig:probability_plot_nips1}
\end{figure}
Figures \ref{fig:oral_plot2}, \ref{fig:oral_plot3} show that there is  a significant shift in the saturation point for the conditional probability of the Spotlight (Highlighted) papers, and this curve has a longer interval on which the conditional probability exhibits convexity. It is of the same analysis of  Figure \ref{fig:oral_plot1}.
\begin{figure}[h]
  \centering

  \begin{subfigure}[b]{.45\textwidth}
    \centering
    \includegraphics[width=1.2\linewidth]{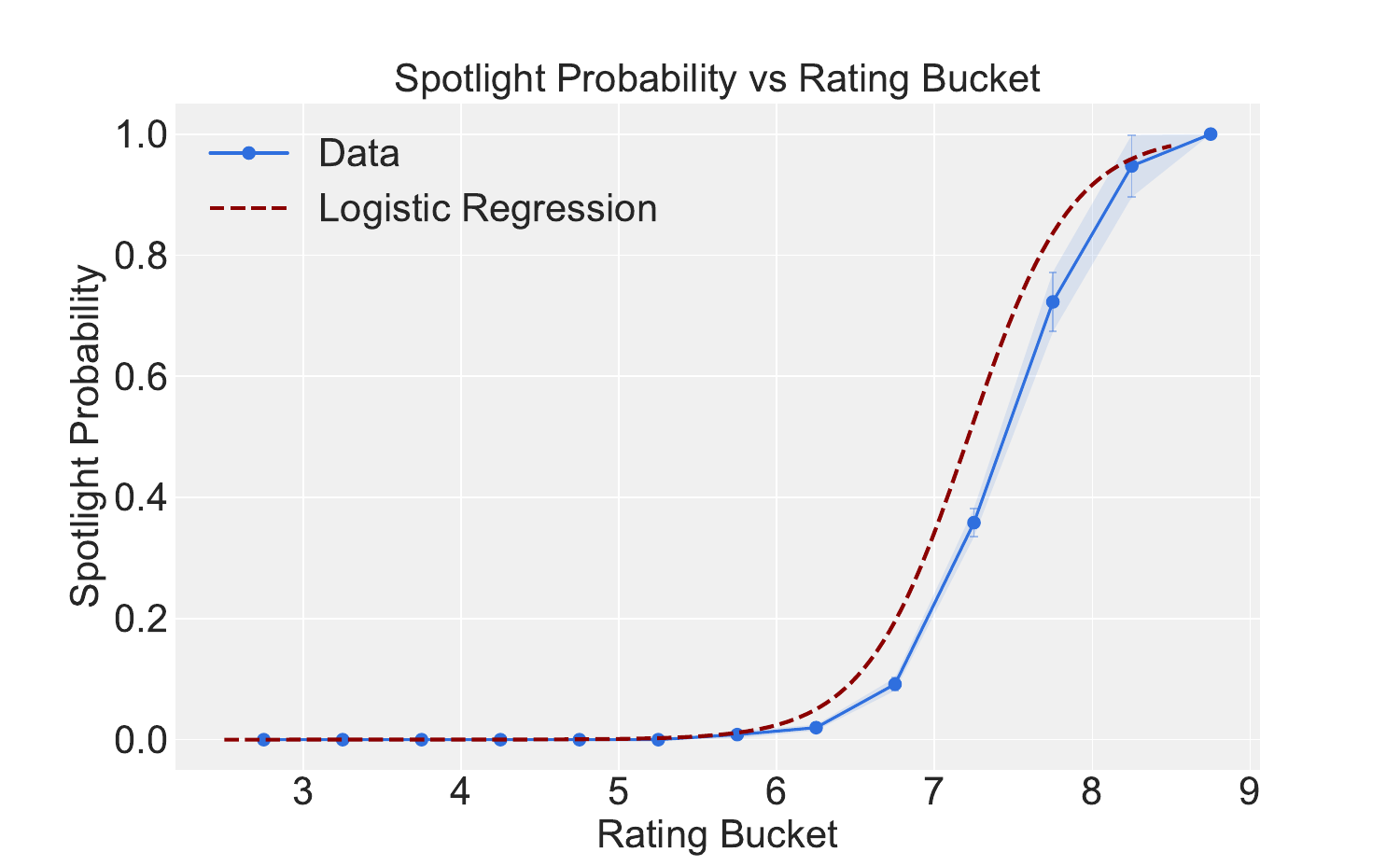}
    \caption{Spotlight (Highlighted) Probabilities of NeurIPS 2021}
    \label{fig:oral_plot2}
  \end{subfigure}
  \hfill
  \begin{subfigure}[b]{.45\textwidth}
    \centering
    \includegraphics[width=1.2\linewidth]{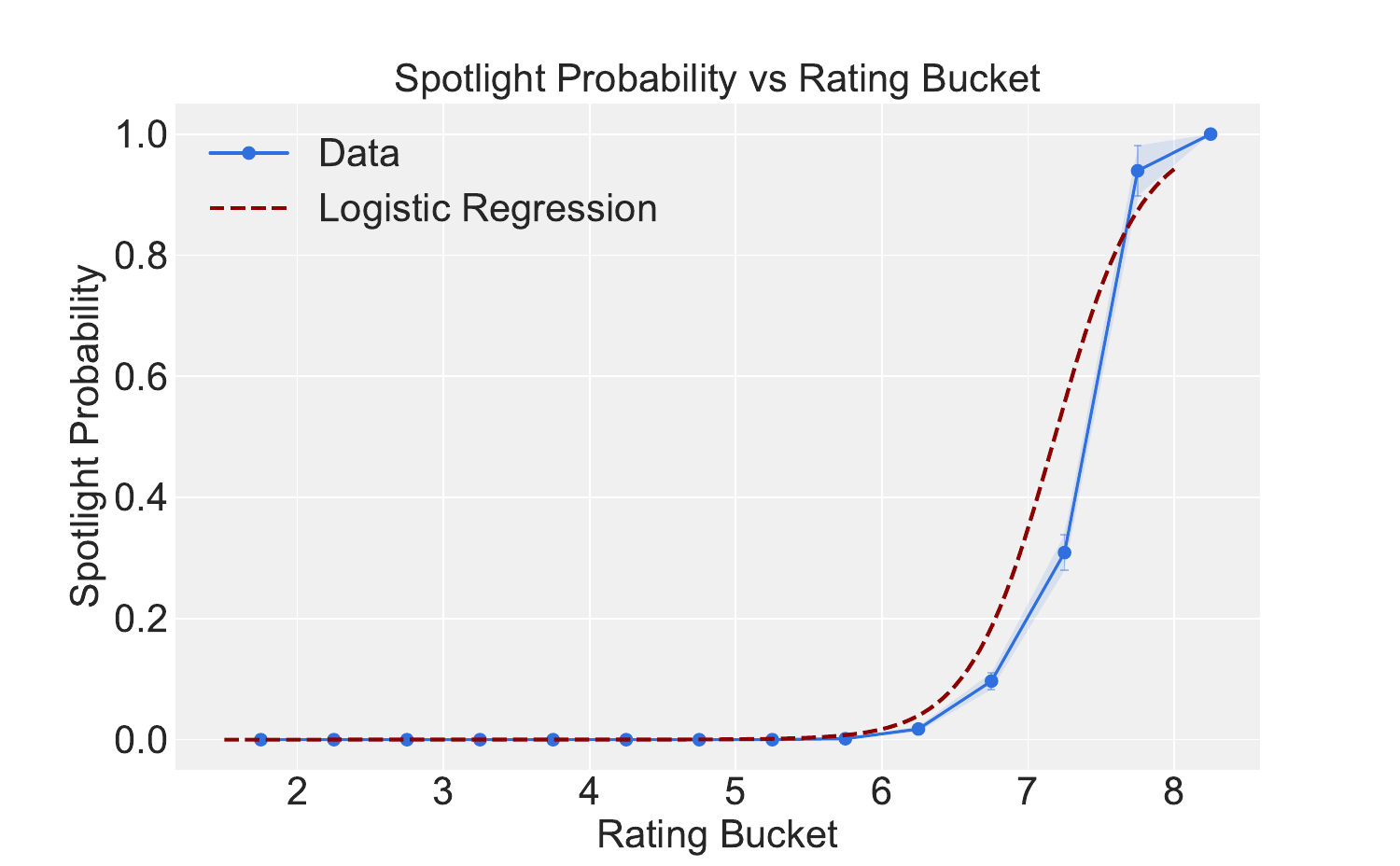}
    \caption{Spotlight (Highlighted) Probabilities of NeurIPS 2022}
    \label{fig:oral_plot3}
  \end{subfigure}

  \caption{Comparison of Spotlight (Highlighted) Probabilities for NeurIPS 2021 and 2022}
  \label{fig:combined_spotlight_probabilities}
\end{figure}

Finally, we shall also see a convex nature for the conditional probability concerning the ``best paper'' category in Figures \ref{fig:bestlog4}, \ref{fig:second2}. Such a convex behavior underscores the increasing likelihood of a paper being recognized as the ``best paper'' as its rating improves in NeurIPS 2021---2023.
\\\vskip -0.1in
\textbf{Remark: } To ensure the availability of sufficient data for our analysis, we have broadened the definition of ``best paper'' to encompass all paper awards. For instance, in the case of NeurIPS 2023 \citep{chairs2023AnnouncingNeurIPS20232023}, we included various categories under this term. This encompasses not just the traditional ``best paper'' award but also extends to ``Outstanding Main Track Papers'', ``Outstanding Main Track Runner-Ups'', and ``Outstanding Datasets and Benchmark Track Papers''.

\section{Simulation of the Best Paper Recommending Procedure}\label{prod}
Section \ref{sim} details a practical procedure for best paper selection, adapting the multi-author Isotonic Mechanism of \citep{wuIsotonicMechanismOverlapping2023} to the specific challenges of best paper awards. To this end, we introduce novel structural elements crucial for this context, namely the design of author \emph{quotas} and a distinction between \textbf{Blind} and \textbf{Informed} evaluation cases. 
We then evaluate the effectiveness of this new, context-aware procedure through simulation using synthetic data in Section \ref{synthetic}.

Real-world deployment is our ultimate objective, yet it cannot precede community endorsement: ground-truth paper quality is never directly observable, and access to the live review pipeline requires the field's explicit approval—an approval that hinges on publication. Before that, synthetic data allows us to vary authorship density, productivity dispersion, and noise in a controlled yet realistic fashion.

\subsection{Isotonic Mechanism under Multi-Ownership}\label{sim}

Let us assume an academic conference with $M$ authors and $N$ papers.
First, each paper $i \in [N]$ is assigned a noisy score $y_i = R_i + \epsilon_i$, where $R_i$ is the true score and $\epsilon_i$ reflects noise from the review process. For each author $j \in [M]$, let $I_j$ be the set of papers on which $j$ is a coauthor, and $\pi^j$ be the ranking the author provides for those papers.       

For any subset of papers $P_i \subset [N]$, denote by $\boldsymbol{y_{P_i}}$ the vector of noisy scores corresponding to $P_i$, and let $\pi^j_{P_i}$ be the restriction of $\pi^j$ to $P_i$. We then solve
\begin{equation}\label{equ:general15}
\begin{aligned}
    \min_{\boldsymbol{r}} \quad & \|\boldsymbol{y_{P_i}} - \boldsymbol{r}\|^2, \\
    \text{s.t.}\quad & \boldsymbol{r} \,\in\, S_{\pi_{P_i}^j},
\end{aligned}
\end{equation}
to obtain the isotonic solution $\boldsymbol{\hat{R}_{\pi_{P_i}^j}(y_{P_i})}$.
 
\noindent
\textbf{Overall Procedure:}
\begin{enumerate}
    \item All authors submit their papers and provide rankings for papers they coauthor. This yields an author set $[M]$, a paper set $[N]$, and the associated structures described above.
    \item A \emph{partition mechanism} is used to generate a 1-strong partition of the paper set:
    \begin{algorithm}[H]\label{alg:partition}
    \caption{Greedy Algorithm for a 1-Strong Partition \citep{suTruthfulOwnerAssistedScoring2022a,wuIsotonicMechanismOverlapping2023}}
    \begin{algorithmic}[1]
        \STATE \textbf{Input:} Ownership sets $\{I_j\}_{j \in [M]}$
        \STATE \textbf{Initialize} a partition $P = \emptyset$ and a set of selected papers $I = \varnothing$.
        \WHILE{$I \neq [N]$}
            \STATE Identify the largest set $P^* = I_{j^*} \setminus I$, where $j^* = \operatornamewithlimits{argmax}_{j \in [M]} |I_j \setminus I|$.
            \STATE Update $I \leftarrow I \cup P^*$ and $P \leftarrow P \cup \{P^*\}$.
        \ENDWHILE
        \STATE \textbf{return} $P$.
    \end{algorithmic}
    \end{algorithm}
    \item Given the partition $P = \{P_1,P_2,\cdots, P_l\}$, we determine $$T_i :=\{\textrm{authors who completely own } P_i\}.$$ Then, for each author $j\in T_i$, we get $\boldsymbol{\hat{R}_{\pi_{P_i}^j}(y_{P_i})}$ according to \eqref{equ:general15}. We use \begin{equation}
	\sum_{j \in T_i}\frac{1}{|T_i|}\boldsymbol{\hat{R}_{\pi_{P_i}^j}(y_{P_i})},
\end{equation}as the final adjusted scores for papers in the partition $P_i$. Run for all partitions and we get the final adjusted scores $\boldsymbol{\hat{R}}$.
    \item Finally, select the \emph{quota} of best papers (per author) for consideration, using authors’ own rankings. Decision-makers then follow one of the two approaches below:
    
       \noindent \textbf{Blind Case:} In this case, the decision-makers shall just review all those selected papers, given their original scores $\boldsymbol{y}$ and the adjusted scores $\boldsymbol{\hat{R}}$, unaware of each paper's relative ranking.

      \noindent  \textbf{Informed Case:} In this case, the decision-maker has full knowledge of the authors' relative rankings of the $k$ papers they have recommended, and also given those papers' original scores $\boldsymbol{y}$ and the adjusted scores $\boldsymbol{\hat{R}}$.  Consequently, the decision-maker can prioritize reviewing papers that are ranked higher. For instance, the decision-maker might commence by reviewing papers that are ranked first by certain authors. If the selection process for the best papers isn't complete after this, they would then review papers that are second in rankings of some authors, continuing this pattern until all the best papers are chosen. By employing this method, the decision-maker can possibly focus their efforts and save time. Additionally, this approach can be seen to approximately align with the assumptions of our theorem \ref{thm:betteru}, ensuring that authors remain truthful.
\end{enumerate}

\begin{table}[t]
    \caption{Practical application conditions for the Isotonic Mechanism. A check mark (\checkmark) indicates applicability under verified assumptions, while a smiley face (\smiley) indicates that no convexity assumption is required. The \emph{quota} is denoted by $k$. Specifically, \smiley\ corresponds to the condition that $U$ (or $U_1$) is nondecreasing, and \checkmark\ corresponds to the condition that $U$ is convex and nondecreasing. The \textbf{unlimited-quota} case implies the \emph{quota} is larger than any author's paper count (i.e., $k \ge n$), as discussed in Section \ref{sec:expe}.}
    \label{tab:prac}
    \centering
    \begin{small}
 \begin{tabular}{cccc}
    \toprule
     & $k$ = 1 & $2 \le k \le n-1$ & \textbf{unlimited-quota} \\
    \midrule
    \textbf{Blind Case} & \smiley & $U$ convex nondecreasing & \checkmark \\
    \midrule
    \textbf{Informed Case} & \smiley & $U_{k}$ convex nondecreasing, $U_{k}' \ge U_{k+1}'$ & $U_{k}$ convex nondecreasing, $U_{k}' \ge U_{k+1}'$ \\
    \bottomrule
    \end{tabular}
    \end{small}
\end{table}

We emphasize two practical points for real-world deployment.  
First, the procedure never disadvantages single-paper authors: an author with only one submission simply nominates that paper for best paper consideration. The adjusted scores merely serve as a recommendation, ensuring no penalty for single-paper submissions.
Second, the greedy partition algorithm yields several 1-strong blocks, ensuring that each author’s ranking is processed in isolation.  Because rankings from different blocks are not merged, this design sidesteps the co-author conflict issue discussed by \citep[Sec.\,3.2]{wuIsotonicMechanismOverlapping2023}.

Along with the \textbf{Assumption 1}, and the relaxed \textbf{Assumption 2} as demonstrated above, papers \citep{suTruthfulOwnerAssistedScoring2022a, wuIsotonicMechanismOverlapping2023} and this paper give the theoretical guarantee for this whole procedure to work in realistic cases. Note that, we recommend the \textbf{Blind Case} since it is also difficult to validate the conditions of the \textbf{Informed Case} in Theorem \ref{thm:betteru} (this is also a possible future direction to explore). Lastly, we want to highlight another assumption that is subtly implied by our models:
\begin{assumption}
    Every paper has a true score $R$, shared among all of its authors, and each author knows the true scores of the papers they coauthor.
    \label{asp:score}
\end{assumption}

\subsection{Experimental Evaluation}
\label{synthetic}

Having established the incentive compatibility of the Isotonic Mechanism, we now evaluate its practical effectiveness. This section investigates whether the mechanism can leverage authors' honest rankings to improve the selection of high-quality papers. To this end, we conduct a series of experiments on synthetic conference data where ground-truth paper quality is known. To determine the efficacy of different selection techniques, we measure the average quality of the papers they select. Our methodology, detailed in Appendix~\ref{expsetup}, assumes that co-authors know their paper's true quality score, in alignment with \textbf{Assumption~\ref{asp:score}}.

Our simulation pipeline consists of three core components: an authorship network, a paper quality model, and a review score generator. We explore two distinct models for the \textbf{authorship network}. The \emph{Uniform} model represents a dense and evenly distributed network, creating an ideal scenario for collaboration. Conversely, the \emph{ICLR} model is an exact replica of the ICLR 2021 co-authorship network \citep{ICLR2021}, which allows us to emulate the sparsity and heterogeneity of a real-world setting.

We also use two approaches for modeling \textbf{paper quality}. In the \emph{standard} approach, a paper's quality is determined by the maximum quality score among its authors, with additional random noise. In the \emph{productivity-weighted} approach, authors also receive a small quality bonus based on their total number of submissions, reflecting the idea that prolific researchers often gain experience and produce higher-quality work. Finally, the \textbf{review scores} are created by adding i.i.d. normally distributed noise to each paper's true quality score. 

In our experiments, we task the selection mechanisms with identifying the top $F$ papers, testing scenarios for both a single winner ($F=1$) and multiple winners ($F=10$). The author \textbf{quota}, $k$, determines the set of papers eligible for an award. 
Different choices of the $k$ (\emph{quota}) value lead to different sets of top papers. Although $k=1$ already identifies the papers all authors consider their best, theoretically, varying $k$ could have a slight impact on selecting the best paper, especially in edge cases (and best-paper selection should respect edge cases; see Appendix~\ref{app:edge}). But, as seen in the full data in Appendix~\ref{expsetup}, this impact is not substantial. It's also important to note that in our truthfulness Table~\ref{tab:prac}, $k=1$ has the strongest theoretical guarantees. Therefore, the scenario with $k=1$ is actually the most important, and we present results primarily for this case, relegating other values of $k$ to Appendix~\ref{expsetup}.

With the synthetic data generated, we must now define how to use the Isotonic Mechanism's output to select top papers. It is important to clarify that the mechanism's primary role is to offer more accurate paper scores by leveraging truthful author rankings. It does not, however, dictate a specific method for selecting the ``best'' paper. In practice, this decision often goes beyond mere scores, encompassing qualitative factors and a thorough review of the candidate papers. Therefore, for our evaluation, we will define several concrete selection protocols to benchmark the effectiveness of using these adjusted scores.

To this end, we define and evaluate the following three selection protocols:

\begin{itemize}
    \item \textbf{Benchmark}: Selects the $F$ papers with the highest raw review scores (that is, the noisy scores), ignoring all information from the Isotonic Mechanism.

    \item \textbf{Blind}: First, this protocol partitions the authorship network and runs the Isotonic Mechanism to generate adjusted scores. It then creates a candidate pool of all papers ranked within the top $k$ by at least one of their authors. From this pool, it selects the $F$ papers with the highest adjusted scores. As the name suggests, this protocol operates under the \textbf{Blind Case}, since the decision-maker does not use the specific author rankings, only the final scores.

    \item \textbf{Informed, Max}: This protocol begins with the same steps as the Blind Protocol. However, it sorts the candidate papers primarily by their worst numerical rank received from any co-author, in increasing order. Ties are then broken using the adjusted scores. Finally, it selects the top $F$ papers from this sorted list. This protocol operates under the \textbf{Informed Case}, as it explicitly uses the ranks provided by authors in its selection logic.
\end{itemize}

The distinction between the \textbf{Blind} and \textbf{Informed} protocols is critical. The Blind Protocol is straightforward and robust, relying only on the final adjusted scores. In contrast, the theoretical guarantees for the \textbf{Informed Case} hinge on a utility function with a specific asymmetric structure, where higher-ranked papers are assumed to provide qualitatively greater utility (as detailed in Table~\ref{tab:prac}).

Designing a practical selection protocol that perfectly mirrors this theoretical asymmetry is a significant challenge. Our \textbf{Informed, Max} protocol represents one such attempt, specifically designed to prioritize papers that all co-authors unanimously rank highly. As our results will demonstrate, this approach highlights the complexities of the \textbf{Informed Case}; its strictness can be a liability in certain network structures (although it may also yield benefits in others). Given these design challenges, we recommend the \textbf{Blind}  Protocol for its fairness and robust performance in practical deployment. Developing more sophisticated mechanisms for the \textbf{Informed Case} remains a promising direction for future research.

To evaluate the performance of these protocols, we use a normalized quality score: the average ground-truth quality of the $F$ papers chosen by a protocol, divided by the average quality of the actual top $F$ papers in the synthetic dataset. Figures~\ref{fig:combined_iclr_methods}, \ref{fig:combined_uniform_methods}  and Figures \ref{fig:bestlog21}--\ref{fig:second12} plot the mean of this metric with error bands  showing $\pm 1$ standard error of the mean (SEM), where SEM is estimated from 200 independent simulation rounds.

Before presenting the results, it is worth noting the mechanism's broad impact. The rankings provided by prolific ``hub'' authors, who have many submissions, induce long total orders that allow the mechanism to adjust a large number of scores. In our ICLR 2021 network replication, for instance, roughly \textbf{34\%} of all review scores were modified, indicating that the mechanism has a substantial footprint on the final data (and this proportion will rise as collaboration becomes more common).

\paragraph{Results.}

Our experiments show that the \textbf{Blind} Protocol consistently and significantly outperforms the benchmark across all tested settings. This performance improvement becomes more pronounced as the variance of the review error increases, highlighting the mechanism's value in noisy environments. The gains are larger in the dense, Uniform network than in the sparser ICLR network, which is expected since denser collaboration networks provide more ranking information to the mechanism. Furthermore, the improvement is amplified in the productivity-weighted model. This is likely because it concentrates high-quality papers among prolific authors, whose comprehensive rankings allow the mechanism to more effectively adjust the scores of top-tier papers. Crucially, a significant improvement remains even in the standard ICLR network, demonstrating the robust benefits of the \textbf{Blind} Protocol.

The \textbf{Informed, Max} exhibits more context-dependent performance. Although it is often successful, its effectiveness can degrade in specific scenarios. Notably, within the ICLR network combined with a productivity bonus and for a larger award pool ($F=10$), its performance can fall below the benchmark. This outcome is analyzed in detail in Appendix~\ref{expsetup}.

\paragraph{Discussion.}

The divergent performance of the \textbf{Blind} and \textbf{Informed} protocols offers a key insight: there is a trade-off between theoretical fidelity and practical robustness. The \textbf{Informed} Protocol was designed to more faithfully adhere to the theoretical requirement for an asymmetric utility function. However, its reliance on a strict, rank-based sorting rule makes it brittle; it can be overly sensitive to the complex co-authorship patterns found in real-world networks.

The \textbf{Blind} Protocol, in contrast, proves to be a more resilient strategy. It uses author rankings solely to refine the accuracy of the scores, leaving the final, simple selection to be based on these improved scores. This simpler approach is consistently effective across all our simulated conditions. Therefore, we recommend the \textbf{Blind Protocol} for practical implementation due to its strong and reliable performance. The design of more sophisticated and adaptive protocols for the \textbf{Informed Case}—ones that can leverage full ranking information without sacrificing robustness—presents a compelling avenue for future research.

\begin{figure}[h]
  \centering

  \begin{subfigure}[b]{.45\textwidth} 
    \centering
    \includegraphics[width=1.2\linewidth]{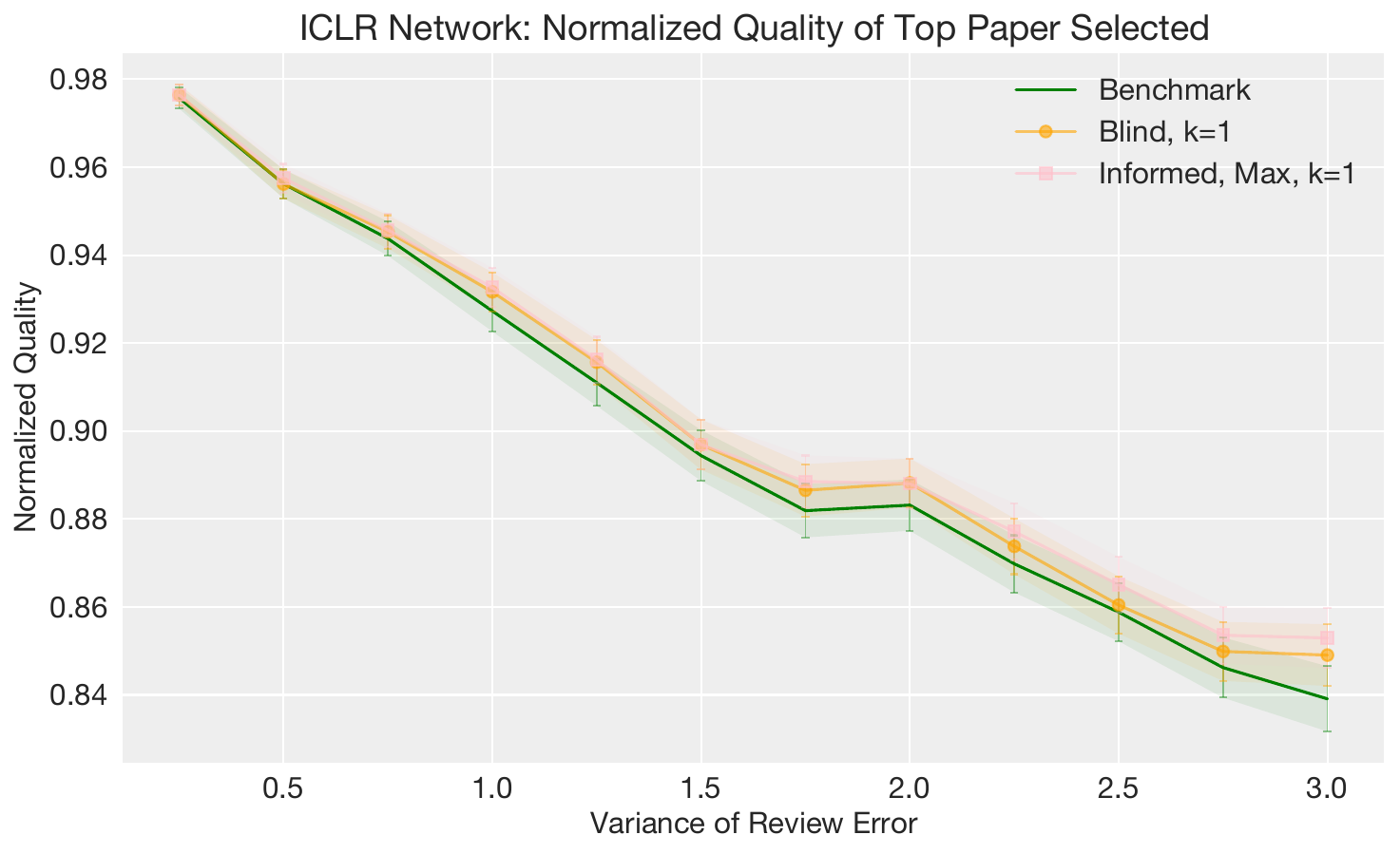}
\caption{Single best paper selection comparison, ICLR Network}
    \label{fig:n1ICLRsimple}
  \end{subfigure}
  \hfill 
  \begin{subfigure}[b]{.45\textwidth} 
    \centering
    \includegraphics[width=1.2\linewidth]{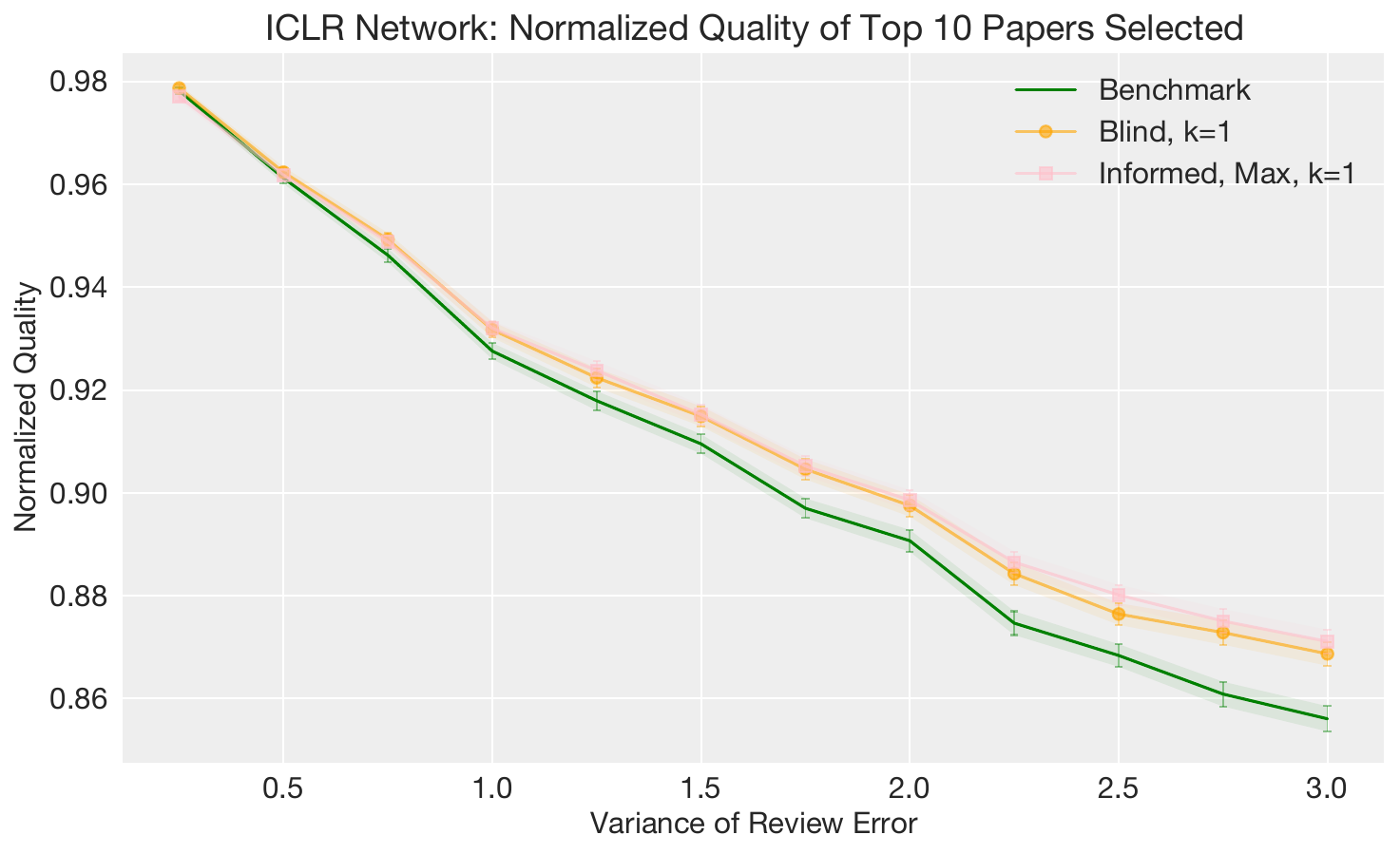}
\caption{10 best papers selection comparison, ICLR Network}
    \label{fig:n10ICLRsimple}
  \end{subfigure}

  \vspace{1em} 
  
  \begin{subfigure}[b]{.45\textwidth}
    \centering
    \includegraphics[width=1.2\linewidth]{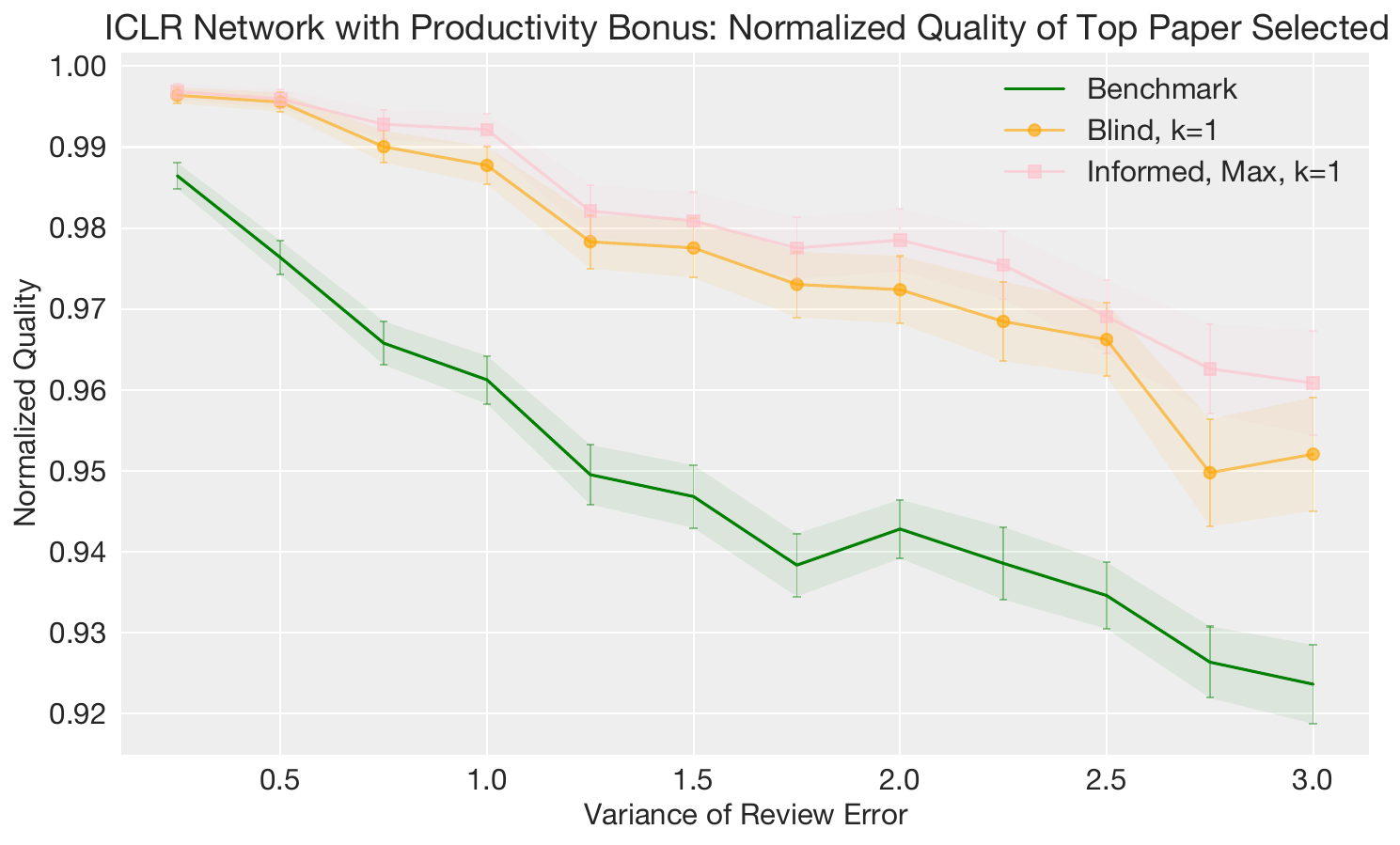}
\caption{Single best paper selection comparison, ICLR Network with Productivity Bonus}
    \label{fig:n1ICLRprodsimple}
  \end{subfigure}
  \hfill 
  \begin{subfigure}[b]{.45\textwidth}
    \centering
    \includegraphics[width=1.2\linewidth]{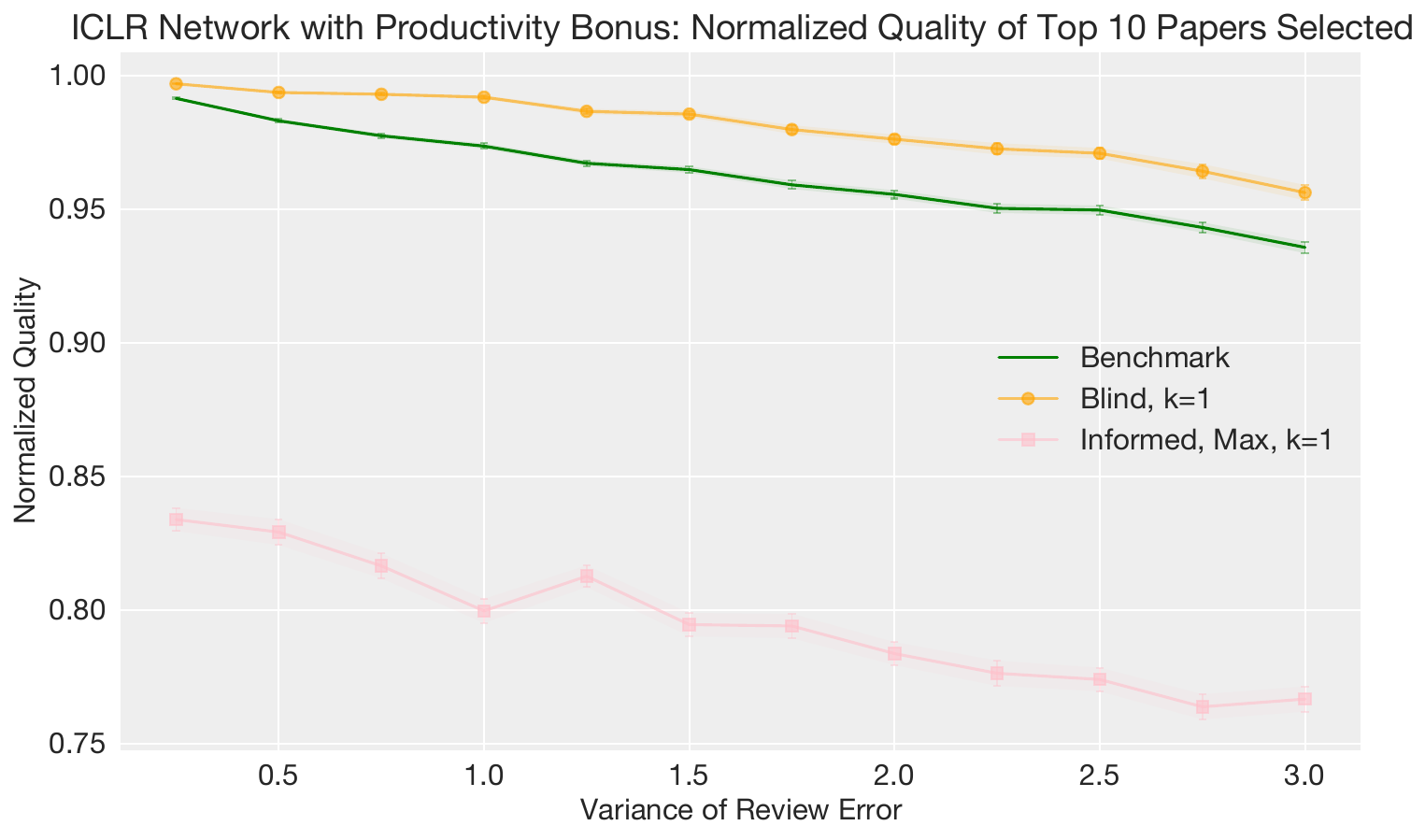}
   \caption{10 best papers selection comparison, ICLR Network with Productivity Bonus (see Appendix~\ref{note})}
    \label{fig:n10ICLRprodsimple}
  \end{subfigure}

\caption{Performance comparison of paper selection methods in ICLR Network. Shaded bands denote $\pm 1$ standard error of mean.} 
  \label{fig:combined_iclr_methods}
\end{figure}

\begin{figure}[h]
  \centering

  \begin{subfigure}[b]{.45\textwidth}
    \centering
    \includegraphics[width=1.2\linewidth]{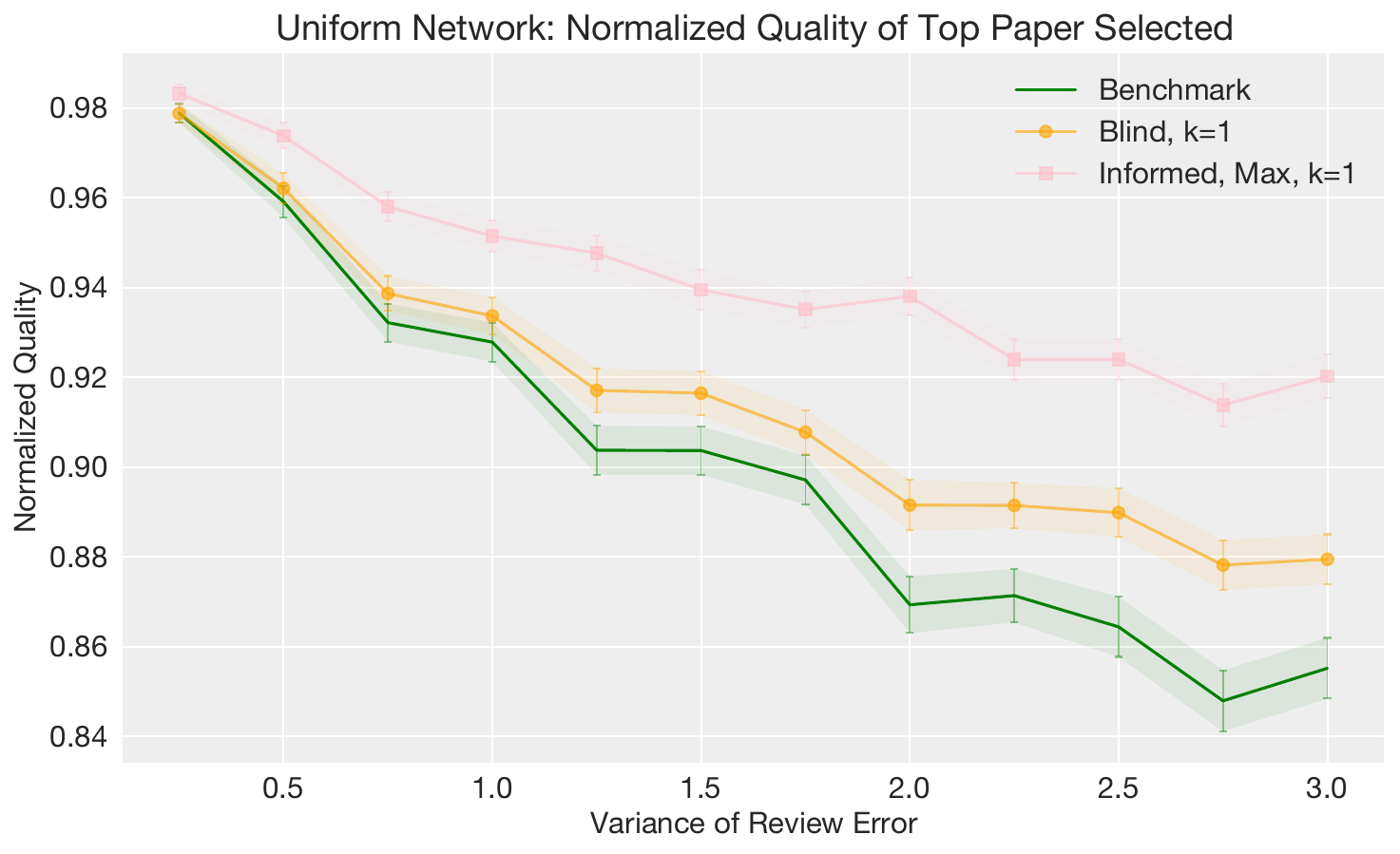}
    \caption{Single best paper selection comparison, Uniform Network}
    \label{fig:n1uniformsimple}
  \end{subfigure}
  \hfill 
  \begin{subfigure}[b]{.45\textwidth}
    \centering
    \includegraphics[width=1.2\linewidth]{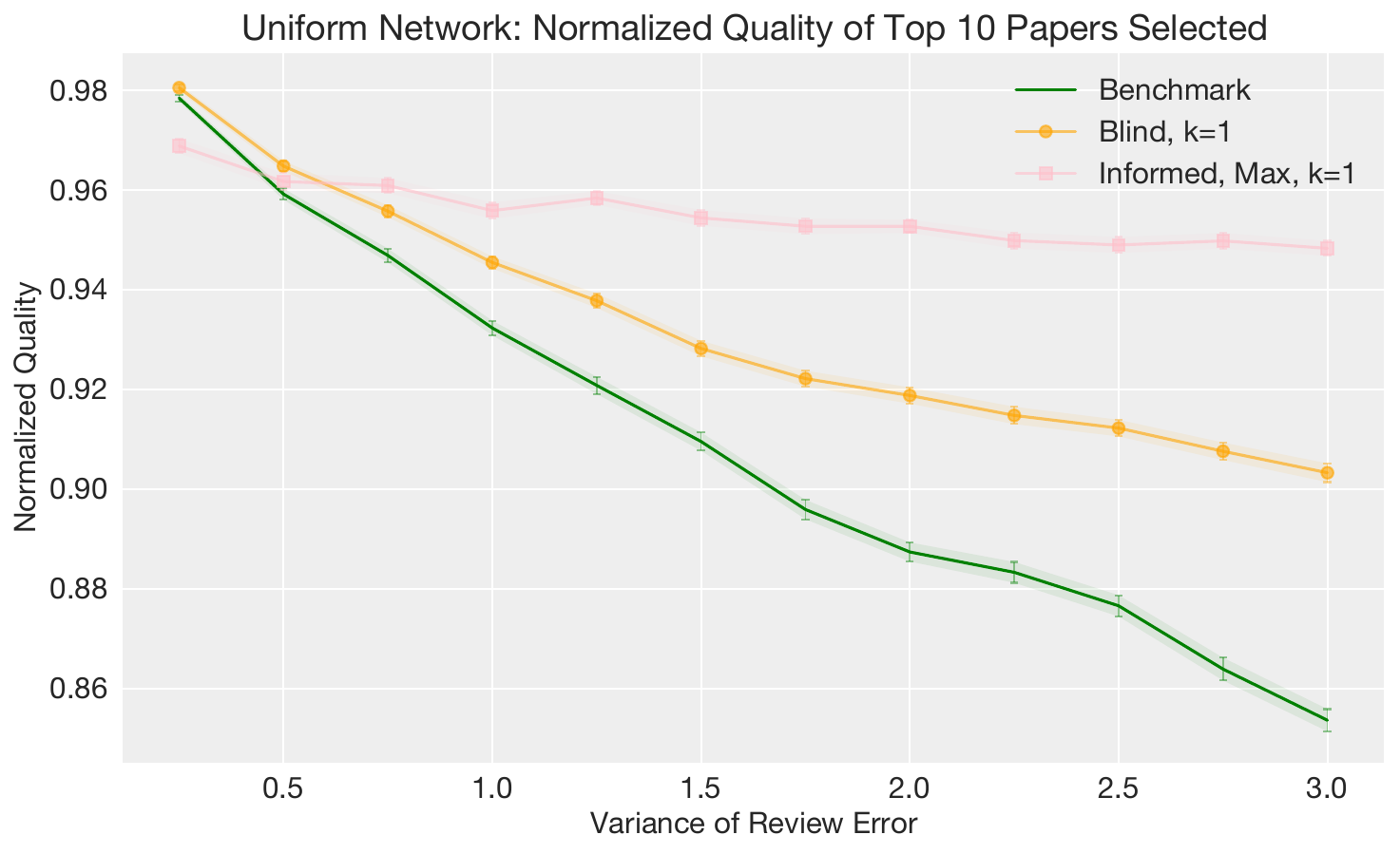}
    \caption{10 best papers selection comparison, Uniform Network}
    \label{fig:n10uniformsimple}
  \end{subfigure}

  \vspace{1em} 
  
  \begin{subfigure}[b]{.45\textwidth}
    \centering
    \includegraphics[width=1.2\linewidth]{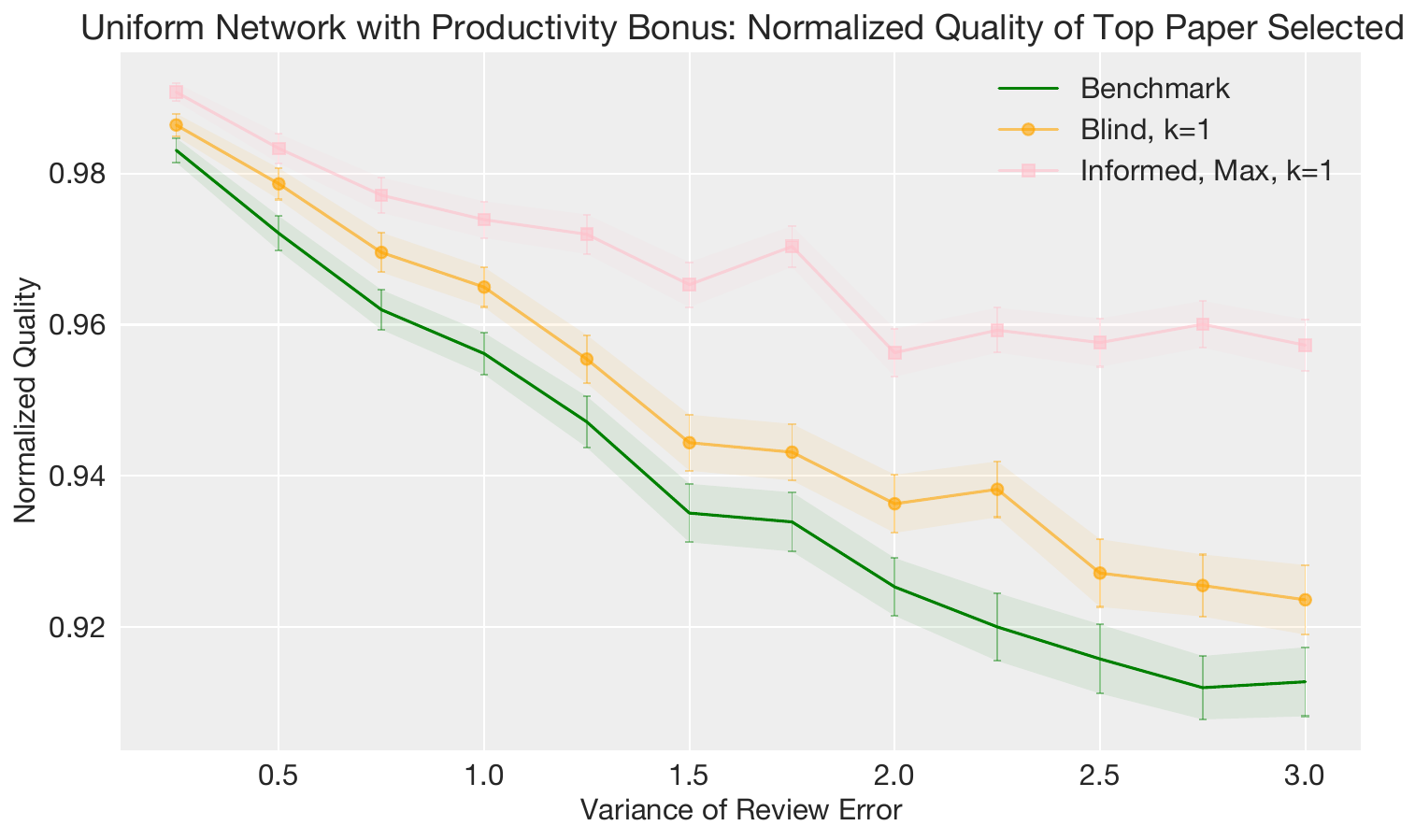}
    \caption{Single best paper selection comparison, Uniform Network with Productivity Bonus}
    \label{fig:n1uniformprodsimple}
  \end{subfigure}
  \hfill 
  \begin{subfigure}[b]{.45\textwidth}
    \centering
    \includegraphics[width=1.2\linewidth]{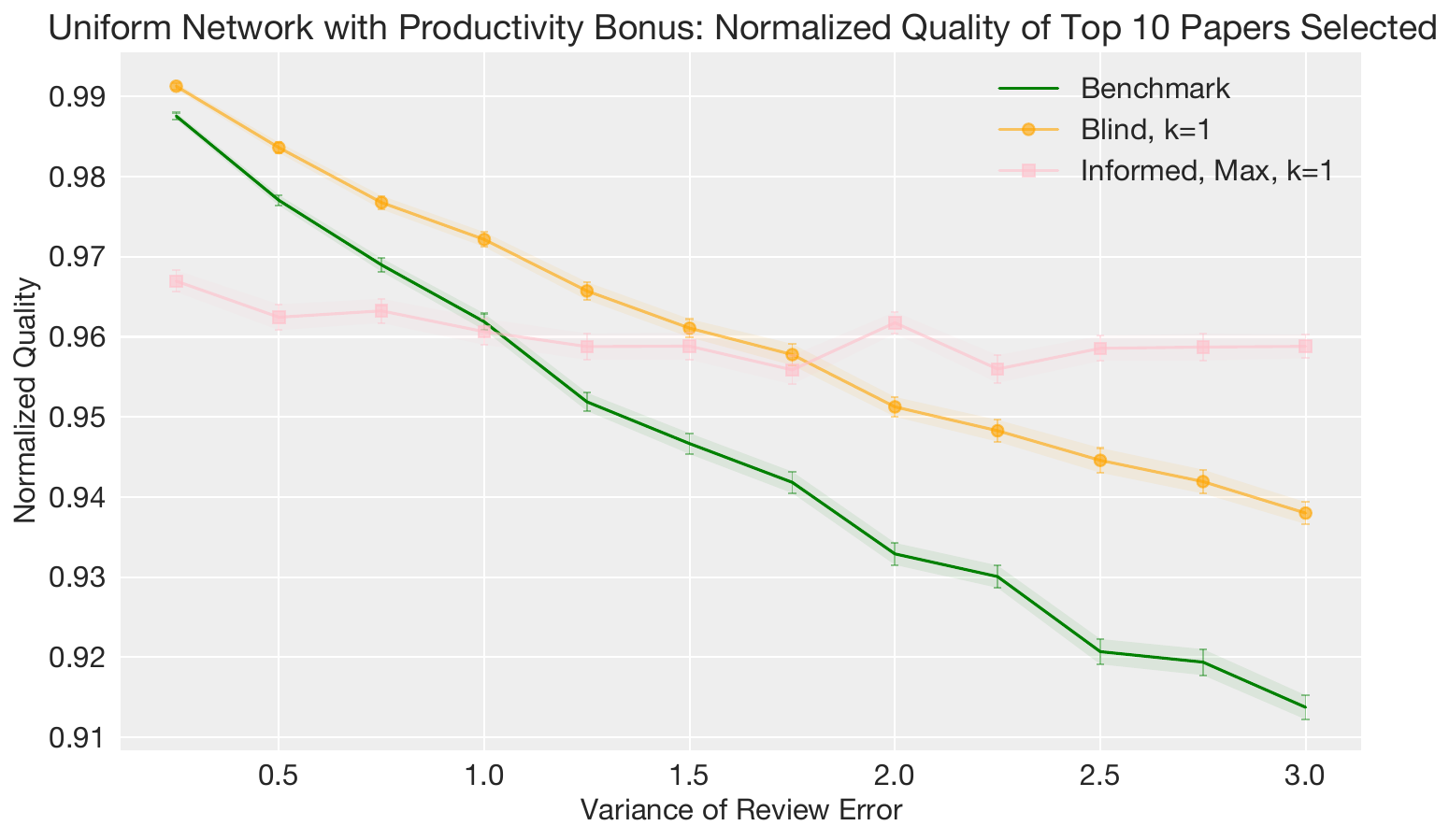}
    \caption{10 best papers selection comparison, Uniform Network with Productivity Bonus  (see Appendix~\ref{note})}
    \label{fig:n10uniformprodsimple}
  \end{subfigure}

  \caption{Performance comparison of paper selection methods in Uniform Network.}
  \label{fig:combined_uniform_methods}
\end{figure}

\section{Conclusions}
\label{sec:conclusion}

In response to the growing challenges of maintaining a fair and rigorous peer review process, this paper develops an author-assisted, incentive-compatible mechanism for selecting best paper awards. Our approach enhances the Isotonic Mechanism framework to be more robust and practical for real-world deployment by carefully relaxing and empirically validating its convexity assumptions. Our result is summarized in Table~\ref{tab:prac}.
 The resulting procedure not only demonstrates substantial gains in selection quality in simulations but also offers a transparent, verifiable, and readily applicable solution to improve a critical aspect of peer review. Below, we detail our primary contributions in relation to prior work before discussing limitations and future directions.

\subsection{Contributions in Relation to Prior Work}
\label{sec:contributions}

While our work builds upon the Isotonic Mechanism framework introduced by \citep{suTruthfulOwnerAssistedScoring2022a} and extended to overlapping authorship by \citep{wuIsotonicMechanismOverlapping2023}, it offers   key contributions that advance its theoretical underpinnings and practical applicability.

First, and most centrally, we identify a crucial setting where the convexity assumption required by prior work \citep{suTruthfulOwnerAssistedScoring2022a, wuIsotonicMechanismOverlapping2023, yanIsotonicMechanismExponential2023a} can be relaxed to mere monotonicity in the context of single best paper nomination (\emph{quota} of $1$). While the proof itself may appear intuitive, the conceptual leap it represents is nontrivial. Convexity is a much stronger and harder-to-verify assumption in subjective domains like peer review. By demonstrating that the mechanism's guarantees hold under the far milder and more defensible requirement of monotonicity, our results substantially lower the barriers to real-world adoption, making the mechanism more robust and readily applicable.

Second, we tailor the Isotonic Mechanism to the specific context of best paper awards. This involves introducing and analyzing unique structural elements, such as the design of \emph{quotas} and the critical distinction between the \textbf{Blind} and \textbf{Informed} cases. These considerations are vital for understanding implementation challenges in this new domain and are not addressed in the prior literature.

Finally, we provide new empirical validation for this specific use case. We conduct dedicated experiments, including on the real-world ICLR 2021 co-authorship network, to evaluate the mechanism's effectiveness in the best-paper selection setting (Section~\ref{prod}). Furthermore, we show that under the \textbf{unlimited-quota} case, the convexity assumption is indeed plausible and broadly supported by real conference data (Section~\ref{sec:expe}).
 
\paragraph{Generality Beyond Peer Review.}
In many domains, insiders (creators, managers, or applicants) possess high-fidelity ordinal information about the items they manage. 
Crucially, our theoretical results demonstrate that when the objective is restricted to \textbf{selecting the top-tier items} (as in best paper awards), the   convexity requirement on utility functions can be relaxed to mere monotonicity while improving selection accuracy.
This insight suggests that our framework is general enough to be deployed in varied high-stakes environments where identifying the best is the primary goal:

\begin{itemize}
    \item \textbf{Grant and Fellowship Allocation:} Applicants or research institutions often submit multiple proposals to funding bodies. These entities typically possess reliable relative assessments of their own submissions (knowing which proposal is their safest versus highest risk), while external review panels face high variance and heterogeneity.
    \item \textbf{Corporate Performance and Hiring:} In corporate settings, managers usually rank projects, teams, or internal proposals they oversee, providing a clear signal of relative merit even when absolute performance metrics are uncertain or difficult to calibrate across departments. Similarly, in hiring processes, recruiters or internal referrers often have partial but informative knowledge about the relative quality of a subset of candidates.
    \item \textbf{Crowdsourcing and Platforms:} The framework applies to human-in-the-loop systems where contributors provide relative judgments over tasks or model outputs. It is also relevant for platform ranking problems, such as recommendation systems or content moderation, where creators have partial knowledge about the relative quality of their own items.
\end{itemize}

\subsection{Limitations and Future Directions}
\label{sec:limitations}

\paragraph{Informed Case Optimization.}
Our experiments reveal that while the informed mechanism is generally effective, its performance can be context-dependent. Its effectiveness may decrease in complex scenarios, such as the ICLR network simulation with a productivity bonus when selecting ten best papers. A key future direction is to move beyond simple sequential selection rules and develop more advanced computational techniques. This could involve exploring decision-theoretic models or sophisticated algorithms designed to better handle the rich, asymmetric information provided by author rankings, thereby improving the mechanism's robustness and accuracy across a wider range of conditions.

\paragraph{Assumption Examination and Relaxation.}
Although our paper significantly relaxes the convexity assumption required by prior work \citep{suTruthfulOwnerAssistedScoring2022a,wuIsotonicMechanismOverlapping2023,yanIsotonicMechanismExponential2023a}, it still relies on several core assumptions, such as the additivity of the utility function and the exchangeability of review noise. 
A vital area for future theoretical work is to probe the boundaries of these assumptions. For instance, could the mechanism's truthfulness guarantees be preserved under different, non-additive utility models? Relaxing these foundational assumptions could significantly broaden the mechanism's applicability and resilience.

\paragraph{Real-world Validation and Quota Design.}
While our simulations, including those on a real-world co-authorship network, are promising, the ultimate test is a live deployment. A crucial next step is to implement and test the Isotonic Mechanism to recommend best papers in an actual conference setting. Such a real-world trial would provide invaluable feedback on the practical dynamics of author participation and allow for the calibration of our models against live data. Furthermore, although this paper primarily focuses on a uniform \emph{quota}, an open question remains as to how allowing different \emph{quota} values for individual authors (e.g. based on productivity) might affect fairness, truthfulness, diversity, and overall performance in the \textbf{Blind} and \textbf{Informed} settings.
\\ \vskip -0.1in
Addressing review quality degradation at major ML/AI conferences remains a critical community priority. Real-world deployment ultimately requires community endorsement and field approval---an approval achievable only through publication and adoption. We believe these advances hold promising implications for peer-review quality and the broader scientific community.
 
\section*{Acknowledgments}
{We would like to thank Jibang Wu and Xiang Li for their helpful discussions and comments. This
work was supported in part by NSF DMS-2310679, a Meta Faculty Research Award, and Wharton
AI for Business.}

\newpage
\appendix

\setcounter{theorem}{0}
\setcounter{assumption}{0}
 
\renewcommand{\thetheorem}{A.\arabic{theorem}}
\renewcommand{\theassumption}{A.\arabic{assumption}}

\section{Details about the Data Preparation of ICLR 2019-2023}
Due to the similarity in data analysis related to NeurIPS 2021-2023 with the process  described here, we shall omit its details for brevity.
\label{app:expe}
\subsection{Data Processing}
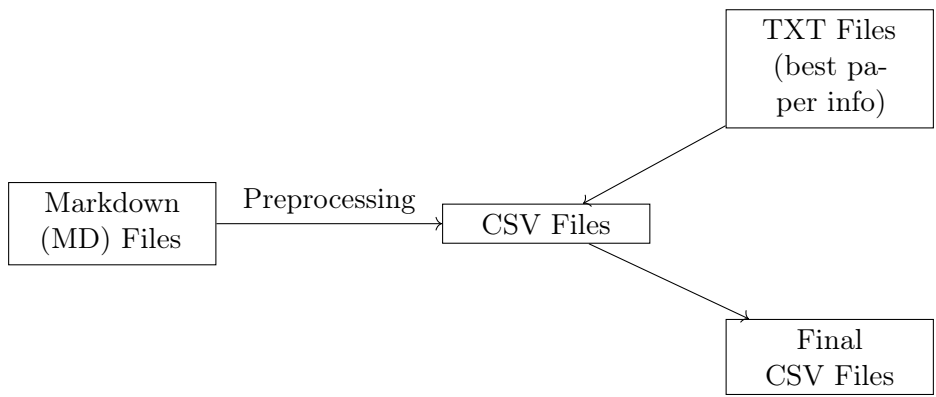
\begin{figure}[H]
\centering
\begin{tikzpicture}[
    node distance=2cm and 3cm,
    mynode/.style={rectangle, draw, align=center, text width=2.5cm}
]

\node[mynode] (md) {Markdown (MD) Files};
\node[mynode, right=of md] (csv) {CSV Files};
\node[mynode, above right=1cm and 1cm of csv] (txt) {TXT Files (best paper info)};
\node[mynode, below right=1cm and 1cm of csv] (csv_final) {Final CSV Files};

\draw[->] (md) -- node[midway, above, sloped] {Preprocessing} (csv);  
\draw[->] (csv) -- node[midway, above, sloped] {} (csv_final);
\draw[->] (txt) -- (csv);

\end{tikzpicture}
\caption{Process of data transformation from Markdown files to the final CSV files, incorporating the best paper information.}
\end{figure}

In this project, we have ten files for data analysis of ICLR 2019-2023: five CSV files and five corresponding TXT files \citep{ICLR2019a, syncedICLR2020Virtual2020a, iclrAnnouncingICLR2021a, brockmeyerAnnouncingICLR20222022a, wangAnnouncingICLR20232023}. The CSV files contain peer review data for the past five years, including information about ratings and decisions. The TXT files store the titles of the ``best paper'' articles for each year. The CSV files were obtained by preprocessing Markdown (MD) files \citep{sunCrawlVisualizeICLR2023c, sunCrawlVisualizeICLR2023d,zhouCrawlVisualizeICLR2023b, bertoCrawlVisualizeICLR2023b}, filtering and extracting the necessary information. This data organization enables easy access and analysis of data for each year, as well as a comparison between different years to identify trends or patterns. The data processing is totally the same for data analysis of NeurIPS 2021-2022, so we will omit its details here.

The processing of all the aforementioned data was performed using the file \texttt{dataProcess.ipynb.}

\footnote{\label{code:github}\href{https://github.com/pkumath/PeerReviewStat}{See https://github.com/pkumath/PeerReviewStat}}
\begin{itemize}
	\item We removed all papers from the original Markdown files that did not have scores assigned.
	\item During our data analysis, if a paper has a score but no review result, we consider it as being rejected.
\end{itemize}

\subsection{Data Analysis with Logistic Regression}
\subsubsection{Methodology}
We used logistic regression to predict the probability of desired outcomes (such as paper acceptance/rejection and best paper/not best paper) based on the averaged paper scores. We assumed that the review process for each paper is independent, which is a basic assumption of logistic regression for our analysis.

The main function used in our analysis is the \texttt{LogisticRegression} function from the \texttt{sklearn} package. We optimize the cross-entropy loss with an ${l_2}$ penalty for regularization. For more specific details, please refer to the code file \texttt{RegressionForAccept.ipynb}.

\section{Details on Experimental Results}\label{app:experiment}

\subsection{Experimental Setup}\label{expsetup}

In our experiment, we generated many samples of synthetic submission data, each consisting of an authorship network, ground truth quality scores for each paper, and reviews for each paper. Then, we measured the average true quality of the paper(s) selected by different best paper mechanisms. Below, we detail the different parameters in our setup. 

\paragraph{Authorship Network:}
We provide two models for the authorship network. The first, which we refer to as a uniform authorship model, has 2500 authors and 5000 papers. Each paper has a number of authors uniformly selected between 1 and 10, and these authors are selected uniformly at random from the author pool. The second, which we refer to as the ICLR model, has 2997 authors and 8956 papers. It is exactly the authorship network of ICLR 2021. The uniform model provides a simple and dense structure to highlight the benefits of the mechanism, while the ICLR model provides a specific real-world authorship network. 
\paragraph{Paper Quality:} 
In the standard model, each author is given a quality score $$q_{a} \sim \mathcal{N}(\mu = 5,\sigma^2 = 2)$$ Then, for a paper $p$ with a set of authors $A_{p}$, the quality of $p$ is 
$$q_{p} \sim \max_{a \in A_{p}}(q_{a}) + \mathcal{N}(\mu = 0,\sigma^2 = 1)$$

The standard model reflects the fact that the quality of a paper is greatly impacted by the quality of its authors. Note that we consider the contribution of author quality to paper quality to be the max of all authors, rather than the min, the mean, the sum, or the average. We think the highest-quality author on a paper is likely to be the most representative of the paper quality; respected authors do not want to put their names on low-quality work, while beginning researchers with (understandably) low research quality can provide assistance on these papers without having much impact on overall quality.

In the productivity-weighted model, an author $a$ with a set of papers $P_{a}$ is given a quality score 

$$q_{a} \sim \mathcal{N}(\mu = 5,\sigma^2 = 2) + \mathcal{N}(\mu = \frac{5}{16},\sigma^2 = \frac{2}{16^2})$$ 

The quality of each paper given the quality of the authors is equivalent to the standard model.

\paragraph{Review Error:}
Given a paper with true quality $q_{p}$, its review score is distributed according to 

$$r(q_p) \sim q_p + \mathcal{N}(\mu = 0, \sigma^2 = \epsilon)$$

Where the additional noise is i.i.d. and $\epsilon$ is the same amongst all papers. This was tested with various values for the variance of the noise, which we think of as review error.
\\

\paragraph{Quota  ($k$):} When using the Isotonic Mechanism, we consider only the top $k$-ranked papers from each author. We explore two settings: where $k=1$ and $k=5$. We find that in our experiments, increasing $k$ slightly decreases the quality of selected papers.

\paragraph{Number of Best Papers Winners ($F$):} 
Often, conferences select multiple best papers. Thus, we explore both the setting where the selection mechanisms select a single paper ($F=1$) and the top $10$ papers ($F=10$).
\\

\paragraph{Best Paper Selection Methods:}
In addition to the three best paper selection methods discussed in Section~\ref{synthetic}, we also tested a secondary method using information available in the \textbf{Informed Case}. This method prioritizes papers ranked early by at least one author, rather than prioritizing papers ranked early by all their authors. We restate the three previous selection methods and include a description of the additional selection method.

\begin{itemize}
\item \textbf{Benchmark}: Select the $F$ papers with the highest review scores (i.e. the noisy scores, ignoring any information from the Isotonic Mechanism).
\item \textbf{Blind}: Greedily partition the authorship network and run the Isotonic Mechanism to get adjusted scores. Then, consider only the set of papers $p$ for which at least one author ranked that paper in their top $k$. Select the top $F$ papers in $p$ with the highest adjusted scores. Note that this method is possible to operate in the \textbf{Blind Case}.
\item \textbf{Informed, Max}: Greedily partition the authorship network and run the Isotonic Mechanism to get adjusted scores. Then, consider only the set of papers $p$ for which at least one author ranked that paper in their top $k$. Sort $p$ in increasing order of their highest (i.e. worst) ranking among all authors. Tiebreak by sorting in decreasing order of adjusted score. Select the first $F$ papers. Note that this method is only possible to operate in the \textbf{Informed Case}.
\item \textbf{Informed, Min}: Greedily partition the authorship network and run the Isotonic Mechanism to get adjusted scores. Then, consider only the set of papers $p$ for which at least one author ranked that paper in their top $k$. Sort $p$ in increasing order of their lowest (i.e. best) ranking among all authors. Tiebreak by sorting in decreasing order of adjusted score. Select the first $F$ papers. Note that this method is only possible to operate in the \textbf{Informed Case}.
\end{itemize}

We find that the Informed mechanism using the Min ranking is never substantially different from the Blind mechanism. 

\subsection{Inconsistency of Informed Case with Max Ranking}\label{note}
In our data, we find that, while the max-ranking informed mechanism outperforms all other mechanisms in most models, in the $F=10$ ICLR network with productivity weighting in particular, it has the worst performance by far. This may be because this informed mechanism prioritizes selecting papers that are ranked first by every one of their authors. Thus, as long as there are at least $F$ such papers, it will never select two best papers with a shared author (because the author cannot have ranked them both first). Thus, in a network where a single author writes multiple of the true top $F$ papers, the informed mechanism will never be able to select more than one of them. This scenario is more likely in the productivity-weighted model than the standard model, because good papers will be more clustered amongst a smaller number of authors. Furthermore, this phenomenon is exacerbated in the ICLR network, because there is more variance in how many papers each author has (compared to the uniform network).

\subsection{Full Experimental Results}
\label{app:fullexp}
\begin{figure}[h]
  \centering
  \begin{minipage}{.45\textwidth}
    \centering
    \includegraphics[width=1.2\linewidth]{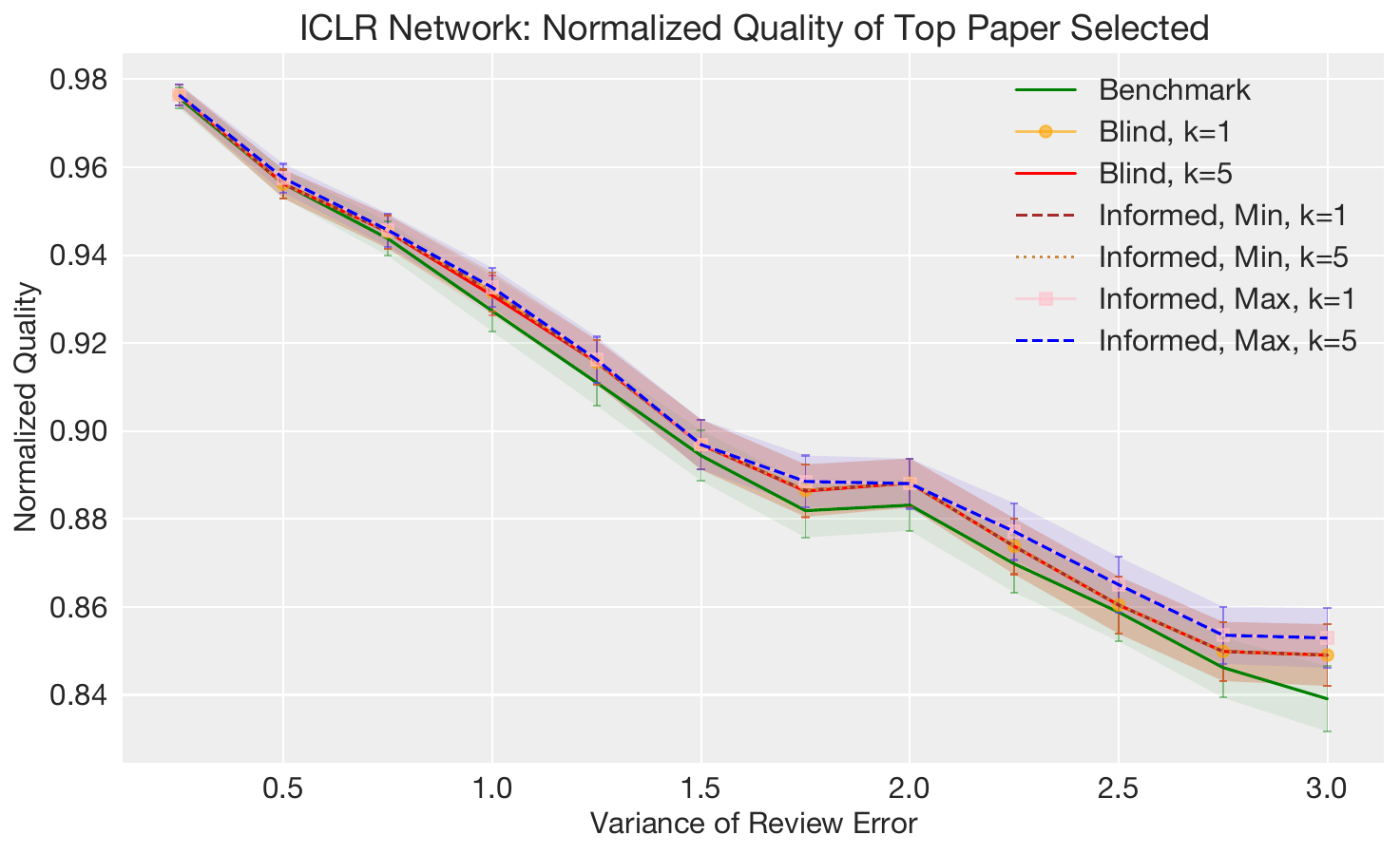}
    \caption{Comparison of Different Methods to select top paper, ICLR Network}
    \label{fig:bestlog21}
  \end{minipage}
  \hfill 
  \begin{minipage}{.45\textwidth}
    \centering
    \includegraphics[width=1.2\linewidth]{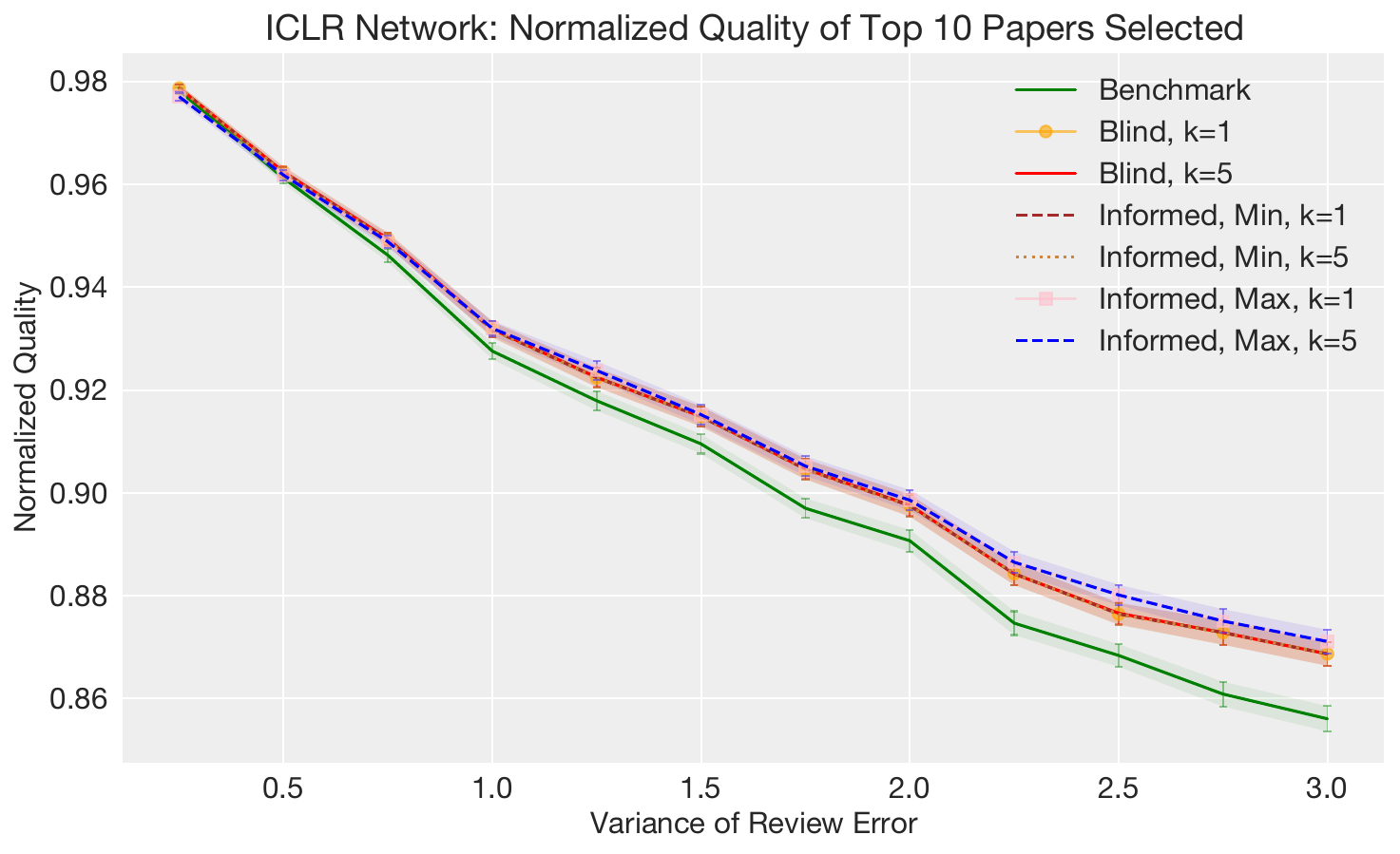}
    \caption{Comparison of Different Methods to select top 10 papers, ICLR Network}
    \label{fig:second12}
  \end{minipage}
\end{figure}

\begin{figure}[h]
  \centering
  \begin{minipage}{.45\textwidth} 
    \centering
    \includegraphics[width=1.2\linewidth]{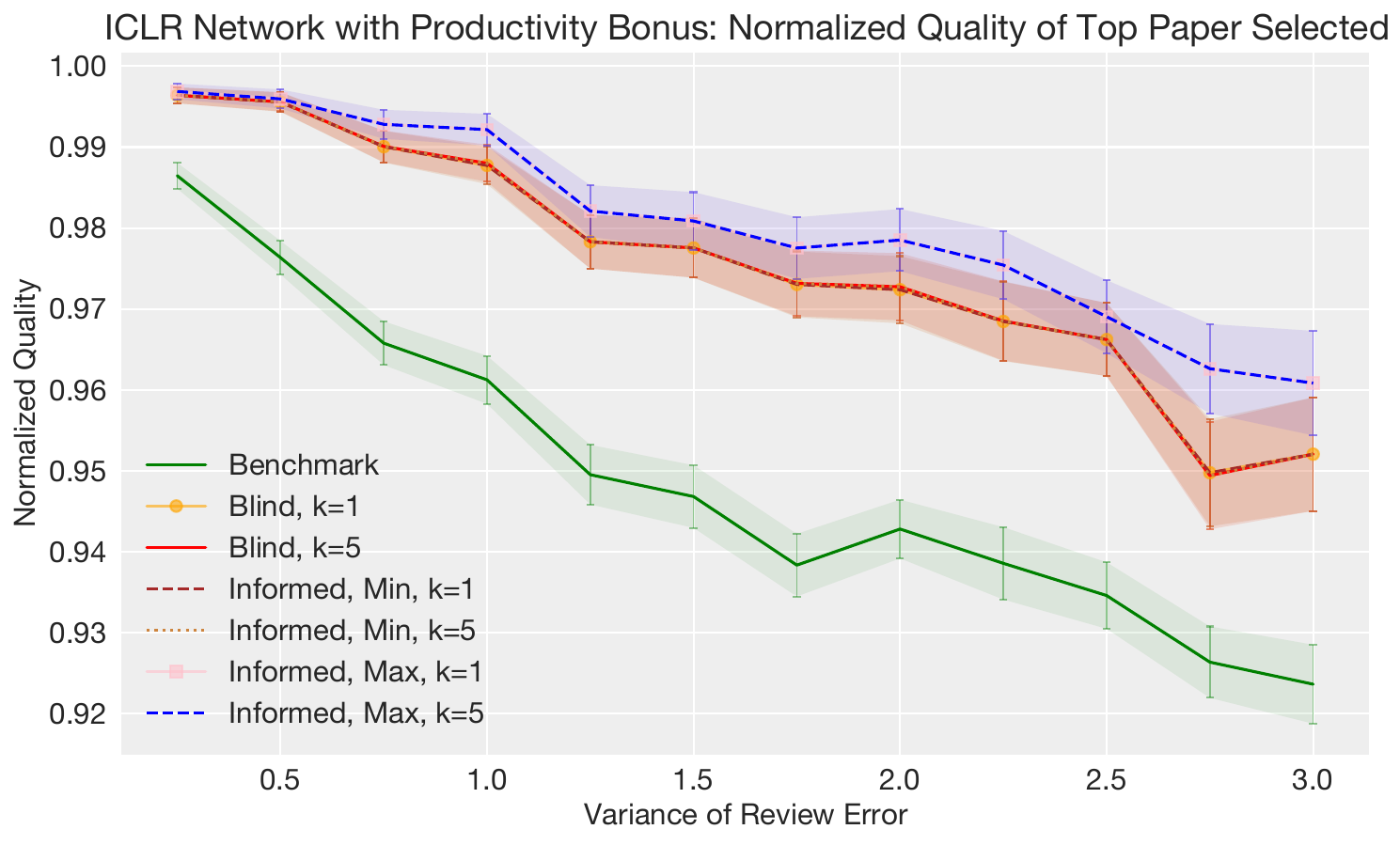}
    \caption{Comparison of Different Methods to select top paper, ICLR Network with Productivity Bonus}
    \label{fig:bestlog2}
  \end{minipage}
  \hfill 
  \begin{minipage}{.45\textwidth}
    \centering
    \includegraphics[width=1.2\linewidth]{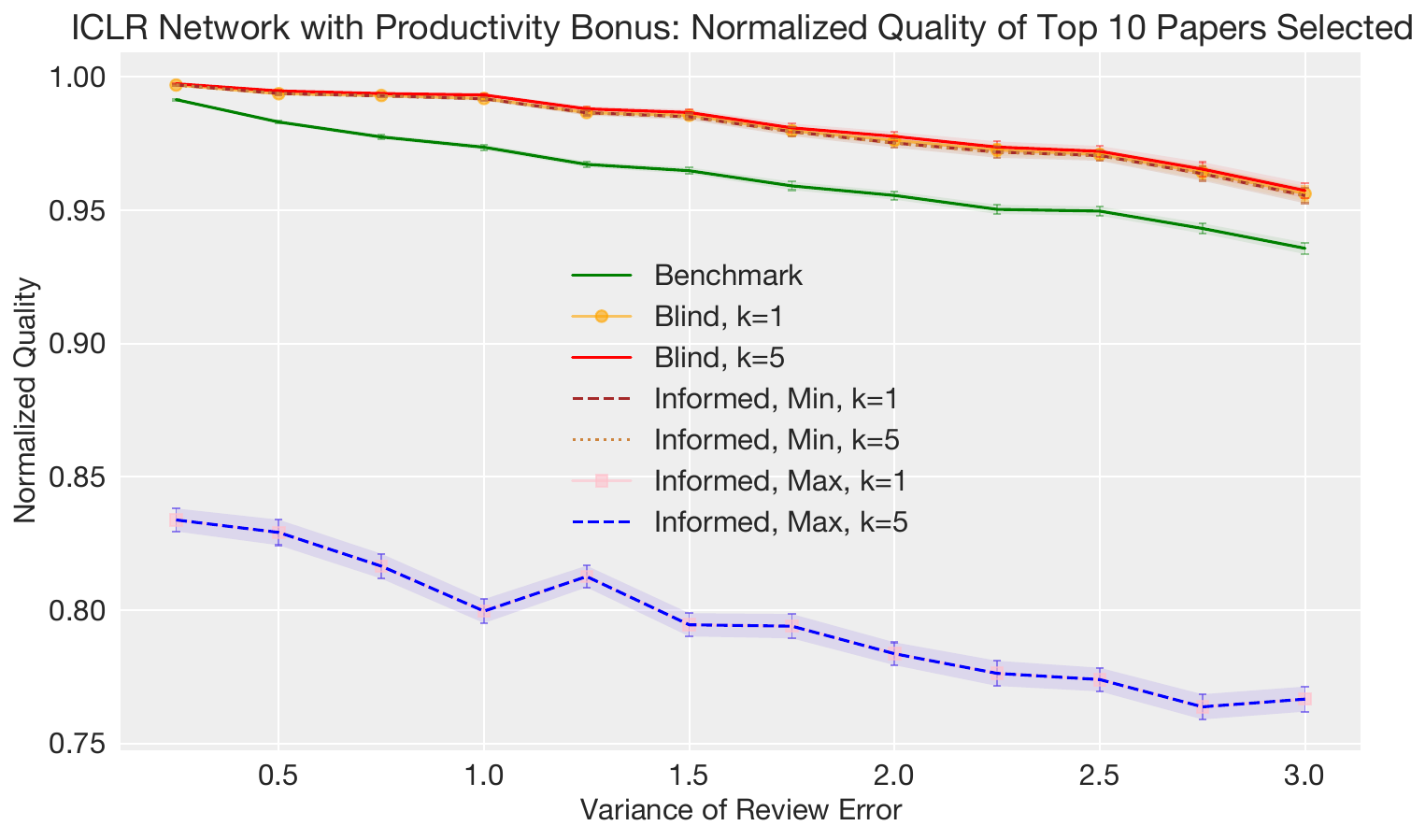}
    \caption{Comparison of Different Methods to select top 10 papers, ICLR Network with Productivity Bonus (see~\ref{note})}
    \label{fig:second12}
  \end{minipage}
\end{figure}

\begin{figure}[h]
 \centering
 \begin{minipage}{.45\textwidth}
   \centering
   \includegraphics[width=1.2\linewidth]{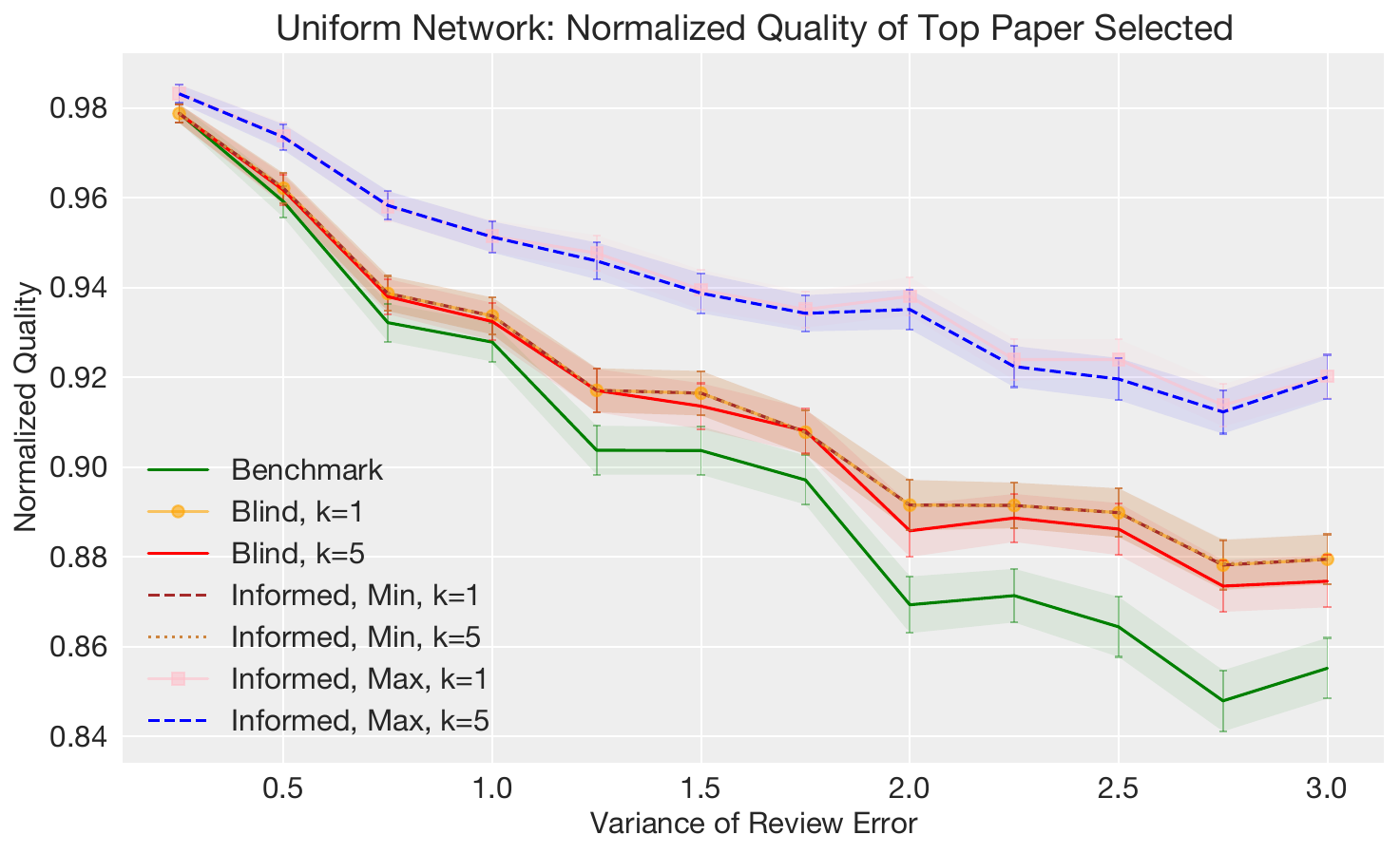}
   \caption{Comparison of Different Methods to select top paper, Uniform Network}
   \label{fig:bestlog2}
 \end{minipage}
 \hfill 
 \begin{minipage}{.45\textwidth}
   \centering
   \includegraphics[width=1.2\linewidth]{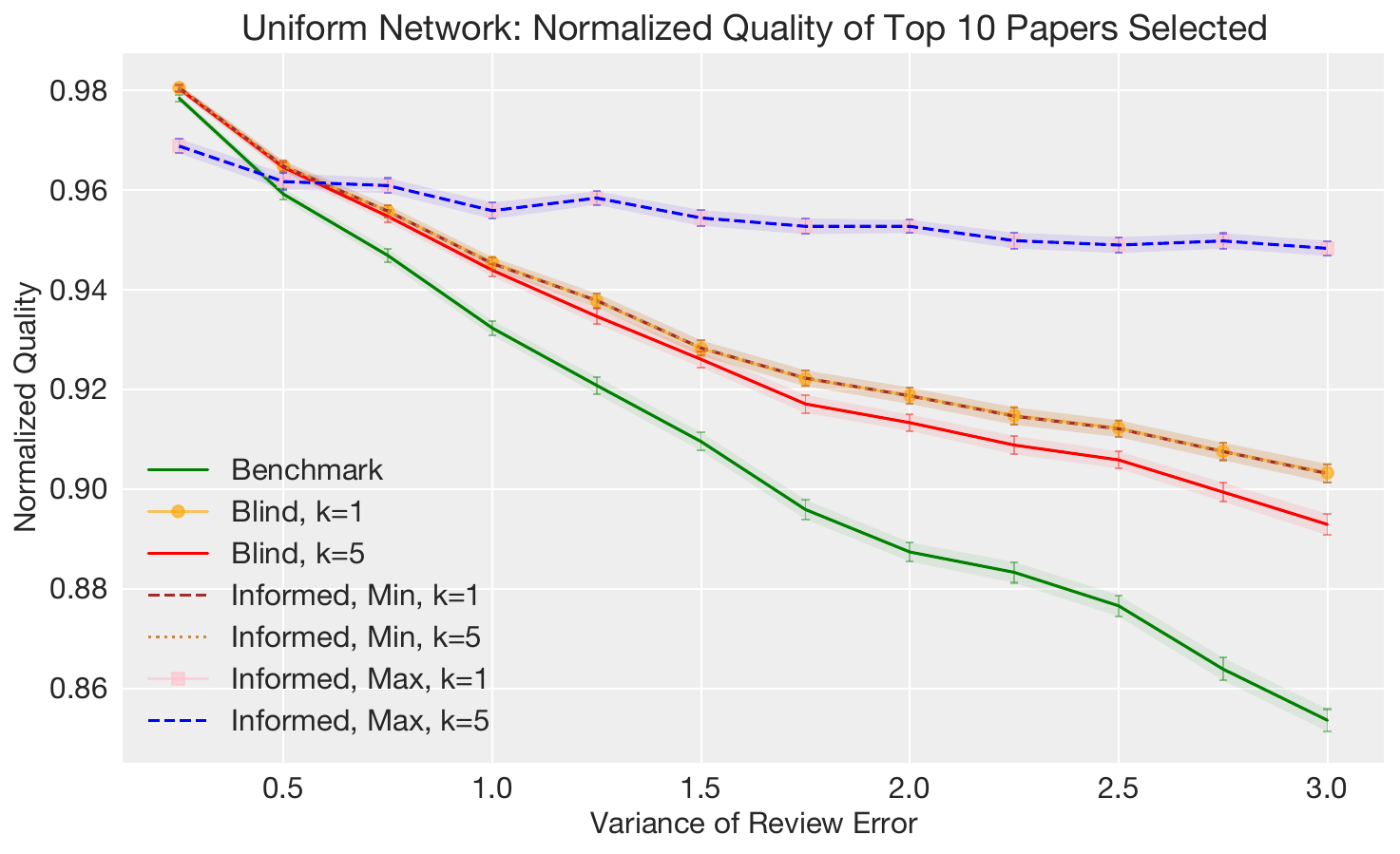}
      \caption{Comparison of Different Methods to select top 10 papers, Uniform Network}
   \label{fig:second12}
 \end{minipage}
\end{figure}

\begin{figure}[h] 
  \centering
  \begin{minipage}{.45\textwidth}
    \centering
    \includegraphics[width=1.2\linewidth]{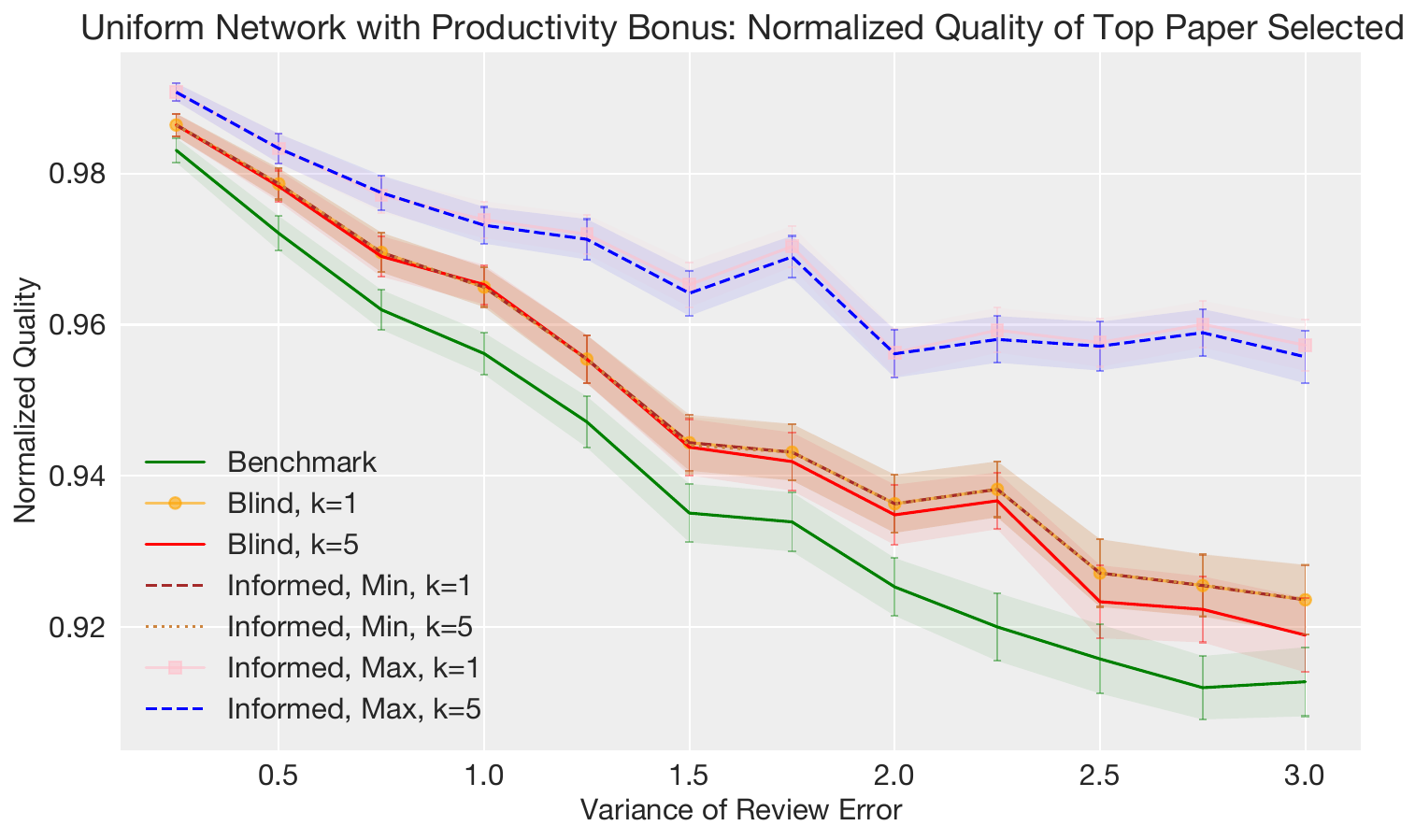}
       \caption{Comparison of Different Methods to select top paper, Uniform Network with Productivity Bonus}
    \label{fig:bestlog2}
  \end{minipage}
  \hfill 
  \begin{minipage}{.45\textwidth}
    \centering
    \includegraphics[width=1.2\linewidth]{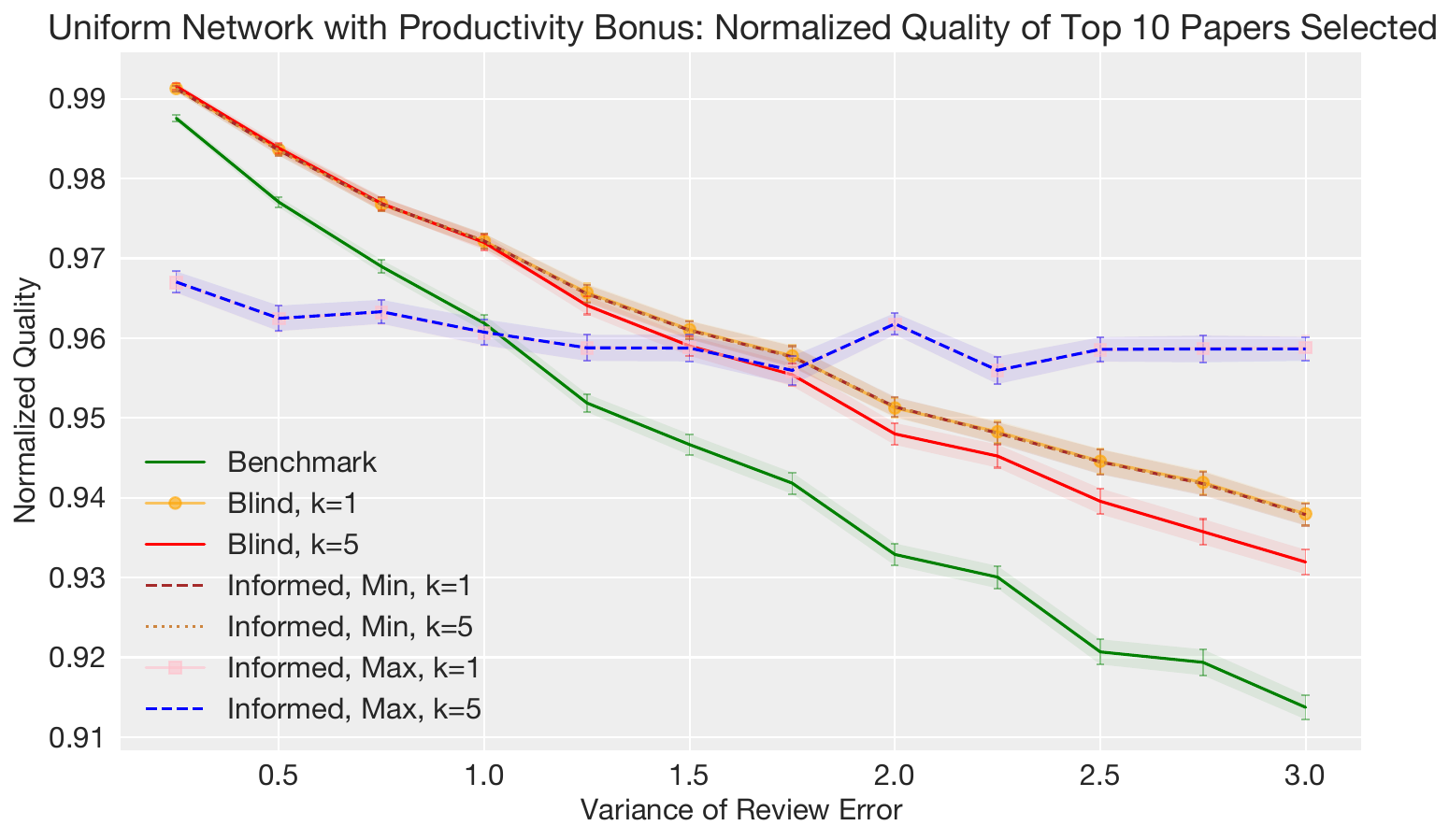}
    \caption{Comparison of Different Methods to select top 10 papers, Uniform Network with Productivity Bonus}
    \label{fig:second12}
  \end{minipage}
\end{figure}
\clearpage
 
\subsection{Example for the Edge Cases}\label{app:edge}
Assume we have three authors: A, B, and C. There are a total of 12 papers, numbered 1 through 12. Each paper is assigned a review score in a sequential order from 1 to 12, and additionally, there is a latent score for each paper in descending order starting from 12 down to 1.

The authorship of the papers is as follows:
\begin{itemize}
\item Author A: Papers \{1, 2, 3, 4, 5, 6\}
\item Author B: Papers \{5, 6, 7, 8, 9, 10\}
\item Author C: Papers \{9, 10, 11, 12\}
\end{itemize}

If the authors are aware of the latent scores, then Author A would rank paper 1 the highest and paper 6 the lowest. The same ranking principle applies to Authors B and C based on their respective papers.

Using a 1-partition greedy algorithm, we get the following partitioning of papers:
\begin{itemize}
\item Author A: Papers \{1, 2, 3, 4, 5, 6\}
\item Author B: Papers \{7, 8, 9, 10\}
\item Author C: Papers \{11, 12\}
\end{itemize}

After running the Isotonic Mechanism, we obtain the scores as follows:
\begin{itemize}
\item Author A: Scores \{3.5, 3.5, 3.5, 3.5, 3.5, 3.5\}
\item Author B: Scores \{8.5, 8.5, 8.5, 8.5\}
\item Author C: Scores \{11.5, 11.5\}
\end{itemize}

In the $k=2$ blind review case, the papers recommended by the authors along with their scores are:
\begin{itemize}
\item Author A recommends Papers with scores: 3.5, 3.5
\item Author B recommends Papers with scores: 3.5, 3.5
\item Author C recommends Papers with scores: 8.5, 8.5
\end{itemize}

Then the paper you finally choose is not the paper with the highest adjusted score.

\bibliographystyle{abbrv}
\bibliography{ref}

\end{document}